\newtheorem{assumption}{Assumption}
\numberwithin{equation}{section}
\numberwithin{theorem}{section}
\numberwithin{assumption}{section}
\newcommand{\BigO}{\mathcal{O}}
\newcommand{\E}{\mathbb{E}}
\newcommand{\R}{\mathbb{R}}
\newcommand{\cG}{\mathcal{G}}
\newcommand{\cV}{\mathcal{V}}
\newcommand{\cE}{\mathcal{E}}
\newcommand{\cF}{\mathcal{F}}
\newcommand{\half}{\tfrac12}
\DeclareMathOperator{\prox}{prox}
\DeclareMathOperator{\dist}{dist}
\DeclareMathOperator*{\argmin}{argmin}
\DeclareMathOperator{\Unif}{Unif}
\newcommand{\tsum}{\mathop{\textstyle\sum}\nolimits}
\crefname{subsection}{Section}{sections}
\Crefname{subsection}{Section}{Sections}
\crefname{algocf}{Algorithm}{Algorithms}
\Crefname{algocf}{Algorithm}{Algorithms}
\crefname{lemma}{Lemma}{Lemmas}
\Crefname{lemma}{Lemma}{Lemmas}
\crefname{theorem}{Theorem}{Theorems}
\Crefname{theorem}{Theorem}{Theorems}
\crefname{proposition}{Proposition}{Propositions}
\Crefname{proposition}{Proposition}{Propositions}
\crefname{corollary}{Corollary}{Corollaries}
\Crefname{corollary}{Corollary}{Corollaries}
\crefname{definition}{Definition}{Definitions}
\Crefname{definition}{Definition}{Definitions}
\crefname{remark}{Remark}{Remarks}
\Crefname{remark}{Remark}{Remarks}
\Crefname{assumption}{Assumption}{Assumptions}
\crefname{assumption}{Assumption}{Assumptions}
\crefname{table}{Table}{Tables}
\Crefname{table}{Table}{Tables}
\crefname{figure}{Figure}{Figures}
\Crefname{figure}{Figure}{Figures}
\newcommand{\abs}[1]{\lvert #1 \rvert}
\newcommand{\norm}[1]{\lVert #1 \rVert}
\newcommand{\Norm}[1]{\norm{#1}}
\newcommand{\BlockProx}{\textup{\textsc{BlockProx}}}
\newcommand{\RandomEdge}{\textup{\textsc{RandomEdge}}}
  \long\def\@makefntext#1{\@setpar{\@@par\@tempdima \hsize
               \advance\@tempdima-15pt\parshape \@ne 15pt \@tempdima}\par
               \parindent 2em\noindent \hbox to \z@{\hss$^{\@thefnmark}$ \hfil}#1}
\begin{document}

\title{Decentralized Optimization with\\ Topology-Independent Communication\thanks{Date: 16 September 2025}}

\author{\name Ying Lin\thanks{Equal contribution.} \email ylin95@hku.hk \\
       \addr Department of Data and Systems Engineering\\
       The University of Hong Kong\\
       Hong Kong
       \AND
       \name Yao Kuang\footnotemark[2] \email yaokuang@cs.ubc.ca \\
       \addr Department of Computer Science\\
       The University of British Columbia\\
       Vancouver, BC, Canada
       \AND
       \name Ahmet Alacaoglu \email alacaoglu@math.ubc.ca \\
       \addr Department of Mathematics\\
       The University of British Columbia\\
       Vancouver, BC, Canada
       \AND
       \name Michael P. Friedlander \email michael.friedlander@ubc.ca \\
       \addr Departments of Computer Science and Mathematics\\
       The University of British Columbia\\
       Vancouver, BC, Canada}

\editor{TBD}
\maketitle

\begin{abstract}
Distributed optimization requires nodes to coordinate, yet full synchronization scales poorly. When $n$ nodes collaborate through $m$ pairwise regularizers, standard methods demand $\BigO(m)$ communications per iteration.
This paper proposes randomized local coordination: each node independently samples one regularizer uniformly and coordinates only with nodes sharing that term. This exploits partial separability, where each regularizer $G_j$ depends on a subset $S_j \subseteq \{1,\ldots,n\}$ of nodes. For graph-guided regularizers where $|S_j|=2$, expected communication drops to exactly 2 messages per iteration.
This method achieves $\tilde{\BigO}(\varepsilon^{-2})$ iterations for convex objectives and under strong convexity, $\BigO(\varepsilon^{-1})$ to an $\varepsilon$-solution and $\BigO(\log(1/\varepsilon))$ to a neighborhood. Replacing the proximal map of the sum $\sum_j G_j$ with the proximal map of a single randomly selected regularizer $G_j$ preserves convergence while eliminating global coordination. Experiments validate both convergence rates and communication efficiency across synthetic and real-world datasets.
\end{abstract}

\begin{keywords}
  decentralized optimization, communication efficiency, partial separability,
  proximal methods, multi-task learning
\end{keywords}

\section{Introduction}

We study distributed optimization with partially separable regularizers, where each term depends on overlapping subsets of variables. This structure appears, for example, when hospitals collaborate on diagnostic models without centralizing patient data~\citep{CSPEC2021}, or when sensor networks learn environmental patterns through local interactions~\citep{PKP2006,KMR2012,LZT2015}. In these cases, the communication costs of coordinating between nodes dominates local computation. 

Standard proximal methods require all nodes to synchronize at each iteration, which incurs a communication cost proportional to the number of pairwise interactions. In large-scale distributed systems, such global coordination becomes prohibitive. We show that randomized local coordination achieves the convergence rates of centralized proximal methods.

Our approach exploits the structure of partially separable regularizers with overlapping variable dependencies. Each node independently samples one of $m$ regularizers uniformly at random and coordinates only with nodes sharing that regularizer. This reduces expected communication from $\BigO(m)$ to $\sum_{j=1}^m (a_j^2 - a_j)/m$ messages per iteration, where $a_j$ counts the nodes invoved in component $j$. Communication thus depends only on local collaboration sizes, not global network topology, while preserving optimal convergence rates.

We formalize our analysis through a structured optimization framework
\begin{equation}
\label{eq:general_problem}
\min_{x\in\mathbb{R}^{nd}}\ \left\{ H(x)\coloneqq F(x) + G(x)\right\},
\end{equation}
where the objective exhibits two distinct forms of separability:
\begin{equation}
    \label{eq:decomposed_problem}
    \textstyle
    F(x) \coloneqq \sum_{i=1}^n f_i(x_i)
    \quad\text{and}\quad
    G(x) \coloneqq \sum_{j=1}^m G_j(x).
\end{equation}
The variable $x=(x_1, \ldots, x_n)\in\R^{nd}$ partitions into blocks $x_i\in\R^d$, where each $f_i: \R^d \to \R$ and $G_j: \R^{nd} \to \R$ is proper, closed, and convex. Each block $x_i$ corresponds to node~$i$ in the network, where $f_i$ represents the local data and objective for node $i$, and $G_j$ represents the $j$th coordination constraint.

To see why this structure matters, again consider the hospital collaboration scenario. Each hospital $i$ maintains its own diagnostic model parameters $x_i$, learning from local patient data through its individual loss function $f_i(x_i)$. The separable structure of $F$ in~\eqref{eq:decomposed_problem} allows each hospital to update its model independently, preserving privacy.

Hospitals, however, in similar regions or similar patient populations should develop similar diagnostic criteria. We might capture these relationships through a \emph{graph-guided regularizer} that connects hospitals based on geographical proximity, shared protocols, or demographic similarity. If hospitals $i$ and $j$ should coordinate, we include a regularization term $g_{ij}(x_i, x_j)$ that encourages their models to remain similar.

\Cref{fig:partial_separability_hypergraph} illustrates this structure for a network of thirteen hospitals with various collaboration relationships. Orange edges show pairwise collaborations; shaded regions indicate larger consortia. This model allows for multiple levels of collaboration. Each regularizer component $G_j$ couples only the hospitals participating in that specific relationship. Hospital networks exhibit \emph{partial separability}.

\begin{figure}[t]
    \centering
    \usetikzlibrary{shapes,positioning,calc,backgrounds,fit,decorations.pathmorphing}

\begin{tikzpicture}[scale=1.3]
    \tikzset{hyperedge thickness/.style={line width=2.0pt}}
    
    \tikzset{hyperedge border/.style={rounded corners=8pt,fill=none,hyperedge thickness}}
    
    \tikzstyle{node}=[circle,draw,thick,minimum size=0.6cm,inner sep=2pt,fill=white]
    \tikzstyle{hyperedge}=[rounded corners=8pt,opacity=0.3]
    \tikzstyle{comm}=[thick,->,>=stealth,dashed,decorate,decoration={amplitude=1.5pt,segment length=8pt}]
    \tikzstyle{sample}=[thick,->,>=stealth,ultra thick,decorate,decoration={amplitude=1.5pt,segment length=8pt}]
    
    \node[node] (n1) at (0.8,0.8) {1};
    \node[node] (n2) at (3,0.3) {2};  %
    \node[node] (n3) at (3.5,0.8) {3};
    \node[node] (n4) at (5.2,2.1) {4};  %
    \node[node] (n5) at (2.5,4.5) {5};  %
    \node[node] (n6) at (0.5,4) {6};    %
    \node[node] (n7) at (-0.5,2.0) {7}; %
    \node[node] (n8) at (0.3,1.6) {8};  %
    \node[node] (n9) at (1.5,2) {9};    %
    \node[node] (n10) at (3,2.6) {10};  %
    \node[node] (n11) at (4.2,2.3) {11};  %
    \node[node] (n12) at (2,3.7) {12};  %
    \node[node] (n13) at (4.2,0.3) {13}; %

    \draw[thick,orange!80,line width=2.5pt] (n1) -- (n2) node[pos=.6,below,font=\small] {$G_1$};
    \draw[thick,orange!80,line width=2.5pt] (n7) -- (n8) node[pos=1.5,left=7pt,font=\small] {$G_2$};
    \draw[thick,orange!80,line width=2.5pt] (n11) -- (n13) node[pos=0.67,right,font=\small] {$G_3$};
    
    \begin{scope}[on background layer]
        \node[fit=(n2)(n3)(n13),hyperedge,fill=cyan!40,inner sep=10pt] (G4region) {};
        \node[fit=(n2)(n3)(n13),hyperedge border,draw=cyan!70,inner sep=10pt] (G4border) {};
        
        \node[fit=(n5)(n6)(n12),hyperedge,fill=blue!40,inner sep=8pt] (G5region) {};
        \node[fit=(n5)(n6)(n12),hyperedge border,draw=blue!70,inner sep=8pt] {};
        
        \node[fit=(n9)(n10)(n12),hyperedge,fill=brown!40,inner sep=8pt] (G6region) {};
        \node[fit=(n9)(n10)(n12),hyperedge border,draw=brown!70,inner sep=8pt] (G6border) {};
    \end{scope}
    
    \begin{scope}[on background layer]
        \node[fit=(n1)(n7)(n8)(n9),hyperedge,fill=violet!30,inner sep=10pt] (G7region) {};
        \node[fit=(n1)(n7)(n8)(n9),hyperedge border,draw=violet!70,dashed,dash pattern=on 8pt off 4pt,inner sep=10pt] {};
        
        \node[fit=(n4)(n10)(n11),hyperedge,fill=teal!30,inner sep=10pt] (G8region) {};
        \node[fit=(n4)(n10)(n11),hyperedge border,draw=teal!70,dashed,dash pattern=on 8pt off 4pt,inner sep=10pt] {};
    \end{scope}
    
    \node at (G4region.north west) [anchor=north west, font=\small, color=cyan!70, inner sep=4pt] {$G_4$};
    
    \node at (G5region.north west) [anchor=north west, font=\small, color=blue!70, inner sep=4pt] {$G_5$};
    
    \node at (G6region.north east) [anchor=north east, font=\small, color=brown!70, inner sep=4pt] {$G_6$};
    
    \node at (G7region.south west) [anchor=south west, font=\small, color=violet!70, inner sep=4pt] {$G_7$};
    
    \node at (G8region.north east) [anchor=north east, font=\small, color=teal!70, inner sep=4pt] {$G_8$};

    \coordinate (G6anchor) at (G6border.west);
    
    \coordinate (G4anchor) at (G4border.east);

    \draw[comm, ultra thick, red!70] (n9) to[out=20,in=160] (n10);
    \draw[comm, ultra thick, red!70] (n9) to[out=90,in=200] (n12);

\end{tikzpicture}
    \caption{Hypergraph representation of partial separability with $n=13$ nodes and $m=8$ regularizer components. Nodes represent coordinate blocks $x_i$. Generalized edges show regularizer dependencies (orange for pairwise, shaded regions for 3-way and 4-way) and their regularization functions, e.g., $G_6(x)=G_6(x_9,x_{10},x_{12})$. In our randomized local coordination protocol, each node independently samples one regularizer uniformly from $\{1,\ldots,m\}$. For example, node 9 participates in two regularizers: with probability 1/8 it coordinates with nodes $\{10,12\}$, and with probability 1/8 it coordinates with nodes $\{1,7,8\}$. This protocol reduces expected communication to $\sum_{j=1}^m (a_j^2 - a_j)/m$ messages per iteration, where $a_j = |S_j|$ is the size of component $j$.}%
    \label{fig:partial_separability_hypergraph}
\end{figure}
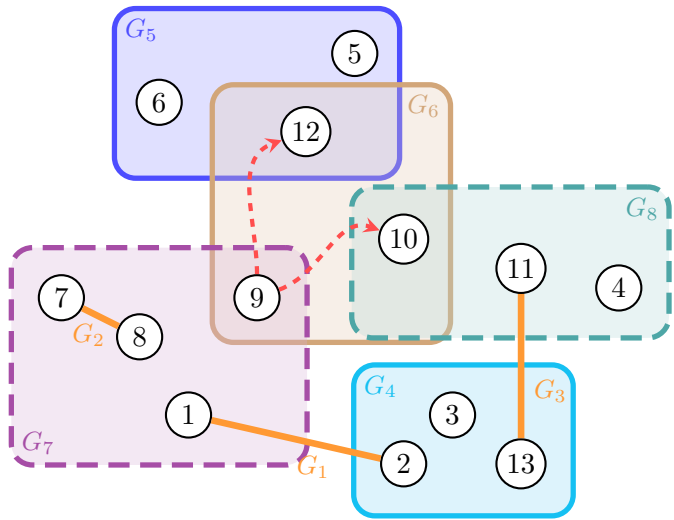

\begin{definition}[Partial Separability]\label{def:partial_separability}
A function $G: \R^{nd} \to \R$ admitting the decomposition $G(x) = \sum_{j=1}^m G_j(x)$ is \emph{partially separable} when each summand $G_j$ exhibits dependence on only a subset of the coordinate blocks of $x$. More precisely, there exists a collection of support sets $\{S_j \subseteq \{1, \ldots, n\} \mid j = 1, \ldots, m\}$ such that $G_j(x) = G_j(x_{S_j})$ for all $j$, where $x_{S_j} = (x_i \mid i \in S_j)$ denotes the restriction of $x$ to the blocks indexed by $S_j$.
\end{definition}

\subsection{Decentralized Multi-Task Learning}\label{subsec: graph_guided}

Our framework applies directly to \emph{decentralized multi-task learning}, where different variable blocks in $x = (x_1, \ldots, x_n)$ each correspond to a distinct learning task. These tasks are distributed across nodes in a network. The objective is to collaboratively learn the solution of~\cref{eq:general_problem} without centralizing data, and thus preserve privacy and computational efficiency~\citep{LZZHZL2017, VWKKVR2020}.

The regularizer $G$ encodes task relationships that improve generalization performance when individual tasks have limited data.
The most prevalent formulations are \emph{graph-guided regularizers} that encode inter-task relationships through network topology~\citep{KX2009}. Given a communication graph $\mathcal{G} = (\mathcal{V}, \mathcal{E})$ with vertices $\mathcal{V} \coloneqq \{1, \ldots, n\}$ representing nodes and edges $\mathcal{E}$ capturing dependencies, the regularizer takes the form
\begin{equation}
\label{eq:graph_guided_regularizer}
G(x) = \textstyle \sum_{(i,j) \in \mathcal{E}} g_{ij}(x_i, x_j).
\end{equation}
The structured optimization problem~\cref{eq:general_problem} with such regularizers reduces to
\begin{equation} 
\label{eq:multi_task_with_graph}
\min_{x_1, \ldots, x_n \in \mathbb{R}^d} \textstyle \ \sum_{i \in \mathcal{V}} f_i(x_i) + \sum_{(i,j) \in \mathcal{E}} g_{ij}(x_i, x_j),
\end{equation}
which makes explicit how the graph structure $\mathcal{G}$ determines both communication topology and regularization dependencies.

For graph-guided regularizers such as the hospital network, each edge $e = (i,j) \in \mathcal{E}$ defines a regularizer $G_e(x) = g_{ij}(x_i, x_j)$ with support set $S_e = \{i, j\}$. Each regularizer couples precisely two hospitals, regardless of the total network size $n$.

\subsection{Coordination Bottleneck}

Consider the application of standard proximal subgradient methods to solve \cref{eq:general_problem} \citep{rockafellar_MonotoneOperatorsProximal_1976a, PB2014, combettes2005signal}. At each iteration $t$, these methods update $x^{(t+1)}$ by alternating between a gradient step with a positive step size $\alpha^{(t)}$,
\begin{subequations}
\begin{equation}
\label{eq:PGD_subgradient_step}
z_i^{(t)} = x_i^{(t)} - \alpha^{(t)} g_i^{(t)}, \quad g_i^{(t)} \in \partial f_i(x_i^{(t)}),
\end{equation}
and a proximal step with positive step sizes $\beta^{(t)}$:
\begin{equation}
\label{eq:PGD_proximal_step}
x^{(t+1)} = \prox_{\beta^{(t)} G}(z^{(t)}) \coloneqq \argmin_{x} \Set{ {\textstyle\sum_{j=1}^m G_j(x)} + \frac1{2\beta^{(t)}} \norm{z^{(t)} - x}^2}.
\end{equation}
\end{subequations}
Block separability of $F$ implies that each node may operate independently and in parallel on local data.
 However, the subsequent proximal step requires global coordination.

Here lies the fundamental obstacle: \emph{the proximal operator of a sum is not the sum of proximal operators}. While each individual collaboration relationship modeled by $G_j$ may admit a simple proximal operator, evaluating the proximal map for the sum $G=\sum_{j=1}^m G_j$ triggers all nodes in $\bigcup_{j=1}^m S_j$ to coordinate simultaneously. When data transmission costs outweigh local computation, such coordination becomes prohibitively expensive~\citep{CBDELPZ2023}.

\subsection{Randomized Local Coordination}

To break the global coupling imposed by the sum structure, we introduce a simple randomized protocol. At each iteration, every node $i$ independently samples a regularizer $G_j$ uniformly at random from $\{1, \ldots, m\}$, that is, $j \sim \text{Unif}(\{1, \ldots, m\})$. Node $i$ communicates with the other nodes in $S_j$ and participates in evaluating the proximal map of $G_j$ only if it particpates in that collaboration, that is, if $i \in S_j$. From another prespective, node $i$ communicates with probability $d_i/m$, where $d_i$ counts components involving that node. Otherwise, node $i$ proceeds without any communication.

Consider the hospital network in \cref{fig:partial_separability_hypergraph}. If Hospital 9 samples $j_9 = 6$, it evaluates only $G_6$ and coordinates exclusively with Hospitals 10 and 12. If Hospital 4 samples a collaboration that does not involve it ($4 \notin S_{j_4}$), it proceeds without any communication. This randomized approach shifts the communication pattern from global synchronization to localized, selective interactions.

Each node coordinates with $\bar{a}-1$ partners when its sampled regularizer is active, where $\bar{a} = \mathbb{E}|S_j|$ denotes the average collaboration size. For graph-guided regularizers (cf.\@ \Cref{subsec: graph_guided}), this yields precisely~2 expected messages per iteration. \Cref{tab:communication_comparison} quantifies the improvement: our method requires $O(1)$ communications while maintaining centralized convergence rates, whereas existing methods either scale with network parameters or sacrifice convergence guarantees.

\subsection{Contributions and Roadmap}

This randomized local coordination strategy leads us to develop \BlockProx, a decentralized optimization method that exploits partial separability to achieve communication efficiency for problem~\cref{eq:general_problem}. Our contributions:

\begin{description}[leftmargin=0cm, style=unboxed, labelindent=0em, align=left, font=\normalfont\itshape]

\item[Decentralized updates] (\Cref{sec:blockprox_method}). The \BlockProx\ algorithm operates without global synchronization, permitting fully decentralized optimization.

\item[Graph-Guided specialization] (\Cref{sec:blockprox_specialization}). \RandomEdge\ specializes \BlockProx\ to graph-guided regularizers defined in \cref{eq:graph_guided_regularizer}, achieving 2 expected messages per iteration.

\item[Communication efficiency] (\Cref{sec:communication_analysis}). Expected total communication equals $\sum_{j=1}^m (a_j^2 - a_j)/m$ messages per iteration, depending only on local coupling sizes $a_j = |S_j|$. Each node $i$ communicates with probability $d_i/m$, where $d_i$ counts components involving that node (\cref{thm:communication}).

\item[Iteration complexity] (\Cref{sec:convergence_theory}). Under bounded subgradients: $\tilde{\BigO}(\varepsilon^{-2})$ iterations are needed to obtain an \( \epsilon \)-solution (\cref{thm:main_theorem_1}); the same iteration complexity also holds for smooth convex \( F \) (\cref{thm:L_smooth_convergence}). For smooth, strongly convex $F$: $\BigO(\varepsilon^{-1})$ to an $\varepsilon$-solution (\cref{thm:sublinear-rate-convergence-strong-convexity}), or $\BigO(\log(1/\varepsilon))$ to a neighborhood (\cref{thm:linear-rate-neighborhood}).

\item[Reproducible empirical validation] (\Cref{sec:numerical_experiments}). Experiments on synthetic and real-world datasets confirm theoretical predictions. The data files and scripts used to generate the numerical results can be obtained at the
URL
\begin{center}
  \url{https://github.com/MPF-Optimization-Laboratory/BlockProx}.
\end{center}
\end{description}

\begin{table}[t]
\centering
\begin{threeparttable}
\caption{Per-iteration communication cost and convergence guarantees for decentralized algorithms.}
\begin{tabular}{@{}lccc@{}}
  \toprule
    {Algorithm\tnote{(a)}} & \makecell{Communication \\ per iteration\tnote{(b)}} & \makecell{Topology \\ dependence} & \makecell{Convergence \\ guarantee\tnote{(c)}} \\
  \midrule
  \rowcolor{gray!20} \RandomEdge\ (ours) & 2 (expected) & No & C, SC, CR \\
  ADMM~\citep{HLB2015} & $4m$ & Yes & C, SC, CR  \\
  DGD~\citep{YLY2016} & $2m$ & Yes & C, SC, TD, CR \\
  Scaffnew~\citep{MMSR2022}\tnote{(e)} & $2pm$ (expected) & Yes & C, SC, TD, CR \\
  Walkman~\citep{MYHGSY2020} & 1\tnote{(f)} & No & SCLS, TD, CR \\
  ESDACD~\citep{HBM2019} & 2 & No & SC, TD \\
  \bottomrule
\end{tabular}

\begin{tablenotes}
  \footnotesize
  \item[(a)] We exclude several centralized and federated methods, including incremental proximal methods~\citep{B2011}, proximal average (ProxAvg)~\citep{Y2013}, RandProx~\citep{CR2023}, and SMPM~\citep{CGR2025}, even though they can solve~\cref{eq:general_problem}.
  Incremental proximal methods maintain a global index of the sampled regularizer and update all involved blocks simultaneously, so they are not decentralized. ProxAvg can be adapted to decentralized settings, but no decentralized analysis is available. RandProx and SMPM are federated algorithms for consensus optimization.

  \item[(b)] The communication cost counts the number of \( \R^d \) messages; \( m \) denotes the number of edges in the network.
  ADMM transfers $(x_i, x_j)$ and auxiliary variables for each edge $(i,j)$, yielding $4m$ messages per iteration.
  DGD uses $2m$ messages as each node exchanges its local variable with all neighbors.
  \textsc{Scaffnew} evaluates the proximal operator with probability $p$; when the central server communicates with all nodes, the expected cost is $2pm$.
  Edge-activation methods operate on single edges: Walkman performs a one-way transfer (1 message); ESDACD performs a bidirectional exchange (2 messages).

  \item[(c)] We use the abbreviations C (convex), SC (strongly convex), SCLS (strongly convex least squares), TD (topology-dependent rate), and CR (centralized rate).
  TD indicates that the convergence rate depends on the network topology rather than solely on \( n \) or \( m \); eliminating dependence on \( n \) and \( m \) is generally infeasible.
  For instance, star and path topologies have the same number of edges but yield different rates for DGD, \textsc{Scaffnew}, Walkman, and ESDACD. Our rate depends only on \( m \), so it is identical for these two topologies.

  \item[(d)] DGD, \textsc{Scaffnew}, Walkman, and ESDACD are designed for consensus optimization, which enforces a common solution across nodes. This contrasts with multitask learning, where nodes may have different variables.
  While consensus methods can be applied to multitask problems, this reduction has drawbacks; see~\cref{subsec:related-work} and~\Cref{sec:numerical_experiments}.

  \item[(e)] Convergence guarantees for \textsc{Scaffnew} in convex settings appear in~\citet{GAYCC2023}.

  \item[(f)] When Walkman is applied to multitask learning by reformulating the problem as consensus with a regularizer \( G \), each iteration must transfer a vector of length \( nd \) containing copies of all variables; the per-iteration communication therefore becomes \( n \), not \( 1 \).
\end{tablenotes}
\label{tab:communication_comparison}
\end{threeparttable}
\end{table}

\subsection{Related Work}
\label{subsec:related-work}

Decentralized optimization divides into consensus optimization that seek a common solution across nodes, and multi-task learning for specific regularizing structures. Neither handles general partially separable problems.

The multi-task formulation~\cref{eq:multi_task_with_graph} encompasses \emph{consensus optimization} as a special case. When the objective is to find a common solution $x_0$ across all nodes, the problem reduces to
\begin{equation}
\label{eq:consensus_problem}
\min_{x_0 \in \mathbb{R}^d} \textstyle F(x) \coloneqq \tfrac{1}{n} \sum_{i=1}^n f_i(x_0).
\end{equation}
In principle, one would convert a consensus problem to a regularized problem via exact penalty functions \citep{han_ExactPenaltyFunctions_1979}, but this requires appropriate regularity conditions and a penalty parameter that is difficult to estimate.

The typical and most commonly employed method for solving consensus optimization problem~\eqref{eq:consensus_problem} is \emph{Decentralized gradient descent (DGD)}~\citep{YLY2016}.
Its convergence for nonconvex problems relies on \emph{uniformly bounded data heterogeneity}, requiring $\sum_{i=1}^n \|\nabla f_i(x) - \nabla F(x)\|^2$ to be  uniformly bounded over $x$~\citep{KLBJS2020, ZLLSCC2018}.
Example~3 in \citet{CHWZL2020} shows DGD diverges for any constant step-size when heterogeneity grows unbounded with $\|x\|$.
Real-world applications often violate uniform boundedness, particularly with multi-source data~\citep{LZT2015, TLISBEMLDC2022}.

Several methods relax the bounded heterogeneity assumption: EXTRA~\citep{SLWY2015} modifies the DGD update scheme, while gradient-tracking techniques~\citep{LS2016, AY2024, SSPY2023} maintain local gradient estimates.  Alternatively, primal-dual methods converge without bounded heterogeneity~\citep{SFMTJJ2018, LLZ2020,CL2018,EPBBA2020, EPBIBA2021}. Among these, \citet{SFMTJJ2018} and \citet{LLZ2020} achieve communication efficiency.

Communication efficient strategies include Newton methods with dynamic average consensus~\citep{LZSL2023,LCCC2020}, event-triggered protocols~\citep{CB2021}, gradient compression~\citep{OSDD2021,EPBIBA2021,LLHP2022}, selective communication skipping~\citep{MMSR2022,CMR2025}, probabilistic device selection~\citep{CSPEC2021,GCYR2022}, and gossip-type methods~\citep{BGP2006,HBM2019,MYHGSY2020}.
The gossip-type methods ESDACD~\citep{HBM2019} and Walkman~\citep{MYHGSY2020} achieves topology-independent constant per-iteration communication cost by randomly selecting an edge in each iteration to communication, which is similar to our methods.
However, ESDACD requires the local loss function to be \( L \)-smooth and strongly convex, making it restrictive; in each iteration of Walkman, only one node is updated using a computationally expensive proximal operator, leading to slow update and high computation cost.

Consensus solutions perform poorly on individual nodes~\citep{ZLLSCC2018}. This limitation and demand for \textit{personalized learning}~\citep{TYCY2023} drive decentralized multi-task approaches.
Multi-task algorithms target specific instances of problem~\cref{eq:decomposed_problem}. Network Lasso~\citep{HLB2015} solves graph-guided regularizers $g_{ij}(x_i, x_j) = w_{ij} \|x_i - x_j\|_2$ via ADMM with closed-form updates. \citet{ZLZ2022} generalize this to arbitrary penalties $P(x_i - x_j)$, but their spanning-tree restriction changes the objective.
Other approaches require smoothness of $F$ or $G$~\citep{KSR2017,CB2020,YC2024}, twice differentiability of both~\citep{CL2020}, or stochastic smooth objectives~\citep{SCDB2024}. These methods do not handle arbitrary partially separable regularizers or adapt communication to sparsity patterns.

Consensus optimization embeds within our framework through penalty regularizers, but the reverse is not efficient~\citep{JMS2018}: forcing multi-task problems into consensus form requires each node to maintain copies of all variables, thereby undermining the benefits of partial separability and significantly increasing memory complexity.
Some consensus methods incorporate graph-guided regularizers~\citep{GSYC2024,ZGSC2025}, but these regularizers embed within local loss functions rather than being shared across nodes.

Recently, motivated by ProxSkip~\citep{MMSR2022} and incremental proximal methods~\citep{B2011}, several communication-efficient methods are proposed for consensus optimization problems with a regularizer, such as RandProx~\citep{CR2023}, LoCoDL~\citep{CMR2025}, SMPM~\citep{CGR2025}.
However, these algorithms are all restrictive: they can only apply to federated settings; LoCoDL further requires both the local loss function and the regularizer to be \( L \)-smooth and strongly convex.
These algorithms are based on an idea similar to our randomized local coordination but actually different: they address the computational difficulties and high communication costs encountered in the computation of the proximal operator of the regularizer by using a random approximation, and further improve communication efficiency by selectively skipping proximal steps.
The key difference is that in these methods, all nodes share a common indicator determining if skipping the proximal steps, while in our method, each node has its own random variable to determine if the local randomized proximal step can be skipped.
Moreover, in our method, each node randomly selects a regularizer component and compute its proximal operator, but only update its local part using the proximal operator.

Our approach exploits partial separability directly: nodes coordinate only when regularizer structure demands it. We achieve communication efficiency for arbitrary partially separable problems without smoothness assumptions or topology restrictions, though we do analyze the convergence rate with and without smoothness.

\section{The BlockProx Method}
\label{sec:blockprox_method}

We present the \BlockProx\ method for any partially separable regularizer (\Cref{sec:blockprox_algorithm,sec:communication_mechanism}), then specialize it to graph-guided regularizers where it achieves constant communication cost (\Cref{sec:blockprox_specialization,sec:communication_analysis}). Throughout this paper, we consider undirected edges $(i,j) \in \cE$ without self-loops.

\subsection{Algorithm Design and Independent Sampling}
\label{sec:blockprox_algorithm}

The algorithm exploits partial separability: each node $i$ samples index $j_i^{(t)} \in \{1,\ldots,m\}$ uniformly at random at each iteration $t$ and coordinates only with nodes in support set $S_{j_i^{(t)}}$. For graph-guided regularizers, this reduces to sampling a single edge, though nodes may receive additional communication requests from neighbors.

\BlockProx\ alternates between the gradient step from~\cref{eq:PGD_subgradient_step} and a randomized proximal step that approximates~\cref{eq:PGD_proximal_step}. Each node $i$ independently samples $j_i^{(t)} \in \{1, \ldots, m\}$ uniformly and updates
\begin{equation}
    \label{eq:proximal_step}
    x_i^{(t+1)} = \begin{cases}
        \bigl[\prox_{\beta^{(t)} G_i^{(t)}}(z^{(t)})\bigr]_i & \text{if } i \in S_{j_i^{(t)}}, \\
        z_i^{(t)} & \text{otherwise,}
    \end{cases}
\end{equation}
where $S_j$ denotes the support set of regularizer $G_j$. For notational convenience, we write $G_i^{(t)} \coloneqq G_{j_i^{(t)}}$ for the regularizer sampled by node $i$ at iteration $t$. The support set $S_{j_i^{(t)}}$ determines node $i$'s communication partners: node $i$ participates in proximal computation only when $i \in S_{j_i^{(t)}}$, which requires communication exclusively within that support set. \Cref{alg:BlockProx} details the complete implementation.

\begin{algorithm2e}[t]
\DontPrintSemicolon
\caption{\BlockProx: decentralized proximal gradient with randomized coordination}
\label{alg:BlockProx}
\KwData{stepsizes $\{\alpha^{(t)}, \beta^{(t)}\}_{t\geq 0}$ such that $\beta^{(t)} = m \alpha^{(t)}$,  initial point $\{x_i^{(0)}\}_{i=1, \ldots, n}$, total iterations $T$}
\For{$t=0, 1, \ldots, T-1$}{
    \ForPar{$i=1, 2, \ldots, n$}{
        \tcp{Gradient Step from~\cref{eq:PGD_subgradient_step}}
        compute $g_i^{(t)} \in \partial f_i(x_i^{(t)})$ and set $z_i^{(t)} = x_i^{(t)} - \alpha^{(t)} g_i^{(t)}$
    }
    $z^{(t)} = (z_1^{(t)}, \ldots, z_n^{(t)})$ \tcp*{Each node $i$ stores its own $z_i^{(t)}$ only}
    \ForPar{$i=1, 2, \ldots, n$}{
        \tcp{Proximal Step}
        sample $j_i^{(t)} \sim \Unif(\{1, \ldots, m \})$ \tcp*{Randomized local coordination}
        \eIf{$i \in S_{j_i^{(t)}}$}{
            $x_i^{(t+1)} \coloneqq \bigl[\prox_{\beta^{(t)} G_i^{(t)}}(z^{(t)})\bigr]_i$\tcp*{Node $i$ retrieves $z_k^{(t)}\ \forall k \in S_{j_i^{(t)}}$}}
            {
            $x_i^{(t+1)} = z_i^{(t)}$\tcp*{No communication required}
            }
    }
}
\end{algorithm2e}
    
\subsection{Communication Protocol}
\label{sec:communication_mechanism}

The proximal operator $\prox_{\beta^{(t)} G_i^{(t)}}$ requires only variables $\{z_k^{(t)}\}_{k \in S_{j_i^{(t)}}}$ to compute node $i$'s update. When node $i$ samples regularizer $j$ with $i \in S_j$, it retrieves $z$ values only with nodes in $S_j$. Particularly, node $i$ retrieves $z_k^{(t)}$ for all $k\in S_j$. If $i \notin S_j$, then $x_i^{(t+1)} = z_i^{(t)}$ without communication. Each node stores its local variables $(x_i, z_i)$ and has access to all regularizers $\{G_j \mid i \in S_j\}$.

\section{Specialization: The RandomEdge Algorithm}
\label{sec:blockprox_specialization}

We specialize \BlockProx\ to graph-guided regularizers~\cref{eq:graph_guided_regularizer}, where each edge $(i,j) \in \mathcal{E}$ defines a symmetric regularizer $G_e(x) = g_{ij}(x_i, x_j) = g_{ij}(x_j, x_i)$ with support set $S_e = \{i,j\}$. The resulting method, \RandomEdge, is fully described in \cref{alg:RandomEdge}.

\begin{algorithm2e}[t]
\DontPrintSemicolon
\caption{\RandomEdge: application of \BlockProx\ to graph-guided regularizers}
\label{alg:RandomEdge}
\KwData{stepsizes $\{\alpha^{(t)}, \beta^{(t)}\}_{t \geq 0}$, initial point $\{x_i^{(0)}\}_{i=1}^n$, graph $\cG = (\cV, \cE)$, total iterations $T$}
\For{$t=0, 1, \ldots, T-1$}{
    \ForPar{$i=1, 2, \ldots, n$}{
        \tcp{Subgradient Step}
        compute $g_i^{(t)} \in \partial f_i(x_i^{(t)})$ and set $z_i^{(t)} = x_i^{(t)} - \alpha^{(t)} g_i^{(t)}$\;
    }
    $z^{(t)} = (z_1^{(t)}, \ldots, z_n^{(t)})$\;
    \ForPar{$i=1, 2, \ldots, n$}{
        \tcp{Proximal Step with Randomized Local Coordination}
        $p_i = \deg(i) / |\cE|$\tcp*{Node degree determines coordination probability}
        sample $\theta_i^{(t)} \in \{0, 1\}$ with $\Pr(\theta_i^{(t)} = 1) = p_i$\tcp*{Randomly coordination}
        \eIf{$\theta_i^{(t)} = 1$}{
            \tcp{Coordinate with nodes in selected support set}
            sample $j_i^{(t)} \sim \Unif(\{j \mid i \in S_j\})$\tcp*{Sample edge incident to node $i$}
            retrieve $z_k^{(t)}$ from the neighbor $k \in S_{j_i^{(t)}} \setminus \{i\}$\;
            $w_{S_{j_i^{(t)}}} = \prox_{\beta^{(t)} G_{j_i^{(t)}}}(z_{S_{j_i^{(t)}}}^{(t)})$\tcp*{Evaluate edge regularizer jointly}
            $x_i^{(t+1)} = w_i$\;}
            {$x_i^{(t+1)} = z_i^{(t)}$ \tcp*{Independent update when coordination not needed}}
    }
}
\end{algorithm2e}

The subgradient step matches \BlockProx. In the proximal step, node $i$ coordinates with probability $p_i = \deg(i)/|\mathcal{E}|$. When coordinating, it samples a neighbor uniformly and jointly evaluates the corresponding edge regularizer. 
Note that in the graph setting, the set $\set{j | i \in S_j}$ consists precisely of edges incident to node $i$, making the support set notation a natural generalization of graph neighborhoods (see \cref{fig:node2_coordination}).
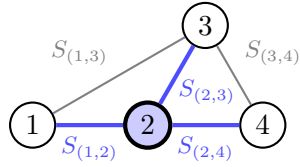
\begin{figure}[t]
\centering
\begin{tikzpicture}[scale=0.85]
    \tikzstyle{vertex}=[circle,draw,thick,minimum size=0.6cm,inner sep=1pt,fill=white]
    \tikzstyle{highlighted}=[circle,draw,ultra thick,minimum size=0.6cm,inner sep=1pt,fill=blue!20]
    \tikzstyle{edge}=[thick,gray]
    \tikzstyle{highlight edge}=[ultra thick,blue!70]
    
    \node[highlighted] (n2) at (0,0) {2};
    \node[vertex] (n1) at (-1.8,0) {1};
    \node[vertex] (n3) at (0.9,1.56) {3};
    \node[vertex] (n4) at (1.8,0) {4};
    
    \draw[highlight edge] (n1) -- (n2) node[midway,below] {\footnotesize $S_{(1,2)}$};
    \draw[highlight edge] (n2) -- (n3) node[pos=0.2,right] {\footnotesize $S_{(2,3)}$};
    \draw[highlight edge] (n2) -- (n4) node[midway,below] {\footnotesize $S_{(2,4)}$};
    \draw[edge] (n1) -- (n3) node[midway,above left] {\footnotesize $S_{(1,3)}$};
    \draw[edge] (n3) -- (n4) node[midway,above right] {\footnotesize $S_{(3,4)}$};
\end{tikzpicture}
\vspace{-0.5em}
\caption{Node 2's incident edges in the graph specialization. Blue edges show $e \in \mathcal{E}$ where $2 \in e$, illustrating that node 2 coordinates with probability $p_2 = \deg(2)/|\mathcal{E}| = 3/5$.}
\label{fig:node2_coordination}
\end{figure}
For instance, consider a 4-node graph with edges $\mathcal{E} = \{(1,2), (1,3), (2,3), (2,4), (3,4)\}$ as shown in \cref{fig:node2_coordination}. 
Node 2 participates in edges $(1,2)$, $(2,3)$, and $(2,4)$, 
yielding $\set{e \in \mathcal{E} | 2 \in e} = \{(1,2), (2,3), (2,4)\}$ and $p_2 = \deg(2)/|\mathcal{E}| = 3/5$.

The following proposition establishes that \RandomEdge\ specializes \BlockProx\ for graph-guided regularizers~\cref{eq:graph_guided_regularizer}, and thus inherits its theoretical guarantees.

\begin{proposition}[\RandomEdge\ and \BlockProx\ Equivalence]
\label{prop:equivalence}
\Cref{alg:BlockProx,alg:RandomEdge} are equivalent when the regularizer $G$ is graph-guided as in~\cref{eq:graph_guided_regularizer}; that is, they generate identical sequences of iterates.
\end{proposition}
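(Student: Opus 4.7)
The plan is to show that both algorithms produce identical iterate sequences under a natural coupling of their randomness. Since the subgradient step is literally the same in both algorithms, the entire argument reduces to verifying that the proximal step in \Cref{alg:RandomEdge} is a faithful restatement of the one in \Cref{alg:BlockProx} in the graph-guided case, where $m = |\cE|$ and each support set has the form $S_e = \{i,j\}$ for some edge $e = (i,j) \in \cE$.

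First, I would decompose the uniform sampling of $j_i^{(t)} \sim \Unif(\{1,\dots,m\})$ in \BlockProx\ into the two-stage sampling used by \RandomEdge. For a fixed node $i$, the event $\{i \in S_{j_i^{(t)}}\}$ occurs with probability $d_i/m$, where $d_i = |\{j : i \in S_j\}|$. In the graph-guided setting, $d_i = \deg(i)$ and $m = |\cE|$, so this probability equals $p_i = \deg(i)/|\cE|$. Moreover, the conditional law of $j_i^{(t)}$ given $i \in S_{j_i^{(t)}}$ is uniform on $\{j : i \in S_j\}$, which for graph-guided regularizers is precisely the set of edges incident to $i$. Thus, defining $\theta_i^{(t)} \coloneqq \mathbf{1}[i \in S_{j_i^{(t)}}]$ and resampling $j_i^{(t)}$ conditionally yields the two-stage mechanism of \RandomEdge\ without changing the joint distribution. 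Under this coupling, the sampled index $j_i^{(t)}$ in both algorithms can be taken to coincide on the event $\theta_i^{(t)}=1$.

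Next, I would verify that the updates agree pointwise under this coupling. If $\theta_i^{(t)} = 0$, both algorithms assign $x_i^{(t+1)} = z_i^{(t)}$, so nothing to check. If $\theta_i^{(t)} = 1$, then \BlockProx\ sets $x_i^{(t+1)} = [\prox_{\beta^{(t)} G_{j_i^{(t)}}}(z^{(t)})]_i$, while \RandomEdge\ sets $x_i^{(t+1)} = [\prox_{\beta^{(t)} G_{j_i^{(t)}}}(z^{(t)}_{S_{j_i^{(t)}}})]_i$, applied only to the restricted vector. The key observation is that because $G_j$ depends only on $x_{S_j}$ (\Cref{def:partial_separability}), the proximal subproblem separates across blocks: the minimizer leaves coordinates outside $S_j$ equal to the input, and the coordinates inside $S_j$ are determined by the reduced problem on $\R^{|S_j|d}$ with input $z_{S_j}^{(t)}$. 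I would state this as a short lemma (or inline observation) of the form $[\prox_{\beta G_j}(z)]_{S_j} = \prox_{\beta G_j}(z_{S_j})$ with the understanding that on the left $G_j$ is viewed as a function of the full vector and on the right as a function of the restricted vector. Combined with the fact that $i \in S_{j_i^{(t)}}$ when $\theta_i^{(t)} = 1$, this gives equality of the $i$-th components.

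The only real subtlety—and what I would flag as the main point worth writing out carefully rather than a genuine obstacle—is making the measure-theoretic coupling of the sampling explicit, since the two algorithms use different random variables ($j_i^{(t)}$ versus the pair $(\theta_i^{(t)}, j_i^{(t)})$). Once the coupling is set, the rest reduces to the block-separability property of the prox under partial separability, which follows from the definition of $\prox$ as the argmin of a sum of a separable quadratic and a function that is constant in the coordinates outside $S_j$. Carrying out these steps and then inducting on $t$ delivers the claimed equivalence of iterate sequences.
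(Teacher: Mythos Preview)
Your proposal is correct and rests on the same core computation as the paper: the two-stage sampling in \RandomEdge\ (Bernoulli$(p_i)$ followed by uniform over incident edges) reproduces the uniform-over-$\{1,\dots,m\}$ sampling in \BlockProx\ via $p_i \cdot \tfrac{1}{\deg(i)} = \tfrac{1}{|\cE|}$. The paper's proof stops at this probability calculation, whereas you go further by making the coupling explicit, verifying that the restricted proximal operator $\prox_{\beta G_j}(z_{S_j})$ used in \RandomEdge\ agrees on block $i$ with the full $[\prox_{\beta G_j}(z)]_i$ used in \BlockProx\ (a direct consequence of partial separability), and inducting on $t$; these are not a different route but a more careful execution of the same argument, filling in details the paper leaves implicit.
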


\begin{proof}
In the graph-guided setting, each edge $e = \{i,j\} \in \mathcal{E}$ corresponds to a regularizer $G_e(x) = g_{ij}(x_i, x_j)$ with support $S_e = \{i,j\}$. Because the subgradient steps match, we analyze only the proximal step.

For any node $i$ and neighbor $j$, we compute the probability that $i$ updates using edge $\{i,j\}$. For \BlockProx, node $i$ samples one edge uniformly from all $|\mathcal{E}|$ edges. Thus, the probability of selecting edge $\{i,j\}$ is $1/|\mathcal{E}|$. For \RandomEdge, node $i$ coordinates with probability $p_i = \deg(i)/|\mathcal{E}|$ and chooses neighbor $j$ uniformly from its $\deg(i)$ neighbors. Thus, the probability of selecting edge $\{i,j\}$ is
\[
p_i \cdot \frac{1}{\deg(i)} = \frac{\deg(i)}{|\mathcal{E}|} \cdot \frac{1}{\deg(i)} = \frac{1}{|\mathcal{E}|}.
\]
Both algorithms therefore assign probability $1/|\mathcal{E}|$ to each edge activation.
\end{proof}

\section{Communication Analysis}
\label{sec:communication_analysis}

We establish that communication cost depends solely on the regularizer's structure, not on network topology. The \BlockProx\ framework's communication cost depends on the partial separability structure of regularizer $G$ because each component $G_j$ induces communication among nodes in its support set $S_j$. When node $i$ samples component $j$, it coordinates with $|S_j|-1$ other nodes if $i \in S_j$, and with no nodes otherwise. This randomized local coordination mechanism yields the following characterization.

\begin{theorem}[Expected Communication Cost]
\label{thm:communication}
Let $G$ be partially separable as in \cref{def:partial_separability} with support sets $S_j \subseteq \{1, \ldots, n\}$ for $j = 1, \ldots, m$. If $a_j = |S_j|$ for each~$j$, then the expected number of communications across all nodes in each iteration of \BlockProx\ is $$\textstyle\sum_{j=1}^m (a_j^2 - a_j)/m.$$
Moreover, node $i$ communicates with probability $d_i/m$, where $d_i = \sum_{j=1}^m \mathbf{1}\{i \in S_j\}$ counts the number of components that include node $i$.
\end{theorem}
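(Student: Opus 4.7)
The plan is to handle the two claims in the order stated, exploiting that each node samples its regularizer index \emph{independently} and uniformly. The second claim (per-node communication probability) is immediate and I would do it first, since it both warms up notation and is reused in computing the total. Since node $i$ draws $j_i^{(t)} \sim \Unif(\{1,\dots,m\})$ and, per \cref{alg:BlockProx}, communicates if and only if $i \in S_{j_i^{(t)}}$, the probability of that event is
\[
\Pr(i \in S_{j_i^{(t)}}) \;=\; \frac{|\{j : i \in S_j\}|}{m} \;=\; \frac{d_i}{m},
\]
by the definition $d_i = \sum_{j=1}^m \mathbf{1}\{i \in S_j\}$.

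For the expected total number of communications per iteration, I would compute the expected number of messages \emph{initiated} by each node $i$ and then sum over $i$. Conditioned on node $i$ having sampled a regularizer $j$ with $i \in S_j$, it must retrieve $z_k^{(t)}$ from every $k \in S_j \setminus \{i\}$, which amounts to exactly $a_j - 1$ messages. Since each index $j$ is drawn with probability $1/m$, the expected number of messages initiated by node $i$ is
\[
\E[\text{messages from } i] \;=\; \sum_{j : i \in S_j} \frac{1}{m}(a_j - 1) \;=\; \frac{1}{m}\sum_{j=1}^m \mathbf{1}\{i \in S_j\}\,(a_j - 1).
\]

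Summing over $i = 1,\dots,n$ and swapping the order of summation via the identity $\sum_{i=1}^n \mathbf{1}\{i \in S_j\} = a_j$ yields
\[
\sum_{i=1}^n \E[\text{messages from } i] \;=\; \frac{1}{m}\sum_{j=1}^m (a_j - 1)\sum_{i=1}^n \mathbf{1}\{i \in S_j\} \;=\; \frac{1}{m}\sum_{j=1}^m a_j(a_j-1) \;=\; \sum_{j=1}^m \frac{a_j^2 - a_j}{m},
\]
which is the claimed expression. Independence of the $j_i^{(t)}$ across nodes is not actually needed for the \emph{expectation}, only linearity of expectation; I would note this in a parenthetical remark to preempt the reader wondering whether correlated samples would alter the count.

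The argument is essentially a double-counting identity, so there is no serious obstacle. The only place to be careful is the convention for counting messages: the proof as written counts each retrieval request as one message, consistent with the footnote convention in \cref{tab:communication_comparison}. If one instead counted a full bidirectional exchange (one message each way) the constant would double, so I would state the counting convention explicitly at the start of the proof to make the bookkeeping unambiguous.
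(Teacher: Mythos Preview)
Your proof is correct and matches the paper's approach essentially line for line: both compute the per-node expected message count as $\frac{1}{m}\sum_{j}\mathbf{1}\{i\in S_j\}(a_j-1)$, then swap the order of summation using $\sum_i \mathbf{1}\{i\in S_j\}=a_j$ to obtain $\frac{1}{m}\sum_j a_j(a_j-1)$, and derive the per-node probability $d_i/m$ directly from uniform sampling. The only differences are cosmetic---you treat the per-node probability first and add remarks on independence and the message-counting convention---none of which affect the argument.
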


\begin{proof}
Let $\delta_{ij} = \mathbf{1}\{i \in S_j\}$ indicate whether node $i$ participates in component $j$. Then $a_j = |S_j| = \sum_{i=1}^n \delta_{ij}$.

When node $i$ samples component $j$, it communicates with exactly $\delta_{ij}(a_j-1)$ other nodes: if $i \in S_j$, it coordinates with the $a_j-1$ other nodes in $S_j$; otherwise, it performs no communication. Because each component is sampled with probability $1/m$, node $i$'s expected communication count equals
$$
\textstyle(1/m)\sum_{j=1}^m  \delta_{ij} (a_j-1).
$$
Summing over all nodes yields the total expected communications:
\begin{align*}
\sum_{i=1}^n \sum_{j=1}^m \frac{1}{m} \delta_{ij} (a_j-1) 
&= \frac{1}{m} \sum_{j=1}^m (a_j-1) \sum_{i=1}^n \delta_{ij}
= \frac{1}{m} \sum_{j=1}^m (a_j^2 - a_j),
\end{align*}
where the last equality follows from the definition of $a_j$.

To compute the probability that node $i$ communicates in a given iteration, note that node $i$ communicates in a given iteration if and only if it samples a component $j$ such that $i \in S_j$. This occurs when the sampled index $j_i^{(t)}$ satisfies $\delta_{ij_i^{(t)}} = 1$. Since each component is sampled uniformly at random, the probability that node $i$ communicates equals
$$
\Pr[\text{node } i \text{ communicates}] = \sum_{j=1}^m \Pr[j_i^{(t)} = j] \cdot \delta_{ij} = \frac1m\sum_{j=1}^m  \delta_{ij} = \frac{d_i}{m},
$$
where $d_i = \sum_{j=1}^m \delta_{ij}$ counts the number of components that include node $i$.
\end{proof}

The formula $a_j^2 - a_j$ counts the pairwise communications required when $a_j$ nodes coordinate. When component $G_j$ is activated (sampled by at least one node), all $a_j$ nodes in support set $S_j$ must coordinate to compute the proximal map of $G_j$. This coordination forms a complete communication graph among these nodes, requiring every node to communicate with every other node in $S_j$.

\subsection{Graph-Guided Regularizers}
\label{sec:graph-guided-communication}

For graph-guided regularizers, each edge $e = \{i,j\} \in \mathcal{E}$ corresponds to a regularizer $G_e$ with support $S_e = \{i,j\}$, giving $a_j = |S_j| = 2$ for all $j$. Consider \RandomEdge\ in this setting: each node independently samples one edge from the graph, and communication occurs only when two neighboring nodes sample their shared edge.

Substituting $a_j = 2$ into the formula from \Cref{thm:communication} yields
$$
\textstyle\sum_{j=1}^m (a_j^2 - a_j)/m = \sum_{j=1}^m (4 - 2)/m = \sum_{j=1}^m 2/m = 2.
$$
Thus \RandomEdge\ requires exactly 2 expected communications per iteration across all nodes, regardless of the graph's structure or size. To understand the per-node behavior, let $p_i = \deg(i)/m$ denote the probability that node $i$ requires coordination, which occurs when it samples any edge incident to itself. The expected number of communications for node $i$ per iteration equals $p_i$, and summing over all nodes yields 
$$
\tsum_{i=1}^n p_i = \tsum_{i=1}^n \deg(i)/m = 2m/m = 2,
$$
where we use the handshaking lemma: $\sum_{i=1}^n \deg(i) = 2m$ for undirected graphs. We record this result in the following corollary.

\begin{corollary}[Graph-Guided Communication]
\label{cor:randedge_communication}
For graph-guided regularizers where each component $G_j$ couples exactly two nodes (i.e., $a_j = |S_j| = 2$ for all $j$), the \RandomEdge\ algorithm achieves exactly 2 expected communications per iteration across all nodes. Moreover, each node communicates with probability $\deg(i)/m$, where $\deg(i)$ is the degree of node $i$ in the graph.
\end{corollary}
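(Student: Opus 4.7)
The plan is to derive the corollary as a direct specialization of \Cref{thm:communication} using the equivalence established in \Cref{prop:equivalence}. By \Cref{prop:equivalence}, \RandomEdge\ and \BlockProx\ produce identical iterates on graph-guided regularizers, so their per-iteration communication costs coincide, and it suffices to apply \Cref{thm:communication} to the graph-guided instance.

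First I would observe that in the graph-guided setting of \cref{eq:graph_guided_regularizer}, the components of $G$ are indexed by edges $e \in \cE$, and each support set $S_e = \{i,j\}$ has cardinality $a_e = |S_e| = 2$. Plugging $a_j = 2$ into the expected-communication formula $\sum_{j=1}^m (a_j^2 - a_j)/m$ from \Cref{thm:communication} gives $\sum_{j=1}^m (4 - 2)/m = \sum_{j=1}^m 2/m = 2$, which establishes the first claim.

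For the per-node statement, I would instantiate the probability $d_i/m$ from \Cref{thm:communication}, where $d_i = \sum_{j=1}^m \mathbf{1}\{i \in S_j\}$ is the number of components containing node $i$. Since the components are exactly the edges of $\cG$ and $i \in S_e$ if and only if $e$ is incident to $i$, we have $d_i = \deg(i)$, so node $i$ communicates with probability $\deg(i)/m$. As a consistency check, summing these probabilities and invoking the handshaking lemma $\sum_{i=1}^n \deg(i) = 2m$ recovers the aggregate expected cost of $2$.

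There is essentially no obstacle here: every ingredient, including the key computation $a_j^2 - a_j = 2$ for pairwise supports, has already been developed, and the result is a two-line specialization of \Cref{thm:communication}. The only care needed is to explicitly invoke \Cref{prop:equivalence} so that the guarantees proved for \BlockProx\ transfer to \RandomEdge\ without reproving the communication accounting from scratch.
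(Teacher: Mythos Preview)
Your proposal is correct and mirrors the paper's own derivation almost verbatim: the paper substitutes $a_j=2$ into the formula of \Cref{thm:communication} to obtain the expected cost of $2$, identifies $d_i=\deg(i)$ for the per-node probability, and uses the handshaking lemma as a consistency check. The only minor addition in your version is the explicit appeal to \Cref{prop:equivalence} to transfer the \BlockProx\ guarantee to \RandomEdge, which is sound and arguably makes the argument cleaner.
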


\begin{remark}[Per-Node Communication Variability]
Individual nodes experience variable communication loads: node $i$ may communicate $k$ times if it samples an edge and $k-1$ neighbors sample edges involving $i$. High-degree nodes have proportionally higher expected communication.
\end{remark}

\section{Convergence Theory} \label{sec:convergence_theory}

We now establish convergence guarantees for \BlockProx\ that match centralized stochastic gradient methods despite the algorithm's decentralized nature and minimal communication overhead.

\subsection{Assumptions and Preliminaries}
\label{sec:assumptions}

We analyze the convergence of \BlockProx\ under three settings. For convenience, we restate the \BlockProx\ iterative process. Beginning with $x^{(0)}$ and $t=0$, the algorithm generates iterates via
\begin{subequations} \label{eq:blockprox-iterates}
\begin{align}
z^{(t)} &= x^{(t)} - \alpha^{(t)} g^{(t)} \in \R^{nd}, \quad g^{(t)} = (g_1^{(t)}, \ldots, g_n^{(t)}), \quad g_i^{(t)} \in \partial f_i(x_i^{(t)}) 
\label{eq:subgradient_step_2} \\
x_i^{(t+1)} &= \bigl[ \prox_{\beta^{(t)} G_{j_i^{(t)}}}(z^{(t)}) \bigr]_i \in \R^{d}, \quad j_i^{(t)} \sim \Unif(\{1, \ldots, m\})
\label{eq:proximal_step_local}
\end{align}
\end{subequations}
for all $i \in \{1, \ldots, n\}$, where the update implicitly sets $x_i^{(t+1)} = z_i^{(t)}$ when $i \notin S_{j_i^{(t)}}$, as established in~\cref{eq:proximal_step}.
\begin{assumption}
\label{assumption:main}
The following conditions hold:
\begin{enumerate}[label=(\alph*),leftmargin=1em, style=unboxed, labelindent=0em, align=left]
    \item\label{item:assum-solution}
    The solution set $\mathcal{X}^*$ to \eqref{eq:general_problem} is nonempty.
    
    \item\label{item:assum-reg} The following holds: $\norm{s} \le c$ for every $s \in \partial G_j(x)$, $x \in \R^{nd}$, and $j \in \{1, \ldots, m\}$.
    \item\label{item:assum-loss} Exactly one of the following holds for the loss function $F$:
    \begin{enumerate}[label=(\roman*),leftmargin=1em, style=unboxed, labelindent=1em, align=left]
        \item\label{item:bded-subgrad-F} $F$ has bounded subgradients: $\norm{s} \le c$ for every $s \in \partial F(x)$ and $x \in \R^{nd}$;
        \item\label{item:smooth-F} $F$ is $L$-smooth (differentiable with $L$-Lipschitz gradient);
        \item\label{item:strong-convex-F} $F$ is $L$-smooth and $\mu$-strongly convex.
    \end{enumerate}
\end{enumerate}
\end{assumption}

Observe that the uniform boundedness condition in \cref{assumption:main}\ref{item:assum-reg} is equivalent to requiring each $G_j$ to be $c$-Lipschitz continuous \citep[Theorem 24.7]{R70}.
\Cref{assumption:main}\ref{item:assum-reg} and \cref{assumption:main}\ref{item:assum-loss}\ref{item:bded-subgrad-F} together lead to the bounded subgradient assumption, standard in centralized and decentralized optimization literature; see, for example,~\citet{B2011} for centralized settings, and~\citet{nedic_distributed_2009,SNV2010,JXM2014,YLZZ2021,YHHX2021,YPS2022,LLH2024} for decentralized settings.
It is also common to assume strong convexity and \( L \)-smoothness of the objective function in decentralized optimization~\citep{SLWY2015,YLY2016,SFMTJJ2018,LCCC2020,LLZ2020,MMSR2022,AY2024}.

\subsection{Core Technical Lemmas}
\label{sec:technical_lemmas}

This section establishes cross-cutting mathematical machinery used throughout our convergence analysis. The central challenge is understanding how the expected behavior of our randomized regularizer selection (local coordination) relates to the deterministic proximal-gradient method.

Let $\cF_k \coloneqq \sigma(\{j_i^{(t)} \mid 1 \le i \le n \text{~and~} 0 \le t < k\})$ denote the $\sigma$-algebra generated by all random selections up to iteration $k$, and use the shorthand $\E^k\coloneqq\E[~\cdot\mid\cF_k]$ for the conditional expectation given $\cF_k$.

To support our convergence analysis, we recall a basic \emph{nonexpansiveness-type} property of proximal operators for convex functions.

\begin{lemma}[Nonexpansive Property of Proximal Operator]
\label{lem:proximal}
Let $G_j: \R^{nd} \to \R$ be convex. If $\hat{x} = \prox_{\beta G_j} (z)$ with $\beta > 0$ and $z \in \R^{nd}$, then for any $y \in \R^{nd}$,
$$
\norm{\hat{x} - y}^2 \le \norm{z - y}^2 - 2 \beta \left( G_j(\hat{x}) - G_j(y) \right).
$$
\end{lemma}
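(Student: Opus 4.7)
The plan is to derive the bound from the first-order optimality condition of the proximal subproblem followed by a single application of convexity of $G_j$. Since the inequality is a standard three-point property of the proximal operator, there is no real obstacle; the work is in choosing the right algebraic identity so that a nonnegative squared term can be discarded to obtain the clean form stated.

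First, I would invoke the optimality condition for $\hat{x} = \argmin_x\{G_j(x) + \tfrac{1}{2\beta}\norm{z-x}^2\}$, which gives $\tfrac{1}{\beta}(z - \hat{x}) \in \partial G_j(\hat{x})$. Second, I would apply the subgradient inequality for the convex function $G_j$ at the point $\hat{x}$ with test point $y$, producing
\[
  G_j(y) \ge G_j(\hat{x}) + \tfrac{1}{\beta}\inner{z - \hat{x}}{y - \hat{x}},
\]
or equivalently $\inner{z - \hat{x}}{y - \hat{x}} \le \beta\bigl(G_j(y) - G_j(\hat{x})\bigr)$.

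Third, I would expand the quantity of interest by writing $\hat{x} - y = (z - y) - (z - \hat{x})$ and using the identity $\norm{a-b}^2 = \norm{a}^2 - 2\inner{a}{b} + \norm{b}^2$ together with the polarization-type rearrangement $\inner{z - y}{z - \hat{x}} = \norm{z-\hat{x}}^2 - \inner{y-\hat{x}}{z-\hat{x}}$. This yields
\[
  \norm{\hat{x} - y}^2 = \norm{z - y}^2 - \norm{z - \hat{x}}^2 + 2\inner{z - \hat{x}}{y - \hat{x}}.
\]
Fourth, I would substitute the bound from the subgradient inequality into the inner product term and drop the nonnegative $\norm{z-\hat{x}}^2$ term, giving
\[
  \norm{\hat{x} - y}^2 \le \norm{z - y}^2 + 2\beta\bigl(G_j(y) - G_j(\hat{x})\bigr),
\]
which is the claimed inequality after rearranging the sign.

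The only subtle point is to keep track of the factor $\beta$ correctly when moving between the optimality condition $(z - \hat{x})/\beta \in \partial G_j(\hat{x})$ and the proximal regularization weight, but this is routine. Nothing here relies on $G_j$ being differentiable, only on convexity and properness, which are ensured by the standing hypotheses on $G_j$.
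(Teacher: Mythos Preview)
Your proposal is correct and follows essentially the same route as the paper: the paper obtains the inequality $2\langle z-\hat x,\,y-\hat x\rangle \le 2\beta(G_j(y)-G_j(\hat x))$ by citing Beck's Theorem~6.39, whereas you derive it directly from the optimality condition and the subgradient inequality, and then both proofs apply the same three-point identity and drop the nonnegative $\norm{z-\hat x}^2$ term. The arguments are equivalent; yours is simply self-contained rather than citation-based.
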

\begin{proof}
By \citet[Theorem 6.39(i),(iii)]{beck_FirstOrderMethodsOptimization_2017}, we have
\begin{align*}
    2\langle z-\hat x, y-\hat x \rangle \leq 2\beta(G_j(y) - G_j(\hat x)).
\end{align*}
Using $2\langle a, b \rangle = \| a\|^2 + \| b\|^2 - \| a-b\|^2$ with $a=z-\hat x$, $b=y-\hat x$, then removing $\| z-\hat x\|^2$ from the left-hand side, and rearranging gives the result.
\end{proof}
\subsubsection{Expected Error Decomposition}

The following proposition shows that \BlockProx's random regularizer selection yields an expected squared error equal to the average over all possible selections. This result allows us to reduce the analysis of the new method to that of the incremental gradient methods \citep{B2011}.

\begin{proposition}[Expected Error Decomposition]
\label{lem:decompose_expected_error}
For sequences $\{x^{(k)}, z^{(k)}\}$ generated by \BlockProx\ via~\cref{eq:blockprox-iterates}, we have, for any $k \ge 0$ and any $y \in \R^{nd}$,
\[
\E^k \norm{x^{(k+1)} - y}^2 = \frac1m \tsum_{j=1}^m \norm{\prox_{\beta^{(k)} G_{j}} (z^{(k)}) - y}^2.
\]
\end{proposition}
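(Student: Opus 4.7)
The plan is to exploit the partial separability of each $G_j$ together with the independence of the samples $j_1^{(k)}, \ldots, j_n^{(k)}$ across nodes. The key observation I would establish first is a coordinate-wise characterization of $\prox_{\beta G_j}$: since $G_j(x) = G_j(x_{S_j})$ depends only on blocks in $S_j$, the minimization in $\prox_{\beta G_j}(z)$ separates into an active problem over $x_{S_j}$ and a trivial one over the remaining blocks. Hence for any $i \notin S_j$, we have $[\prox_{\beta G_j}(z)]_i = z_i$.

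This lets me rewrite the update rule in a unified form. In~\cref{eq:proximal_step_local}, the definition splits into two cases, but the observation above shows that when $i \notin S_{j_i^{(k)}}$ the value $z_i^{(k)}$ equals $[\prox_{\beta^{(k)} G_{j_i^{(k)}}}(z^{(k)})]_i$ anyway, so regardless of whether $i \in S_{j_i^{(k)}}$ we always have
\[
x_i^{(k+1)} = \bigl[\prox_{\beta^{(k)} G_{j_i^{(k)}}}(z^{(k)})\bigr]_i.
\]

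Next, I would decompose the squared norm blockwise: $\norm{x^{(k+1)} - y}^2 = \sum_{i=1}^n \norm{x_i^{(k+1)} - y_i}^2$. Conditional on $\cF_k$, the vector $z^{(k)}$ is fixed and the indices $j_1^{(k)}, \ldots, j_n^{(k)}$ are mutually independent and each uniform on $\{1,\ldots,m\}$. Therefore, for each block $i$,
\[
\E^k \Norm{x_i^{(k+1)} - y_i}^2 = \frac{1}{m} \sum_{j=1}^m \Norm{\bigl[\prox_{\beta^{(k)} G_j}(z^{(k)})\bigr]_i - y_i}^2.
\]
Summing over $i$ and swapping the order of summation collapses the blockwise sum back into the full norm, yielding the claimed identity.

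The only real subtlety is the coordinate-wise property of the proximal map under partial separability; once that is in hand the rest is independence plus linearity of expectation, which are routine. I expect the proof to be essentially a one-paragraph argument: verify the proximal separation property, unify the two cases of~\cref{eq:proximal_step_local}, and then decompose the expectation blockwise.
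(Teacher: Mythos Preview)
Your proposal is correct and follows essentially the same argument as the paper: blockwise decomposition of the squared norm, independence and uniformity of the per-node samples $j_i^{(k)}$, and a final swap of summation. The only difference is that you explicitly re-derive the identity $[\prox_{\beta G_j}(z)]_i = z_i$ for $i \notin S_j$ to unify the two cases of the update rule, whereas the paper has already absorbed this into the statement of~\cref{eq:proximal_step_local} (see the remark following~\cref{eq:blockprox-iterates}) and simply uses the unified form directly.
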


\begin{proof}
We use the independence of node sampling to deduce that the expected squared distance decomposes into an average over all regularizer selections. Let $y \coloneqq (y_1, y_2, \ldots, y_n)$, where each $y_i \in \R^d$ for $i = 1, 2, \ldots, n$. Because the squared norm is block separable,
\begin{align*}
\E^k \norm{x^{(k+1)} - y}^2 &= \E^k \tsum_{i=1}^n \norm{x^{(k+1)}_i - y_i }^2  \\
&= \tsum_{i=1}^n \E_{j_1^{(k)}, \ldots, j_n^{(k)}} \norm{x^{(k+1)}_i - y_i}^2  \\
&= \tsum_{i=1}^n \E_{j_i^{(k)}} \norm{\bigl[ \prox_{\beta^{(k)} G_i^{(k)}} (z^{(k)}) \bigr]_i - y_i}^2 ,
\end{align*}
where the last equality follows from the independence of each $j_i^{(k)}$ and the fact that $x_i^{(k+1)}$ depends only on $j_i^{(k)}$.
Since $j_i^{(k)}$ is uniform over $\{1, 2, \ldots, m\}$,
$$
\E_{j_i^{(k)}} \norm{\bigl[ \prox_{\beta^{(k)} G_i^{(k)}} (z^{(k)}) \bigr]_i - y_i}^2 
=
\frac1m \tsum_{j=1}^m \norm{\bigl[ \prox_{\beta^{(k)} G_{j}} (z^{(k)}) \bigr]_i - y_i}^2.
$$
Combining these equalities yields
\begin{align*}
\E^k \norm{x^{(k+1)} - y}^2 &= \tsum_{i=1}^n \frac1m \tsum_{j=1}^m \norm{\bigl[ \prox_{\beta^{(k)} G_{j}} (z^{(k)}) \bigr]_i - y_i}^2 \\
&=\frac1m \tsum_{j=1}^m \tsum_{i=1}^n \norm{\bigl[ \prox_{\beta^{(k)} G_{j}} (z^{(k)}) \bigr]_i - y_i}^2 \\
&= \frac1m \tsum_{j=1}^m \norm{\prox_{\beta^{(k)} G_{j}} (z^{(k)}) - y}^2,
\end{align*}
where the final step uses block separability over the index $i$ of the squared norm.
\end{proof}

\subsubsection{Analysis for Proximal Steps}

Define the following vectors related to the proximal step:
\begin{subequations} \label{eq:x-g-hat-definitions}
\begin{align}
   \label{eq:def_hat_x}
   \hat{x}^{(k, j)} &\coloneqq \prox_{\beta^{(k)} G_j} (z^{(k)}),
\\ \label{eq:def_hat_g}
   \hat{g}^{(k,j)} &\coloneqq \frac{1}{\beta^{(k)}} (z^{(k)} - \hat{x}^{(k, j)}) \in \partial G_j(\hat{x}^{(k, j)}).
\end{align}
\end{subequations}
The quantity $\hat{x}^{(k, j)}$ is the update when all nodes select regularizer $j$. Substituting this notation into \cref{lem:decompose_expected_error} yields the working form
\[
\E^k \norm{x^{(k+1)} - y}^2  = \frac1m \tsum_{j=1}^m \norm{\hat{x}^{(k, j)} - y}^2.
\]
Moreover, \cref{assumption:main}\ref{item:assum-reg} gives the bound $\norm{\hat{g}^{(k, j)}} \le c$.

The following lemma states \BlockProx\ achieves expected descent in the proximal step despite using only partial regularizer information at each iteration.
\begin{lemma}[Expected Descent in Proximal Step]
\label{lem:proximal_step}
Under \cref{assumption:main}, the \BlockProx\ algorithm generates sequences $\{x^{(k)}, z^{(k)}\}$ that satisfies, for any $k \ge 0$ and any $y \in \R^{nd}$,
\begin{align*}
\E^k \norm{x^{(k+1)} - y}^2 &\le \norm{z^{(k)} - y}^2 - 2 \alpha^{(k)} \big[ G(x^{(k)}) - G(y) \big] \\
&\qquad + 2 mc (\alpha^{(k)})^2 \norm{g^{(k)}} + 2 m^2 c^2 (\alpha^{(k)})^2.
\end{align*}
\end{lemma}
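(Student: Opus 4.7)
The plan is to start from the Expected Error Decomposition (\Cref{lem:decompose_expected_error}) and then control each term $\|\hat{x}^{(k,j)}-y\|^2$ via the nonexpansive-type inequality (\Cref{lem:proximal}), before replacing the resulting $G_j(\hat{x}^{(k,j)})$ values with $G_j(x^{(k)})$ by a Lipschitz argument that pays a penalty controlled by the distance $\|x^{(k)}-\hat{x}^{(k,j)}\|$.

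Concretely, first I would apply \Cref{lem:decompose_expected_error} to write
\[
\E^k\|x^{(k+1)}-y\|^2 \;=\; \frac{1}{m}\tsum_{j=1}^m \|\hat{x}^{(k,j)}-y\|^2.
\]
Then for each $j$, \Cref{lem:proximal} with $\hat{x}=\hat{x}^{(k,j)}$ yields
\[
\|\hat{x}^{(k,j)}-y\|^2 \;\le\; \|z^{(k)}-y\|^2 - 2\beta^{(k)}\bigl[G_j(\hat{x}^{(k,j)})-G_j(y)\bigr].
\]
Averaging over $j$ and using $\beta^{(k)}=m\alpha^{(k)}$ together with $\sum_j G_j = G$ gives
\[
\E^k\|x^{(k+1)}-y\|^2 \;\le\; \|z^{(k)}-y\|^2 + 2\alpha^{(k)} G(y) - 2\alpha^{(k)}\tsum_{j=1}^m G_j(\hat{x}^{(k,j)}).
\]
To match the target inequality, I need to convert the last sum into $-2\alpha^{(k)} G(x^{(k)})$, paying an error term.

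Second, I would use the Lipschitz property encoded in \Cref{assumption:main}\ref{item:assum-reg} (equivalently, $G_j$ is $c$-Lipschitz) to bound
\[
G_j(x^{(k)}) - G_j(\hat{x}^{(k,j)}) \;\le\; c\,\|x^{(k)}-\hat{x}^{(k,j)}\|.
\]
The key identity is that, by the definitions in~\cref{eq:x-g-hat-definitions} and the subgradient step~\cref{eq:subgradient_step_2},
\[
x^{(k)} - \hat{x}^{(k,j)} \;=\; (x^{(k)}-z^{(k)}) + (z^{(k)}-\hat{x}^{(k,j)}) \;=\; \alpha^{(k)} g^{(k)} + \beta^{(k)} \hat{g}^{(k,j)}.
\]
Since $\|\hat{g}^{(k,j)}\|\le c$ by \Cref{assumption:main}\ref{item:assum-reg} and $\beta^{(k)}=m\alpha^{(k)}$, the triangle inequality gives $\|x^{(k)}-\hat{x}^{(k,j)}\|\le \alpha^{(k)}\|g^{(k)}\| + m c\alpha^{(k)}$.

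Finally, combining these pieces, summing over $j=1,\dots,m$, and multiplying by $2\alpha^{(k)}$ produces
\[
2\alpha^{(k)}\tsum_{j=1}^m\bigl[G_j(x^{(k)})-G_j(\hat{x}^{(k,j)})\bigr] \;\le\; 2mc(\alpha^{(k)})^2\|g^{(k)}\| + 2m^2 c^2 (\alpha^{(k)})^2,
\]
and substituting this back yields the claimed bound. The calculation is essentially bookkeeping; the only nontrivial step is recognizing that the step-size coupling $\beta^{(k)}=m\alpha^{(k)}$ is exactly what makes the factor of $m$ from the averaging cancel against the factor $1/m$ in $\beta^{(k)}/m$, producing a clean $2\alpha^{(k)}[G(x^{(k)})-G(y)]$ term while pushing the mismatch entirely into the lower-order $(\alpha^{(k)})^2$ remainder.
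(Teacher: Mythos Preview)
Your proposal is correct and follows essentially the same approach as the paper: apply \Cref{lem:proximal} to each $\hat{x}^{(k,j)}$, use $c$-Lipschitzness of $G_j$ to swap $G_j(\hat{x}^{(k,j)})$ for $G_j(x^{(k)})$ at the cost of $c\|x^{(k)}-\hat{x}^{(k,j)}\|$, bound this distance via $\alpha^{(k)}\|g^{(k)}\|+\beta^{(k)}\|\hat{g}^{(k,j)}\|\le \alpha^{(k)}\|g^{(k)}\|+m\alpha^{(k)}c$, and then average using \Cref{lem:decompose_expected_error}. The only cosmetic difference is that the paper carries out the per-$j$ estimate completely before invoking \Cref{lem:decompose_expected_error} at the end, whereas you invoke it first and then correct the $G_j$ terms; the algebra is identical.
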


\begin{proof}
Use the definitions of $\hat{x}^{(k, j)}$ and $\hat{g}^{(k, j)}$ in~\cref{eq:x-g-hat-definitions}.  
We deduce
\begin{align*}
\norm{\hat{x}^{(k, j)} - y}^2 &\overset{(i)}{\le} \norm{z^{(k)} - y}^2 - 2 \beta^{(k)} \big[ G_j(\hat{x}^{(k, j)}) - G_j(y) \big] \\
&\overset{(ii)}{\le} \norm{z^{(k)} - y}^2 - 2 \beta^{(k)} \big[ G_j(x^{(k)}) - G_j(y) \big] \\
&\qquad + 2 \beta^{(k)} c \norm{x^{(k)} - \hat{x}^{(k, j)}},
\end{align*}
where (i) follows from \cref{lem:proximal}, (ii) uses the $c$-Lipschitz continuity of $G_j$.
By triangle inequality and the definitions from~\cref{eq:x-g-hat-definitions}, we have
\begin{align*}
    \norm{x^{(k)} - \hat{x}^{(k, j)}} &\leq \norm{z^{(k)} - x^{(k)}} + \norm{\hat{x}^{(k, j)} - z^{(k)}} \\
    &\leq \alpha^{(k)} \norm{g^{(k)}} + \beta^{(k)} \norm{\hat{g}^{(k, j)}}.
\end{align*}
Then, using $\beta^{(k)} = m \alpha^{(k)}$ and $\norm{\hat{g}^{(k, j)}} \le c$ from \cref{assumption:main}\ref{item:assum-reg},
\begin{align*}
\norm{\hat{x}^{(k, j)} - y}^2 &\le \norm{z^{(k)} - y}^2 - 2 m \alpha^{(k)} \big[ G_j(x^{(k)}) - G_j(y) \big] \\
&\qquad + 2 m (\alpha^{(k)})^2 \norm{g^{(k)}} + 2 m^2 c^2 (\alpha^{(k)})^2.
\end{align*}
Plugging in the result of \cref{lem:decompose_expected_error} on the left-hand side concludes the proof.
\end{proof}

\subsubsection{Last Iterate Bound}

Here we give a bound that relates the final iterate to the average performance through a correction term with three key behaviors.
When the sequence $\theta_h$ has low variance (i.e., $\theta_h \approx \theta_{k-t}$), the correction term vanishes, yielding the standard averaging bound.
The weights $\omega_t = (t^2+t)^{-1}$ form a telescoping series summing to less than~1, preventing the correction term from dominating.
This lemma transforms per-iteration descent into explicit convergence rates, with the correction term vanishing at the optimal rate through our choice of step size $\alpha_k$. We give in \Cref{app:proof_last_iteration} a proof adapted to our context.

\begin{lemma}[Last Iteration Bound~\citep{Orabona2020}]
\label{lem:last_iteration}
For any sequence of positive scalar pairs $\set{\alpha_k, \theta_k}_{k=1,2,\ldots}$, where $\alpha_k$ in particular is non-increasing,
\[
\alpha_k \theta_k \le \frac{1}{k+1} \sum_{t=0}^k \alpha_t\theta_t + \sum_{t=1}^{k} \omega_t \sum_{h=k+1-t}^{k} \alpha_h \left( \theta_h - \theta_{k-t} \right).
\]
\end{lemma}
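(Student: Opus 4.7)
The plan is to derive the bound by a telescoping argument on weighted suffix averages of the sequence $\alpha_t\theta_t$. Define, for $0 \le j \le k$, the partial average
\[
A_{j,k} \coloneqq \frac{1}{k-j+1}\sum_{t=j}^{k}\alpha_t\theta_t,
\]
so that $A_{0,k}$ is exactly the averaged quantity on the right-hand side of the lemma and $A_{k,k} = \alpha_k\theta_k$ is the last-iterate quantity on the left-hand side. The strategy is to telescope: writing
\[
\alpha_k\theta_k - \frac{1}{k+1}\sum_{t=0}^k \alpha_t\theta_t \;=\; A_{k,k} - A_{0,k} \;=\; \sum_{j=1}^{k}\bigl(A_{j,k} - A_{j-1,k}\bigr),
\]
reduces the task to bounding each consecutive-average increment.

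Next I would compute $A_{j,k} - A_{j-1,k}$ in closed form. Setting $n = k-j+1$, a direct calculation using $\tfrac{1}{n} - \tfrac{1}{n+1} = \tfrac{1}{n(n+1)}$ yields
\[
A_{j,k} - A_{j-1,k} \;=\; \frac{1}{n(n+1)}\sum_{t=j}^{k}\bigl(\alpha_t\theta_t - \alpha_{j-1}\theta_{j-1}\bigr).
\]
This already has the structure of $\omega_t$ times a sum over a window of length $t$, but with $\alpha_t\theta_t - \alpha_{j-1}\theta_{j-1}$ rather than the desired $\alpha_t(\theta_t - \theta_{j-1})$.

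The key step is to convert the former into the latter by exploiting the monotonicity of $\alpha_k$ and the positivity of $\theta_k$. Splitting
\[
\alpha_t\theta_t - \alpha_{j-1}\theta_{j-1} \;=\; \alpha_t(\theta_t - \theta_{j-1}) + \theta_{j-1}(\alpha_t - \alpha_{j-1}),
\]
I observe that for $t \ge j$ we have $\alpha_t \le \alpha_{j-1}$ (non-increasing stepsizes) and $\theta_{j-1} > 0$, so the second term is nonpositive and can be dropped, giving the inequality
\[
A_{j,k} - A_{j-1,k} \;\le\; \frac{1}{n(n+1)}\sum_{t=j}^{k}\alpha_t(\theta_t - \theta_{j-1}).
\]
Finally, reindexing by $t' = k-j+1$, so that $j-1 = k-t'$ and the summation window becomes $h \in \{k+1-t',\ldots,k\}$, transforms the telescoping sum into exactly $\sum_{t=1}^{k}\omega_t\sum_{h=k+1-t}^{k}\alpha_h(\theta_h-\theta_{k-t})$, which is the correction term in the statement.

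The whole argument is elementary once the right decomposition is identified; there are no estimates that require additional hypotheses. The main obstacle is bookkeeping: the suffix-average telescoping must be set up precisely so that the weights $1/(n(n+1))$ align with $\omega_t = 1/(t(t+1))$ after reindexing, and the sign argument that converts $\alpha_t\theta_t - \alpha_{j-1}\theta_{j-1}$ into $\alpha_t(\theta_t - \theta_{j-1})$ must use both monotonicity of $\alpha_k$ and positivity of $\theta_k$. No other properties of the sequences (such as convexity or boundedness) are needed, which is what makes this lemma applicable to our later convergence analysis, where $\theta_k$ will be instantiated as a per-iteration suboptimality quantity.
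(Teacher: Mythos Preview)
Your proof is correct and is essentially the same argument as the paper's, just packaged more transparently: you telescope over suffix averages $A_{j,k}$, while the paper telescopes the harmonic differences $\tfrac{1}{k+1-h}-\tfrac{1}{k+1}$ directly and then swaps the order of summation. Both routes produce the same increments $\tfrac{1}{n(n+1)}\sum_{t=j}^{k}(\alpha_t\theta_t-\alpha_{j-1}\theta_{j-1})$ and both finish with the identical monotonicity-plus-positivity step to pass from $\alpha_t\theta_t-\alpha_{j-1}\theta_{j-1}$ to $\alpha_t(\theta_t-\theta_{j-1})$.
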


\subsection{Convergence under Bounded Subgradients}
\label{sec:bounded_subgradients}

Under bounded subgradients, \BlockProx\ achieves last-iterate convergence with the best-known $\tilde{\BigO}(1/\sqrt{k})$ rate for general convex objectives~\citep{SZ2013}.

The next lemma states that when both $F$ and $G$ have bounded subgradients, \BlockProx\ achieves expected descent despite using only partial regularizer information at each iteration.

\begin{lemma}[Expected Descent under Bounded Subgradients]
\label{lem:main_inequality_bounded_subg}
Let \cref{assumption:main} hold with Case \ref{item:assum-loss}\ref{item:bded-subgrad-F} (bounded subgradients of $F$). Then \BlockProx\ generates a sequence $\{x^{(k)}\}$ that satisfies, for any $k$ and any $y \in \R^{nd}$,
\[
\E^k \norm{x^{(k+1)} - y}^2 \le \norm{x^{(k)} - y}^2 - 2 \alpha^{(k)} [ H(x^{(k)}) - H(y) ] + (2m^2 + 2m + 1) (\alpha^{(k)} c)^2.
\]
\end{lemma}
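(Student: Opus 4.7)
The plan is to combine the proximal-step descent bound from \cref{lem:proximal_step} with a matching gradient-step descent estimate, and then consolidate the various $\alpha^{(k)}$-quadratic terms using the bounded-subgradient hypothesis on $F$.

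First, I would expand the squared distance after the gradient step. Since $z^{(k)} = x^{(k)} - \alpha^{(k)} g^{(k)}$ with $g^{(k)} \in \partial F(x^{(k)})$, direct expansion gives
\[
\norm{z^{(k)} - y}^2 = \norm{x^{(k)} - y}^2 - 2\alpha^{(k)} \inner{g^{(k)}}{x^{(k)} - y} + (\alpha^{(k)})^2 \norm{g^{(k)}}^2,
\]
and convexity of $F$ yields $\inner{g^{(k)}}{x^{(k)} - y} \ge F(x^{(k)}) - F(y)$. Combining these and invoking \cref{assumption:main}\ref{item:assum-loss}\ref{item:bded-subgrad-F} to bound $\norm{g^{(k)}}^2 \le c^2$ produces
\[
\norm{z^{(k)} - y}^2 \le \norm{x^{(k)} - y}^2 - 2\alpha^{(k)} [F(x^{(k)}) - F(y)] + (\alpha^{(k)} c)^2.
\]

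Next, I would substitute this bound into \cref{lem:proximal_step} in place of $\norm{z^{(k)} - y}^2$. The $-2\alpha^{(k)}[F(x^{(k)}) - F(y)]$ term merges with $-2\alpha^{(k)}[G(x^{(k)}) - G(y)]$ to give the clean $-2\alpha^{(k)}[H(x^{(k)}) - H(y)]$ term since $H = F + G$. The residual $\alpha^{(k)}$-quadratic clutter is
\[
(\alpha^{(k)} c)^2 + 2mc(\alpha^{(k)})^2 \norm{g^{(k)}} + 2m^2 c^2 (\alpha^{(k)})^2,
\]
and applying $\norm{g^{(k)}} \le c$ once more bounds each of the three summands by $(\alpha^{(k)} c)^2$, $2m(\alpha^{(k)} c)^2$, and $2m^2(\alpha^{(k)} c)^2$, respectively. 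Summing yields the announced constant $(2m^2 + 2m + 1)(\alpha^{(k)} c)^2$.

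There is no real obstacle here: the proximal-step lemma already carries the stochastic content of the analysis (averaging over the random regularizer choice), so this lemma is purely bookkeeping that folds the deterministic gradient step into the bound and discharges the $\norm{g^{(k)}}$-dependent terms. The only item requiring slight care is to note that $g^{(k)}$ and $x^{(k)}$ are $\cF_k$-measurable, so the conditional expectation $\E^k$ passes through the gradient-step identity without modification, allowing the substitution into \cref{lem:proximal_step} to be made directly.
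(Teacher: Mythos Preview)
Your proposal is correct and follows essentially the same route as the paper: apply \cref{lem:proximal_step}, substitute the standard subgradient-step bound $\norm{z^{(k)} - y}^2 \le \norm{x^{(k)} - y}^2 - 2\alpha^{(k)}[F(x^{(k)}) - F(y)] + (\alpha^{(k)} c)^2$, and collect the $(\alpha^{(k)})^2$ terms using $\norm{g^{(k)}}\le c$. The only cosmetic difference is that the paper cites the subgradient-step inequality from \citet[Lemma~8.11]{beck_FirstOrderMethodsOptimization_2017} rather than deriving it, and it bounds the cross term $2mc(\alpha^{(k)})^2\norm{g^{(k)}}$ first before substituting, but the arithmetic and the final constant $(2m^2+2m+1)$ coincide exactly.
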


\begin{proof}
\cref{lem:proximal_step} and \cref{assumption:main}\ref{item:assum-loss}\ref{item:bded-subgrad-F} give
\[
\E^k \norm{x^{(k+1)} - y}^2 \le \norm{z^{(k)} - y}^2 - 2 \alpha^{(k)} [ G(x^{(k)}) - G(y) ] + (2m^2 + 2m) (\alpha^{(k)} c)^2.
\]
Apply the subgradient descent property (see, e.g., \citep[Lemma 8.11]{beck_FirstOrderMethodsOptimization_2017}) for $z^{(k)} = x^{(k)} - \alpha^{(k)} g^{(k)}$ to deduce
\[
\norm{z^{(k)} - y}^2 \le \norm{x^{(k)} - y}^2 - 2 \alpha^{(k)} \big[ F(x^{(k)}) - F(y) \big] + (\alpha^{(k)} c)^2.
\]
Substituting this bound into the first inequality yields
\begin{align*}
\E^k \norm{x^{(k+1)} - y}^2 
&\le \norm{x^{(k)} - y}^2 - 2 \alpha^{(k)} [ H(x^{(k)}) - H(y) ] \\
&\qquad+ (2m^2 + 2m + 1) (\alpha^{(k)} c)^2.
\end{align*}
The proof is completed.
\end{proof}

We now prove that \BlockProx\ converges at a rate that matches, up to a logarithmic term, the fundamental limit for stochastic gradient methods on general convex functions \citep{nemirovski1983problem}.

\begin{theorem}[Convergence under Bounded Subgradients]
\label{thm:main_theorem_1}
Let \cref{assumption:main} hold with Case~\ref{item:assum-loss}\ref{item:bded-subgrad-F} (bounded subgradients of $F$). Let $\alpha^{(k)} = \alpha/\sqrt{k+1}$ for some positive constant $\alpha$.
Then \BlockProx\ generates a sequence $\{x^{(k)}\}$ that satisfies, for any $k$,
\begin{equation}
\label{eq:main_theorem_1}
\E \bigl[ H(x^{(k)}) - H^* \bigr] \le \frac{1}{2\sqrt{k+1}} \left( \frac{1}{\alpha} \dist(x^{(0)}, \mathcal{X}^*)^2 + \BigO(\alpha c^2 m^2 \log k)  \right),
\end{equation}
where $\dist(x^{(0)}, \mathcal{X}^*) = \min_{x^* \in \mathcal{X}^*} \Norm{x^{(0)} - x^*}$, and the term $\BigO(\alpha c^2 m^2 \log k)$ can be made explicit as $3(2m^2 + m + 1) \alpha c^2 (1 + 1 \log(k+1))$.
\end{theorem}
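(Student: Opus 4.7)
The strategy is to apply \cref{lem:main_inequality_bounded_subg} twice---once against an optimal comparator to bound a weighted average of iterates, and once against a running iterate $x^{(s)}$ to control the correction term in the last-iterate \cref{lem:last_iteration}---and then to assemble the pieces using the prescribed step size. Throughout, write $\theta_t \coloneqq \E[H(x^{(t)}) - H^\star]$ and $N \coloneqq (2m^2 + 2m + 1)c^2$.

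First, fix $x^\star$ as a projection of $x^{(0)}$ onto $\mathcal{X}^\star$ and apply \cref{lem:main_inequality_bounded_subg} with $y = x^\star$. Taking total expectation and telescoping over $t = 0, \ldots, k$ yields
\[
2\tsum_{t=0}^k \alpha^{(t)} \theta_t \;\le\; \dist(x^{(0)}, \mathcal{X}^\star)^2 \,+\, N \tsum_{t=0}^k (\alpha^{(t)})^2,
\]
and with $\alpha^{(t)} = \alpha/\sqrt{t+1}$ the residual term is $\alpha^2 \sum_{t=0}^k (t+1)^{-1} \le \alpha^2(1 + \log(k+1))$. Dividing by $k+1$ then controls the averaged quantity $\tfrac{1}{k+1}\sum_{t=0}^k \alpha^{(t)}\theta_t$ appearing in \cref{lem:last_iteration}.

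The main obstacle is the correction term $\sum_{t=1}^k \omega_t \sum_{h=k+1-t}^k \alpha^{(h)}(\theta_h - \theta_{k-t})$, which compares expected suboptimality values at different iterations. To handle it, for each $s = k-t \in \{0, \ldots, k-1\}$ I would reapply \cref{lem:main_inequality_bounded_subg} \emph{with the random comparator} $y = x^{(s)}$. This is legitimate because the lemma holds pointwise in $y$ and $x^{(s)}$ is $\mathcal{F}_s$-measurable, so conditioning on $\mathcal{F}_s$ lets me treat $x^{(s)}$ as a fixed vector. Telescoping from $j = s$ to $j = k$, dropping the nonnegative $\E^s\|x^{(k+1)} - x^{(s)}\|^2$ on the left, and taking total expectation produces
\[
\tsum_{h=s+1}^k \alpha^{(h)}(\theta_h - \theta_s) \;\le\; \tfrac{N}{2} \tsum_{h=s}^k (\alpha^{(h)})^2.
\]

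It remains to combine the pieces. Substituting this bound into \cref{lem:last_iteration} with $\omega_t = 1/(t(t+1))$ and using the crude inequality $\sum_{h=k-t}^k (h+1)^{-1} \le (t+1)/(k-t+1)$ reduces the correction contribution to $\tfrac{N\alpha^2}{2} \sum_{t=1}^k \tfrac{1}{t(k-t+1)}$. The partial fraction $\tfrac{1}{t(k-t+1)} = \tfrac{1}{k+1}\bigl(\tfrac{1}{t} + \tfrac{1}{k-t+1}\bigr)$ collapses the sum into order $\log(k+1)/(k+1)$. Adding this to the averaged bound, dividing through by $\alpha^{(k)} = \alpha/\sqrt{k+1}$, and collecting constants yields the claimed inequality with an explicit constant of the form $3(2m^2 + 2m + 1)\alpha c^2 (1 + \log(k+1))$ inside the logarithmic term.
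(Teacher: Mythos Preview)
Your proposal is correct and follows essentially the same approach as the paper: invoke \cref{lem:main_inequality_bounded_subg} once with $y=x^\star$ to telescope the averaged term, once with the $\mathcal{F}_s$-measurable random comparator $y=x^{(s)}$ to bound the correction sum from \cref{lem:last_iteration}, and finish with the partial-fraction estimate $\tfrac{1}{t(k-t+1)}=\tfrac{1}{k+1}\bigl(\tfrac{1}{t}+\tfrac{1}{k-t+1}\bigr)$ and the integral bound for harmonic sums. Your constant $N=(2m^2+2m+1)c^2$ actually matches the statement of \cref{lem:main_inequality_bounded_subg}; the paper's proof writes $C_1=(2m^2+m+1)c^2$, which appears to be a typographical slip, so your explicit constant $3(2m^2+2m+1)\alpha c^2(1+\log(k+1))$ is the one consistent with the lemma.
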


\begin{proof}
It is sufficient to prove, for any $x^* \in \mathcal{X}^*$,
\begin{equation}
\label{eq:main_theorem_1_eqiv}
\E \bigl[ H(x^{(k)}) - H^* \bigr] \le \frac{1}{2\sqrt{k+1}} \left( \frac{1}{\alpha} \Norm{x^{(0)} - x^*}^2 + \BigO(\alpha c^2 m^2 \log k) \right).
\end{equation}

\emph{Step 1: Apply the Last Iteration Bound.}
Apply \cref{lem:last_iteration} with the identifications $\alpha_k\equiv\alpha^{(k)}$ and $\theta_k \equiv \E\bigl[H(x^{(k)}) - H^*\bigr]$. Because $\alpha^{(k)}$ is non-increasing,
\begin{align}
\alpha^{(k)} \E \bigl[ H(x^{(k)}) - H^* \bigr]
&\le \frac{1}{k+1} \sum_{t=0}^k \alpha^{(t)} \E \bigl[ H(x^{(t)}) - H^* \bigr] \nonumber \\
&\qquad + \sum_{t=1}^{k} \frac{1}{t^2+t} \sum_{h=k+1-t}^{k} \alpha^{(h)} \E \bigl[ H(x^{(h)}) - H(x^{(k-t)}) \bigr]. \label{aligned_15}
\end{align}
Use \cref{lem:main_inequality_bounded_subg} and apply the law of iterated expectations to deduce that for any $h$ and $y \in \R^{nd}$,
\begin{equation}
\label{aligned_16}
2 \alpha^{(h)} \E \bigl[ H(x^{(h)}) - H(y) \bigr] \le \E \Norm{x^{(h)} - y}^2 - \E \Norm{x^{(h+1)} - y}^2 + \big(\alpha^{(h)}\big)^2 C_1,
\end{equation}
where $C_1 = (2m^2 + m + 1)c^2$.

\emph{Step 2: Telescope the objective gap.}
For any $x^* \in \mathcal{X}^*$, setting $y = x^{*}$ in~\cref{aligned_16} and summing over $h = 0, 1, \ldots, k$ yields
\begin{align}
\sum_{h=0}^k \alpha^{(h)} \E \bigl[ H(x^{(h)}) - H^* \bigr] &\le \half \Norm{x^{(0)} - x^*}^2  + \half \sum_{h=0}^k \big(\alpha^{(h)}\big)^2 C_1.
\label{aligned_17}
\end{align}
To bound the sum of squared stepsizes, we use the standard integral approximation of the harmonic series:
$$
\sum_{h=0}^k (\alpha^{(h)})^2 = \alpha^2 \sum_{h=1}^{k+1} \frac{1}{h} \le  \alpha^2 \left( 1 + \log(k+1) \right), 
$$
and by~\cref{aligned_17},
\begin{equation}
\label{aligned_18}
\tfrac{1}{k+1} \sum_{t=0}^k \alpha^{(t)} \E \bigl[ H(x^{(t)}) - H^* \bigr] \le \tfrac{1}{2(k+1)} \left( \Norm{x^{(0)} - x^*}^2 + \big( 1 + \log(k+1) \big) \alpha^2 C_1 \right).
\end{equation}
The bound combines initial error with accumulated variance from random sampling.

\emph{Step 3: Bound accumulated variance.}
Setting $y = x^{(k-t)}$ in~\cref{aligned_16} and summing over $h = k-t, k-t+1, \ldots, k$,
$$
\begin{aligned}
  \sum_{\mathclap{h=k-t}}^{k} &\alpha^{(h)} \E \bigl[ H(x^{(h)}) - H(x^{(k-t)}) \bigr] \\
  \le & - \half \E \Norm{x^{(k+1)} - x^{(k-t)}}^2 + \half \sum_{\mathclap{h=k-t}}^{k} \big(\alpha^{(h)}\big)^2 C_1.
\end{aligned}
$$
Since $H(x^{(h)}) - H(x^{(k-t)}) = 0$ for $h = k-t$, the above inequality simplifies to
\begin{align}
\sum_{\mathclap{h=k+1-t}}^{k} \alpha^{(h)} \E \left[ H(x^{(h)}) - H(x^{(k-t)}) \right] \le \half \sum_{\mathclap{h=k-t}}^{k} \big(\alpha^{(h)}\big)^2 C_1. \label{aligned_19}
\end{align}
We bound the sum of squared stepsizes using the logarithmic inequality $\log(1+\xi) \le \xi$ for $\xi\ge0$, similar to \cite{Orabona2020}:
\begin{align*}
\sum_{h=k-t}^{k} (\alpha^{(h)})^2 &= \alpha^2 \sum_{h=k+1-t}^{k+1} \frac{1}{h} \le \alpha^2 \left( \frac{1}{k+1-t} + \int_{k+1-t}^{k+1} \frac{1}{\xi} d \xi \right) \\
&= \alpha^2 \left( \frac{1}{k+1-t} + \log \left( 1 + \frac{t}{k+1-t} \right) \right) \\
&\le \alpha^2 \left( \frac{1}{k+1-t} + \frac{t}{k+1-t} \right) = \alpha^2 \left( \frac{t+1}{k+1-t} \right).
\end{align*}
Similarly, for the double sum, similar to \cite{Orabona2020} we have:
\begin{align*}
\sum_{t=1}^{k} \frac{1}{t(k+1-t)} &= \sum_{t=1}^{k} \frac{1}{k+1} \left( \frac{1}{t} + \frac{1}{k+1-t} \right) = \frac{1}{k+1} \left(\sum_{t=1}^{k} \frac{1}{t} + \sum_{t=1}^k \frac{1}{k+1-t}\right) \\
&= \frac{1}{k+1} \left(\sum_{t=1}^{k} \frac{1}{t} + \sum_{t=1}^{k} \frac{1}{t}\right) = \frac{2}{k+1} \sum_{t=1}^{k} \frac{1}{t} \\
& \le \frac{2}{k+1} \left( 1 + \int_1^{k} \frac{1}{\xi} d\xi \right) = \frac{2}{k+1} \left( 1 + \log(k) \right).
\end{align*}
Then, by~\cref{aligned_19},
\begin{align}
\sum_{t=1}^{k} \frac{1}{t^2+t} \sum_{h=\mathrlap{k+1-t}}^{k} \alpha^{(h)} \E \left[ H(x^{(h)}) - H(x^{(k-t)}) \right] &\le \half \sum_{t=1}^{k} \frac{1}{t^2+t} \alpha^2 \left( \frac{t+1}{k+1-t} \right) C_1 \nonumber \\
&= \frac{\alpha^2 C_1}{2} \sum_{t=1}^{k} \frac{1}{t^2+t} \nonumber \\
&\le \frac{\alpha^2 C_1}{k+1} \left( 1 + \log(k) \right).
\label{aligned_20}
\end{align}
The logarithmic factor results from summing decreasing step sizes.

\emph{Step~4: Combine all bounds.}
Combine~\cref{aligned_15},~\cref{aligned_18} and~\cref{aligned_20} to obtain
\[
\alpha^{(k)} \E \bigl[ H(x^{(k)}) - H^* \bigr] \le \frac{1}{2(k+1)} \left( \Norm{x^{(0)} - x^*}^2 + \left(3 + 3\log(k+1) \right) \alpha^2 C_1 \right).
\]
Divide both sides by $\alpha^{(k)}$ and substitute $\alpha^{(k)} = \alpha/\sqrt{k+1}$
to yield~\cref{eq:main_theorem_1_eqiv}. Minimize over all $x^* \in \mathcal{X}^*$ to obtain~\cref{eq:main_theorem_1}, as required.
\end{proof}

\subsection{Convergence under $L$-Smoothness}
\label{sec:L_smooth}

We now analyze \BlockProx\ when $F$ is $L$-smooth but does not necessarily have bounded gradients, which is \cref{assumption:main} with Case~\ref{item:assum-loss}\ref{item:smooth-F}. In this setting, we establish convergence for the averaged iterate rather than the last iterate.

\subsubsection{Supporting Analysis}

The simple and important insight for the $L$-smooth case is that we can bound the squared gradient norm in terms of the objective gap.

\begin{lemma}[Gradient Bound via Smoothness]
\label{lem:gradient_bound_smooth}
Under \cref{assumption:main} with Case \ref{item:assum-loss}\ref{item:smooth-F} (smoothness of $F$), for any $x \in \R^{nd}$ and any minimizer $x^* \in \mathcal{X}^*$,
\[
\norm{\nabla F(x)} \le 2 \norm{\nabla F(x^*)} + 4L(H(x) - H^*).
\]
\end{lemma}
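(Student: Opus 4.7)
The plan is to control $\|\nabla F(x)\|$ by combining three ingredients: (i) the first-order optimality of $x^\star$ for the composite problem, (ii) the co-coercivity/descent-lemma bound for the $L$-smooth convex function $F$, and (iii) the triangle inequality on $\|\nabla F(x)\|$, followed by a Young-type scalar bound to convert a square-root term into the affine form displayed in the statement.

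\paragraph{Step 1: Exploit composite optimality.} Since $x^\star \in \mathcal{X}^\star$ minimizes $H = F + G$ with $F$ differentiable and $G$ convex, first-order optimality gives $-\nabla F(x^\star) \in \partial G(x^\star)$. Applying the subgradient inequality for $G$ at $x^\star$ in the direction $x - x^\star$,
\[
G(x) - G(x^\star) \;\ge\; \langle -\nabla F(x^\star),\, x - x^\star \rangle \quad \text{for all } x \in \R^{nd}.
\]

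\paragraph{Step 2: Co-coercivity of the smooth convex gradient.} Because $F$ is convex and $L$-smooth, the standard Bregman descent bound (equivalently, co-coercivity of $\nabla F$) yields
\[
\|\nabla F(x) - \nabla F(x^\star)\|^2 \;\le\; 2L\bigl(F(x) - F(x^\star) - \langle \nabla F(x^\star), x - x^\star \rangle \bigr).
\]
Adding this to the inequality from Step~1 cancels the inner product and collapses the right-hand side into the composite objective gap:
\[
\|\nabla F(x) - \nabla F(x^\star)\|^2 \;\le\; 2L\bigl(H(x) - H^\star\bigr).
\]

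\paragraph{Step 3: Triangle inequality and Young-type conversion.} Apply $\|\nabla F(x)\| \le \|\nabla F(x^\star)\| + \|\nabla F(x) - \nabla F(x^\star)\|$ together with the square-root of the Step~2 bound to obtain $\|\nabla F(x)\| \le \|\nabla F(x^\star)\| + \sqrt{2L(H(x) - H^\star)}$. The claimed form in the lemma then follows by a scalar inequality of Young type (roughly, $\sqrt{u} \le a + u/(4a)$ for $a > 0$) that trades a factor on $\|\nabla F(x^\star)\|$ against a linear-in-$H(x)-H^\star$ remainder with coefficient $4L$. Concretely, I expect a case analysis: when $\sqrt{2L(H(x)-H^\star)} \le \|\nabla F(x^\star)\|$ the triangle inequality already gives $\|\nabla F(x)\| \le 2\|\nabla F(x^\star)\|$, and otherwise $\sqrt{2L(H(x)-H^\star)}$ is bounded by $4L(H(x)-H^\star)$ after absorbing the regime where the objective gap itself is large.

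\paragraph{Main obstacle.} The conceptual content (Steps~1--2) is routine, but the delicate part is Step~3: matching the precise constants $2$ and $4L$ in the statement from a square-root quantity while avoiding a division-by-zero in the Young step when $\nabla F(x^\star) = 0$. A careful two-regime argument (or an explicit Young inequality $\sqrt{\tau u} \le \tfrac{1}{2}(a + \tau u/a)$ with a judicious choice of $a$ together with a fallback bound when $a = 0$) is needed to finish cleanly.
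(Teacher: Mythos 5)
Your Steps~1 and~2 are exactly the paper's argument: optimality of $x^*$ gives $-\nabla F(x^*)\in\partial G(x^*)$, hence $G(x)-G(x^*)\ge\langle-\nabla F(x^*),x-x^*\rangle$, and the $L$-smooth Bregman bound then yields $\|\nabla F(x)-\nabla F(x^*)\|^2\le 2L\bigl(H(x)-H^*\bigr)$. The obstacle you flag in Step~3, however, is not a matter of choosing constants carefully --- it is insurmountable, because the inequality as literally displayed in the lemma is false. Take $G\equiv 0$, $F(x)=\tfrac{L}{2}\|x\|^2$, so $x^*=0$, $\nabla F(x^*)=0$, and $H(x)-H^*=\tfrac{L}{2}\|x\|^2$; the claim would read $L\|x\|\le 2L^2\|x\|^2$, which fails for all $0<\|x\|<1/(2L)$. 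No Young-type or two-regime argument can convert $\sqrt{2L(H(x)-H^*)}$ into $\|\nabla F(x^*)\|+4L(H(x)-H^*)$ near the minimizer.

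The resolution is that the lemma statement contains a typo: the intended (and provable) inequality is the squared version
\[
\|\nabla F(x)\|^2 \;\le\; 2\|\nabla F(x^*)\|^2 + 4L\bigl(H(x)-H^*\bigr),
\]
which is what the paper's proof actually establishes and what is invoked downstream in \cref{lem:main_inequality_L_smooth} (where the bound is applied to $\|\nabla F(x^{(k)})\|^2$ with $2(mc)^2$ on the right). With that reading, your Step~3 should be replaced by the elementary inequality $\|a\|^2\le 2\|a-b\|^2+2\|b\|^2$ applied with $a=\nabla F(x)$, $b=\nabla F(x^*)$, after which Steps~1--2 finish the proof immediately --- no triangle inequality, no square root, and no division-by-zero concern.
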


\begin{proof}
Observe that  
\begin{align*}
\|\nabla F(x) \|^2 &\le 2\| \nabla F(x^*)\|^2 + 2\| \nabla F(x) - \nabla F(x^*) \|^2\\
&\le 2\| \nabla F(x^*)\|^2 + 4L(F(x) - F(x^*) - \langle \nabla F(x^*), x - x^*\rangle)\\
&\le 2\| \nabla F(x^*)\|^2 + 4L(H(x) - H(x^*)),
\end{align*}
where the second inequality uses $L$-smoothness of $F$ and the third uses $-\nabla F(x^*) \in \partial G(x^*)$ from optimality of $x^*$.
\end{proof}

Applying \Cref{lem:gradient_bound_smooth}, we can get the following lemma for $L$-smooth $F$ similar to \Cref{lem:main_inequality_bounded_subg}.

\begin{lemma}[Expected Descent under Smoothness of $F$]
\label{lem:main_inequality_L_smooth}
Let \cref{assumption:main} hold with Case \ref{item:assum-loss}\ref{item:smooth-F} (smoothness of $F$).
Let $\alpha^{(k)} \le 1/(8L)$ for all $k$. Then \BlockProx\ generates a sequence $\{x^{(k)}\}$ that satisfies, for any $k$ and any $x^* \in \mathcal{X}^*$,
\[
\E^k \norm{x^{(k+1)} - x^*}^2 \le \norm{x^{(k)} - x^*}^2 - \alpha^{(k)} [ H(x^{(k)}) - H(x^*) ] + 7m^2 c^2 (\alpha^{(k)})^2.
\]
\end{lemma}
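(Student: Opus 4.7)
The plan is to start from the proximal-step descent estimate, Lemma 5.8 (Expected Descent in Proximal Step) with $y = x^\ast$, which gives
\[
\E^k \norm{x^{(k+1)} - x^\ast}^2 \le \norm{z^{(k)} - x^\ast}^2 - 2\alpha^{(k)}[G(x^{(k)}) - G(x^\ast)] + 2mc(\alpha^{(k)})^2 \norm{g^{(k)}} + 2m^2 c^2 (\alpha^{(k)})^2,
\]
and then bound $\norm{z^{(k)} - x^\ast}^2$ by expanding the gradient step $z^{(k)} = x^{(k)} - \alpha^{(k)} \nabla F(x^{(k)})$ and invoking convexity of $F$ in the cross term to produce $-2\alpha^{(k)}[F(x^{(k)}) - F(x^\ast)] + (\alpha^{(k)})^2 \norm{\nabla F(x^{(k)})}^2$. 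Combining these two estimates converts the right-hand side into the full gap $-2\alpha^{(k)}[H(x^{(k)}) - H(x^\ast)]$ plus a residual containing both $\norm{\nabla F(x^{(k)})}^2$ and the linear term $\norm{\nabla F(x^{(k)})}$.

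The next step is to eliminate the linear term using Young's inequality in the form $2mc(\alpha^{(k)})^2 \norm{\nabla F(x^{(k)})} \le (\alpha^{(k)})^2 \norm{\nabla F(x^{(k)})}^2 + m^2 c^2 (\alpha^{(k)})^2$, leaving only a $2(\alpha^{(k)})^2 \norm{\nabla F(x^{(k)})}^2$ term plus $3m^2c^2(\alpha^{(k)})^2$. At this point I would invoke Lemma 5.10 (Gradient Bound via Smoothness) in its squared form $\norm{\nabla F(x^{(k)})}^2 \le 2\norm{\nabla F(x^\ast)}^2 + 4L(H(x^{(k)}) - H^\ast)$, along with the observation that since $-\nabla F(x^\ast) \in \partial G(x^\ast)$ and $G$ is $mc$-Lipschitz by Assumption 5.1(b), we get $\norm{\nabla F(x^\ast)} \le mc$. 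This converts $2(\alpha^{(k)})^2\norm{\nabla F(x^{(k)})}^2$ into at most $4m^2c^2(\alpha^{(k)})^2 + 8L(\alpha^{(k)})^2[H(x^{(k)}) - H^\ast]$.

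Putting everything together, the constants add up to $2+1+4 = 7$ on the $m^2 c^2 (\alpha^{(k)})^2$ terms, and the objective-gap coefficient becomes $-\alpha^{(k)}(2 - 8L\alpha^{(k)})$. The step-size condition $\alpha^{(k)} \le 1/(8L)$ ensures $2 - 8L\alpha^{(k)} \ge 1$, so the coefficient is at most $-\alpha^{(k)}$, yielding the desired inequality.

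The main obstacle is purely bookkeeping: there are three different sources of quadratic error (the proximal step's own residual, the linear-in-$\norm{\nabla F}$ cross term handled by Young's inequality, and the smoothness-based bound on $\norm{\nabla F(x^{(k)})}^2$), and they must combine to exactly $7m^2c^2$ while leaving a net descent coefficient of $\alpha^{(k)}$ rather than $2\alpha^{(k)}$. The constant $7$ and the cutoff $1/(8L)$ are tightly coupled: splitting the gradient-squared coefficient differently in Young's inequality would change both numbers in tandem, so care is needed to verify that the chosen split is consistent with the stated step-size restriction.
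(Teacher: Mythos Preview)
Your proposal is correct and follows essentially the same route as the paper's proof: start from \cref{lem:proximal_step}, expand the gradient step using convexity of $F$, absorb the linear $\norm{\nabla F(x^{(k)})}$ term via AM--GM/Young, then invoke \cref{lem:gradient_bound_smooth} (in its squared form, with $\norm{\nabla F(x^\ast)}\le mc$) to convert $2(\alpha^{(k)})^2\norm{\nabla F(x^{(k)})}^2$ into a $4m^2c^2(\alpha^{(k)})^2$ term plus an $8L\alpha^{(k)}$ contribution to the descent coefficient. The only cosmetic difference is ordering: the paper applies AM--GM before merging with the gradient-step bound, whereas you merge first and then apply Young's inequality; the resulting constants and the $2-8L\alpha^{(k)}\ge 1$ conclusion are identical.
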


\begin{proof}
Since $F$ is assumed to be smooth in this case, we have $g^{(k)} = \nabla F(x^{(k)})$ and \cref{lem:proximal_step} gives
\begin{align}
\E^k \norm{x^{(k+1)} - x^*}^2 &\le \norm{z^{(k)} - x^*}^2 - 2 \alpha^{(k)} \big[ G(x^{(k)}) - G(x^*) \big] \nonumber \\
&\qquad + 2 mc (\alpha^{(k)})^2 \norm{\nabla F(x^{(k)})} + 2 m^2 c^2 (\alpha^{(k)})^2 \nonumber \\
&\le \norm{z^{(k)} - x^*}^2 - 2 \alpha^{(k)} \big[ G(x^{(k)}) - G(x^*) \big] \label{aligned_1_lem:main_inequality_L_smooth} \\
&\qquad + (\alpha^{(k)})^2 \norm{\nabla F(x^{(k)})}^2 + 3 m^2 c^2 (\alpha^{(k)})^2, \nonumber
\end{align}
where the last equality uses AM-GM inequality on the cross term
\[
2 m c \norm{\nabla F(x^{(k)})} \le \norm{\nabla F(x^{(k)})}^2 + m^2 c^2.
\]
For the gradient step $z^{(k)} = x^{(k)} - \alpha^{(k)} \nabla F(x^{(k)})$, by convexity of $F$, we have
\[
\norm{z^{(k)} - x^*}^2 \le \norm{x^{(k)} - x^*}^2 - 2 \alpha^{(k)} \big[ F(x^{(k)}) - F(x^*) \big] + (\alpha^{(k)})^2 \norm{\nabla F(x^{(k)})}^2.
\]
Combining this bound with~\cref{aligned_1_lem:main_inequality_L_smooth}, we obtain
\begin{align}
\E^k \norm{x^{(k+1)} - x^*}^2 &\le \norm{x^{(k)} - x^*}^2 - 2 \alpha^{(k)} \big[ H(x^{(k)}) - H(x^*) \big] \label{aligned_2_lem:main_inequality_L_smooth} \\
&\qquad + 2 (\alpha^{(k)})^2 \norm{\nabla F(x^{(k)})}^2 + 3 m^2 c^2 (\alpha^{(k)})^2.\nonumber
\end{align}
\cref{lem:gradient_bound_smooth} gives
\begin{align}
\norm{\nabla F(x^{(k)})}^2 &\le 2 \norm{\nabla F(x^*)}^2 + 4L(H(x^{(k)}) - H(x^*)) \nonumber \\
&\le 2 (mc)^2 + 4L(H(x^{(k)}) - H(x^*)), \label{aligned_3_lem:main_inequality_L_smooth}
\end{align}
where the last inequality uses $-\nabla F(x^*) \in \partial G(x^*) = \tsum_{j=1}^m \partial G_j(x^*)$ and \cref{assumption:main}\ref{item:assum-reg}.
Combining~\cref{aligned_2_lem:main_inequality_L_smooth} and~\cref{aligned_3_lem:main_inequality_L_smooth}, we have
\begin{align*}
\E^k \norm{x^{(k+1)} - x^*}^2 &\le \norm{x^{(k)} - x^*}^2 - (2 - 8L (\alpha^{(k)})) \alpha^{(k)} \big[ H(x^{(k)}) - H(x^*) \big] \\
&\qquad + 7 m^2 c^2 (\alpha^{(k)})^2.
\end{align*}
Finally, since $\alpha^{(k)} \le 1/(8L)$, we have $2 - 8L (\alpha^{(k)}) \ge 1$, which completes the proof.
\end{proof}

\subsubsection{Main Result}

\begin{theorem}[Convergence under Smoothness of $F$]
\label{thm:L_smooth_convergence}
Let \cref{assumption:main} hold with Case~\ref{item:assum-loss}\ref{item:smooth-F} (smoothness of $F$). Let $\alpha^{(k)} = \alpha/\sqrt{k+1}$ with $0 < \alpha \le \frac{1}{8L}$. Then \BlockProx\ generates a sequence $\{x^{(k)}\}$ that satisfies, for any $k$,
$$
\E [H(\Bar{x}^{(k)}) - H^*] \le \frac{1}{\sqrt{k+1}} \left( \frac{1}{\alpha} \dist(x^{(0)}, \mathcal{X}^*)^2 + \BigO(\alpha c^2 m^2 \log k) \right),
$$
where $\Bar{x}^{k} = 1/(k+1) \sum_{t=0}^k x^{(t)}$, $\dist(x^{(0)}, \mathcal{X}^*) = \min_{x^* \in \mathcal{X}^*} \Norm{x^{(0)} - x^*}$, and the term $\BigO(\alpha c^2 m^2 \log k)$ can be made explicit as $7m^2 c^2 \alpha \left( 1 + \log(k+1) \right)$.
\end{theorem}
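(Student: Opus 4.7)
The plan is to convert the per-iteration bound in \Cref{lem:main_inequality_L_smooth} into a statement about the running average through telescoping and Jensen's inequality. First, I would take total expectation in \Cref{lem:main_inequality_L_smooth}, rearrange, and write
\[
\alpha^{(k)} \E\bigl[H(x^{(k)}) - H^*\bigr] \le \E \Norm{x^{(k)} - x^*}^2 - \E \Norm{x^{(k+1)} - x^*}^2 + 7m^2 c^2 (\alpha^{(k)})^2,
\]
for any $x^* \in \mathcal{X}^*$. Summing this inequality over $k = 0, 1, \ldots, K$ telescopes the squared-distance terms, leaving
\[
\sum_{k=0}^K \alpha^{(k)} \E\bigl[H(x^{(k)}) - H^*\bigr] \le \Norm{x^{(0)} - x^*}^2 + 7m^2 c^2 \sum_{k=0}^K (\alpha^{(k)})^2.
\]

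Because $\alpha^{(k)}$ is nonincreasing and each summand $\E[H(x^{(k)}) - H^*]$ is nonnegative by optimality of $x^*$, I can lower bound the left side by $\alpha^{(K)} \sum_{k=0}^K \E[H(x^{(k)}) - H^*]$. Jensen's inequality applied to the convex function $H$ then yields
\[
\E\bigl[H(\Bar{x}^{(K)}) - H^*\bigr] \le \tfrac{1}{K+1}\sum_{k=0}^K \E\bigl[H(x^{(k)}) - H^*\bigr],
\]
so combining these two inequalities gives
\[
(K+1)\alpha^{(K)} \E\bigl[H(\Bar{x}^{(K)}) - H^*\bigr] \le \Norm{x^{(0)} - x^*}^2 + 7m^2 c^2 \sum_{k=0}^K (\alpha^{(k)})^2.
\]

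To finish, I would substitute $\alpha^{(k)} = \alpha/\sqrt{k+1}$. The prefactor becomes $(K+1)\alpha^{(K)} = \alpha\sqrt{K+1}$, and the sum of squared stepsizes is bounded using the standard integral estimate $\sum_{k=0}^K 1/(k+1) \le 1 + \log(K+1)$ (reused from Step~2 of the proof of \cref{thm:main_theorem_1}), giving $\sum_{k=0}^K (\alpha^{(k)})^2 \le \alpha^2(1+\log(K+1))$. Dividing both sides by $\alpha\sqrt{K+1}$ and finally minimizing over $x^* \in \mathcal{X}^*$ yields the claimed bound with explicit constant $7m^2 c^2 \alpha(1 + \log(K+1))$.

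There is no serious obstacle here since \cref{lem:main_inequality_L_smooth} has already absorbed the key smoothness argument (including the AM-GM handling of $\|\nabla F(x^{(k)})\|$ and the step-size restriction $\alpha^{(k)} \le 1/(8L)$). The one subtle point worth flagging is the choice of averaging: with a diminishing stepsize, the natural bound produced by telescoping is on the \emph{weighted} average $\sum \alpha^{(k)} x^{(k)}/\sum \alpha^{(k)}$, and passing to the \emph{uniform} average $\Bar{x}^{(K)}$ requires the monotonicity step that pulls $\alpha^{(K)}$ out of the sum. This costs a constant factor compared to the weighted-average bound but preserves the $\tilde{\BigO}(1/\sqrt{K+1})$ rate stated in the theorem.
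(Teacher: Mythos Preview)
Your proposal is correct and follows essentially the same route as the paper's proof: take full expectation in \cref{lem:main_inequality_L_smooth}, telescope, lower-bound the left side using monotonicity of $\alpha^{(k)}$ together with Jensen's inequality, bound $\sum (\alpha^{(k)})^2$ by the harmonic-series estimate, divide through by $\alpha\sqrt{K+1}$, and take the infimum over $x^*\in\mathcal{X}^*$. Your remark about the uniform-versus-weighted average is apt but not something the paper dwells on.
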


\begin{proof}
From \cref{lem:main_inequality_L_smooth}, we have:
$$
\E^t \norm{x^{(t+1)} - x^*}^2  \le \norm{x^{(t)} - x^*}^2 - \alpha^{(t)} [ H(x^{(t)}) - H(x^*) ] + 7m^2 c^2 (\alpha^{(t)})^2.
$$
Taking full expectation and summing from $t=0$ to $k$:
\begin{align}
\sum_{t=0}^{k} \alpha^{(t)}\E[H(x^{(t)}) - H(x^*)] &\le \| x^{(0)} - x^*\|^2 + 7m^2 c^2 \sum_{t=0}^{k} (\alpha^{(t)})^2 \nonumber \\
&\le \| x^{(0)} - x^*\|^2 + 7m^2 c^2 \alpha^2 \left( 1 + \log(k+1) \right). \label{aligned_1_thm:L_smooth_convergence}
\end{align}

Since $\alpha^{(t)} = \alpha/\sqrt{t+1}$ is decreasing and $H(x^{(t)}) - H(x^*) \ge 0$, by Jensen's inequality, we have:
\begin{align*}
\sum_{t=0}^{k} \alpha^{(t)}\E[H(x^{(t)}) - H(x^*)] &\ge \sum_{t=0}^{k} \alpha^{(k)}\E[H(x^{(t)}) - H(x^*)] \\
&\ge \alpha \sqrt{k+1} \E [H(\Bar{x}^{(t)}) - H(x^*)]
\end{align*}
Combining this with \cref{aligned_1_thm:L_smooth_convergence}, dividing by $\alpha \sqrt{k+1}$, and taking infimum over all $x^* \in \mathcal{X}^*$, we obtain the desired result.
\end{proof}

\begin{remark}
Unlike the bounded subgradient case (\cref{thm:main_theorem_1}), this result provides convergence only for the averaged iterate. This reflects a fundamental difference in the analysis: when subgradients are unbounded, our proof technique cannot establish a last-iterate convergence without additional assumptions. On the other hand, many applications such as the ones in our experiments do not satisfy the bounded subgradient assumption, yet they satisfy $L$-smoothness. The averaging serves as an analytical tool to handle this case. 
\end{remark}

\subsection{Convergence under Strong Convexity}
\label{sec:strongly_convex}

We next analyze \BlockProx\ under strong convexity, where both $F$ and the composite objective $H$ inherit strong convexity.

\subsubsection{Supporting Lemma}

\begin{lemma}[Expected Descent under Smoothness and Strong Convexity of $F$]
\label{lem:main_inequality_strong_convex}
Let \cref{assumption:main} hold with Case~\ref{item:assum-loss}\ref{item:strong-convex-F} (smoothness and strong convexity of $F$). Then \BlockProx\ generates a sequence $\{x^{(k)}\}$ that satisfies, for any $k$ and any $y \in \R^{nd}$,
\begin{align*}
\E^k \norm{x^{(k+1)} - y}^2
&\le \left(1 - \alpha^{(k)} \mu + 3 (\alpha^{(k)})^2 L^2\right) \|x^{(k)} - y\|^2
\\ &\qquad - 2 \alpha^{(k)} (H(x^{(k)}) - H(y)) 
\\ &\qquad + \left(3m^2 c^2 + 2m \|\nabla F(y)\| c + 2 \|\nabla F(y)\|^2 \right) (\alpha^{(k)} )^2.
\end{align*}
\end{lemma}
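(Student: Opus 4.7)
The plan is to follow the same skeleton as the proof of \cref{lem:main_inequality_L_smooth}, but to replace the use of convexity with $\mu$-strong convexity of $F$ in the gradient step, and to bound $\nabla F(x^{(k)})$ relative to a general reference point $y$ rather than a minimizer $x^*$.

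First, I would invoke \cref{lem:proximal_step} with the fact that $F$ is smooth (so $g^{(k)} = \nabla F(x^{(k)})$), giving
\[
\E^k \norm{x^{(k+1)} - y}^2 \le \norm{z^{(k)} - y}^2 - 2\alpha^{(k)}(G(x^{(k)}) - G(y)) + 2mc(\alpha^{(k)})^2 \|\nabla F(x^{(k)})\| + 2m^2 c^2 (\alpha^{(k)})^2.
\]
Then I would expand the gradient step using $z^{(k)} = x^{(k)} - \alpha^{(k)} \nabla F(x^{(k)})$ and apply the strong convexity inequality $\langle \nabla F(x^{(k)}), x^{(k)} - y\rangle \ge F(x^{(k)}) - F(y) + \tfrac{\mu}{2}\|x^{(k)}-y\|^2$ to obtain
\[
\|z^{(k)} - y\|^2 \le (1-\alpha^{(k)}\mu)\|x^{(k)}-y\|^2 - 2\alpha^{(k)}(F(x^{(k)})-F(y)) + (\alpha^{(k)})^2 \|\nabla F(x^{(k)})\|^2.
\]
Adding these two inequalities collapses the loss terms into $-2\alpha^{(k)}(H(x^{(k)}) - H(y))$ and contributes $(1-\alpha^{(k)}\mu)\|x^{(k)}-y\|^2$ to the leading quadratic.

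Next I would handle the gradient terms by comparing $\nabla F(x^{(k)})$ to $\nabla F(y)$ via $L$-Lipschitzness. For the linear term, the triangle inequality gives $\|\nabla F(x^{(k)})\| \le \|\nabla F(y)\| + L\|x^{(k)}-y\|$, and I would split the resulting $2mcL(\alpha^{(k)})^2 \|x^{(k)}-y\|$ via AM--GM as $L^2(\alpha^{(k)})^2 \|x^{(k)}-y\|^2 + m^2 c^2 (\alpha^{(k)})^2$. For the quadratic term, I would use $\|\nabla F(x^{(k)})\|^2 \le 2\|\nabla F(y)\|^2 + 2L^2 \|x^{(k)}-y\|^2$. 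Adding the two contributions to $\|x^{(k)}-y\|^2$ yields exactly $3L^2(\alpha^{(k)})^2$, matching the stated coefficient.

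Collecting the remaining constants, the non-quadratic residual becomes $(2\|\nabla F(y)\|^2 + 2mc\|\nabla F(y)\| + 3m^2 c^2)(\alpha^{(k)})^2$, which is precisely the bound in the lemma. The only subtlety -- and the place where bookkeeping matters -- is choosing the right AM--GM splits so that the coefficient of $\|x^{(k)}-y\|^2$ comes out to $3L^2$ rather than $4L^2$, and so that the constants in front of $\|\nabla F(y)\|$ and $\|\nabla F(y)\|^2$ remain linear and quadratic respectively. I do not anticipate any conceptual obstacle; once the strong convexity bound replaces the plain convexity bound used in \cref{lem:main_inequality_L_smooth} and the reference point is kept as a general $y$ rather than $x^*$, the rest is a careful rearrangement.
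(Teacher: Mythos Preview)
Your proposal is correct and follows essentially the same approach as the paper: invoke \cref{lem:proximal_step}, expand the gradient step with the $\mu$-strong-convexity inequality, bound $\|\nabla F(x^{(k)})\|$ and $\|\nabla F(x^{(k)})\|^2$ via $L$-Lipschitzness relative to $y$, and apply AM--GM to the cross term $2mcL(\alpha^{(k)})^2\|x^{(k)}-y\|$ to obtain the $3L^2(\alpha^{(k)})^2$ coefficient and the stated constants. The paper's proof differs only in the order of exposition (it states the gradient bounds upfront before combining), not in substance.
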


\begin{proof}
Use the $L$-smoothness of $F$ to obtain the bound on the gradient:
\begin{equation}
\label{aligned_1_lem:main_inequality_strong_convex}
\|\nabla F(x^{(k)})\| \le \|\nabla F(x^{(k)}) - \nabla F(y)\| + \|\nabla F(y)\| \le L \|x^{(k)} - y\| + \|\nabla F(y)\|, 
\end{equation}
and the squared gradient bound
\begin{equation}
\label{aligned_2_lem:main_inequality_strong_convex}
\begin{aligned}
\|\nabla F(x^{(k)})\|^2 &= \|\nabla F(x^{(k)}) - \nabla F(y) + \nabla F(y)\|^2 \\
&\le 2 \|\nabla F(x^{(k)}) - \nabla F(y)\|^2 + 2 \|\nabla F(y)\|^2 \\
&\le 2 L^2 \|x^{(k)} - y\|^2 + 2 \|\nabla F(y)\|^2. 
\end{aligned}
\end{equation}
Since $F$ is differentiable, $g^{(k)} = \nabla F(x^{(k)})$ and \cref{lem:proximal_step} gives
\begin{align}
\E^k \norm{x^{(k+1)} - y}^2 &\le \norm{z^{(k)} - y}^2 - 2 \alpha^{(k)} \big[ G(x^{(k)}) - G(y) \big] \nonumber \\
&\qquad + 2 mc (\alpha^{(k)})^2 \norm{\nabla F(x^{(k)})} + 2 m^2 c^2 (\alpha^{(k)})^2 \nonumber \\
&\le \norm{z^{(k)} - y}^2 - 2 \alpha^{(k)} \big[ G(x^{(k)}) - G(y) \big] + 2 m^2 c^2 (\alpha^{(k)})^2 \label{aligned_3_lem:main_inequality_strong_convex} \\
&\qquad + 2 mc (\alpha^{(k)})^2 (L \|x^{(k)} - y\| + \|\nabla F(y)\|). \nonumber
\end{align}
For the gradient step, expand $z^{(k)} = x^{(k)} - \alpha^{(k)} \nabla F(x^{(k)})$ and use $\mu$-strong convexity:
\begin{equation}\label{aligned_12}
\begin{aligned}
\|z^{(k)} - y\|^2 &= \|x^{(k)} - y\|^2 - 2 \alpha^{(k)} \langle\nabla F(x^{(k)}), x^{(k)} - y\rangle + (\alpha^{(k)})^2 \|\nabla F(x^{(k)})\|^2 \\
&\le \|x^{(k)} - y\|^2 - 2 \alpha^{(k)} \left( F(x^{(k)}) - F(y) + \frac{\mu}{2} \|x^{(k)} - y\|^2 \right) \\
& \qquad + (\alpha^{(k)})^2 \left( 2L^2 \|x^{(k)} - y\|^2 + 2\|\nabla F(y)\|^2 \right) \\
&= \left( 1 - \alpha^{(k)} \mu + 2 (\alpha^{(k)})^2 L^2 \right) \|x^{(k)} - y\|^2 - 2 \alpha^{(k)} ( F(x^{(k)}) - F(y) ) \\
& \qquad + 2 (\alpha^{(k)})^2 \|\nabla F(y)\|^2.
\end{aligned}
\end{equation}

Combining these bounds with $\beta^{(k)} = m \alpha^{(k)}$ and applying the AM-GM inequality to the cross term $2 m (\alpha^{(k)} )^2 Lc \|x^{(k)} - y\|$ yields the result.
\end{proof}

\subsubsection{Main Result}
Then, using \Cref{lem:main_inequality_strong_convex}, we can establish the following convergence result, similar to standard SGD (see, e.g., \citep[Chapter 5.4.3]{Wright_Recht_2022}).
\begin{theorem}[Convergence under Smoothness and Strong Convexity of $F$]
\label{thm:sublinear-rate-convergence-strong-convexity}
Let \cref{assumption:main} hold with Case~\ref{item:assum-loss}\ref{item:strong-convex-F} (smoothness and strong convexity of $F$). Because $H$ is strongly convex, it has a unique minimizer $x^{*}$. Let
\begin{equation}
\label{eq:main_theorem_2_stepsize}
\alpha^{(k)} = \frac{2 \mu}{\mu^2 k + 12L^2}.
\end{equation}
Then \BlockProx\ generates a sequence $\{x^{(k)}\}$ that satisfies, for all $k$,
\begin{equation}
\label{eq:main_theorem_2}
\E \Norm{x^{(k)} - x^*}^2 \le \frac{A}{\mu^2 k + 12L^2},
\end{equation}
for constant $A$ that does not depend on $k$. Specifically,
$$
A \coloneqq \max \left\{ 12 L^2 \left( \Norm{x^{(0)} - x^*}^2 \right), \left( \mu^2 + 12L^2 \right) \left( \E \Norm{x^{(1)} - x^*}^2 \right), 4 \left( 1 + \frac{12 L^2}{\mu^2} \right)C_2 \right\},
$$
where $C_2 = 7m^2 c^2$.
\end{theorem}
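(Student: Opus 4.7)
The plan is to derive a one-step contraction from \Cref{lem:main_inequality_strong_convex} with $y=x^*$ and then establish the $O(1/k)$ rate by induction on $k$, with the three-way maximum in $A$ precisely calibrated to the three regimes that arise in the argument.

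First, I would apply \Cref{lem:main_inequality_strong_convex} with $y=x^*$. Because $x^*$ minimizes $H$, the nonnegative term $-2\alpha^{(k)}(H(x^{(k)})-H(x^*))$ can be discarded. The first-order optimality condition for~\cref{eq:general_problem} gives $-\nabla F(x^*)\in\partial G(x^*)=\sum_{j=1}^m\partial G_j(x^*)$, so \Cref{assumption:main}\ref{item:assum-reg} yields $\norm{\nabla F(x^*)}\le mc$. Substituting into the trailing constant $3m^2c^2+2m\norm{\nabla F(x^*)}c+2\norm{\nabla F(x^*)}^2$ produces exactly $C_2=7m^2c^2$. Writing $D_k\coloneqq\E\norm{x^{(k)}-x^*}^2$ and taking full expectation, the recursion reduces to
\[
D_{k+1}\le\bigl(1-\alpha^{(k)}\mu+3(\alpha^{(k)})^2L^2\bigr)D_k+C_2(\alpha^{(k)})^2.
\]

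Next, I would introduce $\gamma_k\coloneqq\mu^2k+12L^2$ so that the stepsize~\cref{eq:main_theorem_2_stepsize} becomes $\alpha^{(k)}=2\mu/\gamma_k$, and aim to prove $D_k\le A/\gamma_k$ by induction. The first and second terms in the definition of $A$ furnish the base cases $k=0$ and $k=1$ directly. For the inductive step from any $k\ge 1$, substituting the hypothesis and expanding $\alpha^{(k)}=2\mu/\gamma_k$ reduces the goal to
\[
\frac{A}{\gamma_k}\Bigl(1-\frac{2\mu^2}{\gamma_k}+\frac{12\mu^2 L^2}{\gamma_k^2}\Bigr)+\frac{4\mu^2C_2}{\gamma_k^2}\le\frac{A}{\gamma_k+\mu^2}.
\]
Applying the elementary bound $1/(\gamma_k+\mu^2)\ge 1/\gamma_k-\mu^2/\gamma_k^2$, canceling $A/\gamma_k$, and multiplying through by $\gamma_k^2/\mu^2$, this collapses to $A\bigl(1-12L^2/\gamma_k\bigr)\ge 4C_2$. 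For $k\ge 1$ we have $1-12L^2/\gamma_k=\mu^2k/\gamma_k$, so the requirement becomes $A\ge 4C_2(1+12L^2/(\mu^2k))$, which is tightest at $k=1$ and recovers the third term $4(1+12L^2/\mu^2)C_2$ in the definition of $A$.

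The main obstacle I anticipate is the degeneracy at $k=0$: because $\gamma_0=12L^2$, the factor $1-12L^2/\gamma_0$ vanishes, so the recursion alone cannot bridge $D_0$ to any bound of the required form on $D_1$. This is why the induction must start at $k=1$ and $D_1$ must be absorbed into the definition of $A$ as an explicit base case rather than derived from $D_0$. The three-way $\max$ in $A$ is therefore not cosmetic but essential: the first term pins down $k=0$, the second pins down $k=1$, and the third keeps the induction self-sustaining for all $k\ge 1$. Once the induction closes, the claimed bound~\cref{eq:main_theorem_2} follows immediately.
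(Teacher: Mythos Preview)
Your proposal is correct and follows essentially the same route as the paper: apply \Cref{lem:main_inequality_strong_convex} at $y=x^*$, bound $\norm{\nabla F(x^*)}\le mc$ via optimality to get $C_2=7m^2c^2$, drop the nonnegative $H$-gap term, and close an induction with base cases at $k=0,1$ and the third term of $A$ carrying the step for $k\ge1$. The only difference is cosmetic algebra—the paper multiplies through by $(Q_k+\mu^2)Q_k^3$ and factors the resulting polynomial, whereas you shortcut via $1/(\gamma_k+\mu^2)\ge 1/\gamma_k-\mu^2/\gamma_k^2$—but both arrive at the identical sufficient condition $A(1-12L^2/\gamma_k)\ge 4C_2$, tightest at $k=1$.
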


\begin{proof}
We prove~\cref{eq:main_theorem_2} by induction.

\emph{Step 1: Base cases and setup.}
The base case $k = 0$ and $k=1$ is given directly by
$$ A \ge 12 L^2 \left( \Norm{x^{(0)} - x^*}^2 \right), \quad A \ge \left( \mu^2 + 12L^2 \right) \left( \E \Norm{x^{(1)} - x^*}^2 \right) $$
Assume that~\cref{eq:main_theorem_2} holds for $k \ge 1$, i.e.,
\begin{equation}
\label{eq:main_theorem_2_hypothesis}
\E \Norm{x^{(k)} - x^*}^2 \le \frac{A}{\mu^2 k + 12L^2}.
\end{equation}
For convenience, we denote $B^{(k)} \coloneqq \E^k \Norm{x^{(k)} - x^*}^2$ and $Q_k \coloneqq \mu^2 k + 12L^2$ in this proof.
Then, the induction hypothesis~\cref{eq:main_theorem_2_hypothesis} and the stepsize~\cref{eq:main_theorem_2_stepsize} becomes
\begin{equation}
\label{aligned_00}
B^{(k)} \le \frac{A}{Q_k}, \quad \alpha^{(k)} = \frac{2 \mu}{Q_k}.
\end{equation}

\emph{Step 2: Inductive step via the main inequality.}
Since $x^*$ is the unique minimizer of $H$, by optimality conditions, we have
\[
0 \in \partial H(x^*) = \nabla F(x^*) + \sum_{j=1}^m \partial G_j(x^*),
\]
which implies there exists $g_j^* \in \partial G_j(x^*)$ for $j = 1, \ldots, m$ such that $\nabla F(x^*) = - \sum_{j=1}^m g_j^*$. Then by \cref{assumption:main}(b) and triangle inequality, we have
\begin{equation}
\label{grad_F_opt_bound}
\|\nabla F(x^*)\| \le \sum_{j=1}^m \| g_j^* \| \le mc.
\end{equation}
Now we can set $y = x^{*}$ in \cref{lem:main_inequality_strong_convex} and by~\cref{grad_F_opt_bound}, we have
\begin{align*}
\E^k\norm{x^{(k+1)} - x^*}^2 &\le \left(1 - \alpha^{(k)} \mu + 3 (\alpha^{(k)})^2 L^2\right) \|x^{(k)} - x^*\|^2 \\ 
&\qquad - 2 \alpha^{(k)} (H(x^{(k)}) - H(x^*)) + 7m^2 c^2 (\alpha^{(k)} )^2.
\end{align*}
Take full expectation to obtain a bound on $B^{(k+1)} \coloneqq \E \|x^{(k+1)} - x^*\|^2$:
\begin{align*}
B^{(k+1)} &\le \left( 1 - \alpha^{(k)} \mu + 3 \left(\alpha^{(k)}\right)^2 L^2 \right) B^{(k)} - 2 \alpha^{(k)} \left( \E \left[H(x^{(k)}) - H(x^{*})\right] \right) + \bigl(\alpha^{(k)}\bigr)^2 C_2 \\
&\le \left( 1 - \alpha^{(k)} \mu + 3 \left(\alpha^{(k)}\right)^2 L^2 \right) B^{(k)} +  \big(\alpha^{(k)}\big)^2 C_2.
\end{align*}
Then, by~\cref{aligned_00},
\[
\begin{aligned}
  B^{(k+1)} & \le \left( 1 - \left( \frac{2 \mu}{Q_k} \right) \mu + 3  \left( \frac{2 \mu}{Q_k} \right)^2 L^2 \right) \frac{A}{Q_k} + \left(\frac{2 \mu}{Q_k}\right)^2 C_2 \\
           & = \frac{Q_k^2 - 2Q_k\mu ^2 + 12\mu ^2L^2}{Q_k^3} A + \frac{4\mu^2}{Q_k^2}C_2.
\end{aligned}
\]

\emph{Step 3: Establishing the contraction.}
Multiplying the both sides of the above equation by \( (Q_k + \mu^2)Q_k^3 \), we have
\begin{align}
\left( Q_k + \mu^2 \right) Q_k^3 B^{(k+1)} &\le \left( Q_k + \mu^2 \right) \left( Q_k^2 - 2 \mu^2 Q_k + 12 \mu^2 L^2 \right) A + 4 \mu^2 Q_k \left( Q_k + \mu^2 \right) C_2 \nonumber \\
&= Q_k^3 A - \left( \mu^2 Q_k^2 + 2 \mu^4 Q_k - 12 \mu^2 L^2 Q_k - 12 \mu^4 L^2 \right) A \nonumber\\
&\quad \quad + 4 \mu^2 Q_k \left( Q_k + \mu^2 \right) C_2. \label{aligned_02}
\end{align}

\emph{Step 4: Verifying the key inequality.}
By the definition of $A$,
\begin{align*}
\left( \mu^2 Q_k^2 + 2 \mu^4 Q_k - 12 \mu^2 L^2 Q_k - 12 \mu^4 L^2 \right) A &> \left( \mu^2 Q_k^2 + \mu^4 Q_k - 12 \mu^2 L^2 Q_k - 12 \mu^4 L^2 \right) A \\
&= \mu^2 \left( Q_k + \mu^2 \right) \left( Q_k - 12L^2 \right) A, 
\end{align*}
and
\begin{align*}
\left( Q_k - 12L^2 \right) A &= \mu^2 k A \ge \mu^2 k \cdot 4 \left( 1+ \frac{12L^2}{\mu^2} \right)C_2 \\
&= 4 \left( \mu^2 k + 12 L^2 k \right) C_2 \\
&\ge 4 \left( \mu^2 k + 12 L^2 \right) C_2 = 4 Q_k C_2,
\end{align*}
where the second inequality holds because $k \ge 1$. Thus,
\begin{equation}
\label{aligned_03}
\left( \mu^2 Q_k^2 + 2 \mu^4 Q_k - 12 \mu^2 L^2 Q_k - 12 \mu^4 L^2 \right) A  \ge 4 \mu^2 Q_k \left( Q_k + \mu^2 \right) C_2.
\end{equation}

\emph{Step 5: Completing the induction.}
Combining \cref{aligned_02} and \cref{aligned_03}, we have
\[
\left( Q_k + \mu^2 \right) Q_k^3 B^{(k+1)} \le Q_k^3 A,
\]
which is equivalent to
\[
B^{(k+1)} \le \frac{A}{Q_k + \mu^2} = \frac{A}{\mu^2 (k+1) + 12L^2},
\]
which completes the proof.
\end{proof}

\subsection{Linear Convergence to a Neighborhood}
\label{sec:linear_convergence}

This next result establishes linear convergence to a neighborhood of the solution under the assumption of smoothness and strong convexity of $F$. The radius of the neighborhood is proportional to $c\sqrt{m\alpha/\mu}$, where $c$ bounds the regularizer subgradients $\partial G_j$, $m$ is the number of regularizer components $j=1,\ldots,m$, $\alpha$ is the stepsize, and $\mu$ is the strong convexity parameter. The neighborhood results from the algorithm's randomized local coordination mechanism: each node samples only one regularizer component per iteration, which creates an approximation error that scales with the stepsize and the number of components.

\begin{theorem}[Linear Convergence to a Neighborhood]
\label{thm:linear-rate-neighborhood}
Let \cref{assumption:main} hold with Case~\ref{item:assum-loss}\ref{item:strong-convex-F} (smoothness and strong convexity of $F$). Because $H$ is strongly convex, it has a unique minimizer $x^{*}$.
Let $\alpha^{(k)}\equiv\alpha$ be a constant with $0 < \alpha < (2\mu) / (3L^2)$. Then \BlockProx\ generates a sequence $\{x^{(k)}\}$ that satisfies, for any $k$,
\begin{equation}\label{eq:dist_linear}
\E \Norm{x^{(k)}-x^*}^{2} \le (1 - \Gamma)^k \Norm{x^{(0)}-x^*}^{2} + \frac{\delta}{\Gamma}
\end{equation}
where
\begin{equation} \label{eq:gamma-delta-in-theorem-linear-rate-neighborhood}
  \Gamma \coloneqq 2\alpha\mu-3\alpha^{2}L^{2} \in (0,1),
  \qquad
  \delta \coloneqq 7 m^2 c^2 \alpha^2 = \BigO\big((m\alpha c)^2\big).
\end{equation}
Moreover, the objective gap obeys the bound
\begin{equation}\label{eq:obj_linear}
   \E[H(x^{(k)})-H(x^*)]\le
   \Bigl(\tfrac{L}{2}\|x^{(0)}-x^*\|^{2}+2mc\|x^{(0)}-x^*\|\Bigr)(1-\Gamma)^{k/2}
    + \frac{L\delta}{2\Gamma}+2mc \sqrt{\frac{\delta}{\Gamma}}.
  \end{equation}
\end{theorem}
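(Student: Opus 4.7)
The plan is to derive a one-step contraction from \cref{lem:main_inequality_strong_convex} and iterate it. First, I would specialize the lemma to $y=x^*$. Since $x^*$ is a minimizer of $H=F+G$ with $G=\sum_j G_j$, the optimality condition yields $-\nabla F(x^*)\in\sum_j\partial G_j(x^*)$, and \cref{assumption:main}\ref{item:assum-reg} together with the triangle inequality give $\|\nabla F(x^*)\|\le mc$. Substituting this into the trailing constant $3m^2c^2+2m\|\nabla F(x^*)\|c+2\|\nabla F(x^*)\|^2$ collapses it to $7m^2c^2$, which is exactly $\delta/\alpha^2$.

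The key step for the distance bound is to absorb the non-positive objective-gap term from the right-hand side of the lemma into an enhanced contraction. Since $F$ is $\mu$-strongly convex and $G$ is convex, $H$ is $\mu$-strongly convex, so $H(x^{(k)})-H(x^*)\ge(\mu/2)\|x^{(k)}-x^*\|^2$. Combining this with \cref{lem:main_inequality_strong_convex} produces the clean recursion
\begin{equation*}
\E^k\|x^{(k+1)}-x^*\|^2\le(1-\Gamma)\|x^{(k)}-x^*\|^2+\delta,
\end{equation*}
with $\Gamma=2\alpha\mu-3\alpha^2L^2$. The stepsize condition $\alpha<2\mu/(3L^2)$ guarantees $\Gamma>0$, and since the standard inequality $\mu\le L$ forces $\Gamma\le\mu^2/(3L^2)\le 1/3$, we have $\Gamma\in(0,1)$ as required by the theorem. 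Taking full expectations via the tower property and unrolling this scalar recursion gives \cref{eq:dist_linear} through the geometric sum $\sum_{t=0}^{k-1}(1-\Gamma)^t\le 1/\Gamma$.

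For the objective bound \cref{eq:obj_linear}, I would combine $L$-smoothness of $F$ with Lipschitz continuity of $G$. The smoothness descent inequality yields $F(x^{(k)})-F(x^*)\le\langle\nabla F(x^*),x^{(k)}-x^*\rangle+(L/2)\|x^{(k)}-x^*\|^2$, and Cauchy--Schwarz combined with the bound $\|\nabla F(x^*)\|\le mc$ controls the inner product by $mc\|x^{(k)}-x^*\|$. Because each $G_j$ is $c$-Lipschitz, $G$ is $mc$-Lipschitz, so $G(x^{(k)})-G(x^*)\le mc\|x^{(k)}-x^*\|$. Summing these pointwise bounds produces $H(x^{(k)})-H(x^*)\le(L/2)\|x^{(k)}-x^*\|^2+2mc\|x^{(k)}-x^*\|$. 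Taking expectation, applying Jensen's inequality $\E\|x^{(k)}-x^*\|\le\sqrt{\E\|x^{(k)}-x^*\|^2}$, substituting the bound from \cref{eq:dist_linear}, and using $\sqrt{a+b}\le\sqrt{a}+\sqrt{b}$ together with $(1-\Gamma)^k\le(1-\Gamma)^{k/2}$ assembles into exactly the form of \cref{eq:obj_linear}.

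The main obstacle is recognizing the strong-convexity absorption trick in the second step. \cref{lem:main_inequality_strong_convex} by itself only supplies a contraction coefficient of $1-\alpha\mu+3\alpha^2L^2$, which is a contraction over the narrower range $\alpha<\mu/(3L^2)$ and would yield a suboptimal $\Gamma$. Folding the $-2\alpha(H(x^{(k)})-H(x^*))$ term into the quadratic via strong convexity of $H$ is what doubles the $\alpha\mu$ coefficient and gives both the correct $\Gamma=2\alpha\mu-3\alpha^2L^2$ and the wider stepsize range $\alpha<2\mu/(3L^2)$ claimed in the theorem; without it, the advertised rate is unreachable.
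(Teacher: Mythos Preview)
Your proposal is correct and follows essentially the same route as the paper's proof: specialize \cref{lem:main_inequality_strong_convex} at $y=x^*$, bound $\|\nabla F(x^*)\|\le mc$ via optimality, absorb the $-2\alpha(H(x^{(k)})-H(x^*))$ term through $\mu$-strong convexity of $H$ to obtain the recursion with contraction factor $1-\Gamma$, unroll, and then combine smoothness of $F$ with $mc$-Lipschitzness of $G$ plus Jensen and $\sqrt{a+b}\le\sqrt a+\sqrt b$ for the objective bound. The only cosmetic difference is that the paper verifies $\Gamma\in(0,1)$ by completing the square in $\alpha$, whereas you invoke $\mu\le L$ directly; both arguments land on $\Gamma\le\mu^2/(3L^2)$.
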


\begin{remark}
    The distance recursion~\cref{eq:dist_linear} yields a linear (geometric) rate, which is similar to standard convergence rate of stochastic gradient descent method with constant stepsize for strongly convex and $L$-smooth objective functions~\citep[Theorem 5.8]{GG2023}.  The objective gap inherits the same rate up to a square-root factor that is immaterial for geometric convergence. The residual term $2mc\sqrt{\delta/\Gamma}$ reflects the first-order approximation error from the non-smooth regularizer near optimality. 
\end{remark}

\begin{proof}
Following the analysis in \cref{thm:sublinear-rate-convergence-strong-convexity}, we have $\| \nabla F(x^*) \| \le mc$ as~\cref{grad_F_opt_bound}. 
With $y=x^*$ in \cref{lem:main_inequality_strong_convex} and taking expectation we obtain
\begin{align*}
\E \norm{x^{(k+1)} - x^*}^2 &\le \left(1 - \alpha \mu + 3 \alpha^2 L^2 \right) \E \norm{x^{(k)} - x^*}^2 \\
&\qquad - 2 \alpha \E [H(x^{(k)}) - H(x^*)] + 7 m^2 c^2 \alpha^2 \\
&\le \left(1 - 2\alpha \mu + 3 \alpha^2 L^2 \right) \E \norm{x^{(k)} - x^*}^2 + 7 m^2 c^2 \alpha^2,
\end{align*}
where the second inequality uses the strong convexity of $H$ and the optimality of $x^*$. The above inequality can be written as
\begin{equation}
\label{eq:dist_linear_1}
\E \norm{x^{(k+1)} - x^*}^2 \le (1 - \Gamma) \E \norm{x^{(k)} - x^*}^2 + \delta,
\end{equation}
where $\Gamma$ and $\delta$ are defined in~\cref{eq:gamma-delta-in-theorem-linear-rate-neighborhood}.
Here, since $0 < \alpha < (2\mu) / (3L^2)$, we have 
$$ \Gamma = 2\alpha\mu - 3\alpha^2 L^2 = \frac{\mu^2}{3L^2} - 3 L^2 \left( \alpha - \frac{\mu}{3L^2} \right)^2 \in (0, \mu^2 / (3L^2) ] \subseteq (0, 1). $$
Unrolling the recursion~\cref{eq:dist_linear_1} gives~\cref{eq:dist_linear}.

Then, use the $L$-smoothness of $F$ and the $c$-Lipschitz continuity of $G_j$:
\begin{align*}
  F(x^{(k)})-F(x^*) &\le \langle\nabla F(x^*),x^{(k)}-x^*\rangle+\tfrac{L}{2}\|x^{(k)}-x^*\|^{2} \\
  G(x^{(k)})-G(x^*) &\le mc\,\|x^{(k)}-x^*\|.
\end{align*}
Because $H=F+G$,
\begin{align*}
  H(x^{(k)}) - H(x^*) &= [F(x^{(k)}) - F(x^*)] + [G(x^{(k)}) - G(x^*)] \nonumber\\
  &\le \langle\nabla F(x^*), x^{(k)}-x^*\rangle + \tfrac{L}{2}\|x^{(k)}-x^*\|^2 + mc \|x^{(k)}-x^*\| \nonumber\\
  &\le \|\nabla F(x^*)\| \cdot \|x^{(k)}-x^*\| + \tfrac{L}{2}\|x^{(k)}-x^*\|^2 + c\|x^{(k)}-x^*\| \nonumber\\
  &\le 2mc \|x^{(k)}-x^*\| + \tfrac{L}{2}\|x^{(k)}-x^*\|^2,
\end{align*}
where the second inequality uses Cauchy-Schwarz inequality and the last inequality uses $\|\nabla F(x^*)\| \le mc$ as in \cref{grad_F_opt_bound}.
Taking expectations gives that for any $k$,
\begin{align}
\E [H(x^{(k)}) - H(x^*)] &\le \tfrac{L}{2} \E\|x^{(k)}-x^*\|^2 + 2mc \E \|x^{(k)}-x^*\| \nonumber \\
&\le \tfrac{L}{2} \E \|x^{(k)}-x^*\|^2 + 2mc \sqrt{\E\|x^{(k)}-x^*\|^2}, \label{eq:H_bound}
\end{align}
where the second inequality uses Jensen's inequality for the concave square-root function.
Insert the bound~\cref{eq:dist_linear} into \cref{eq:H_bound}, and use the inequality $\sqrt{a+b}\le\sqrt{a}+\sqrt{b}$ to obtain~\cref{eq:obj_linear}.
\end{proof}

The preceding theorem characterizes convergence for arbitrary partially separable regularizers. We now specialize to graph-guided regularizers, whose structure yields sharper bounds. Each regularizer component $G_j$ corresponds to a graph edge, giving $m = |\cE|$. The \RandomEdge\ algorithm inherits linear convergence while using only two messages per iteration in expectation of the graph size $n$ or topology. The following corollary shows how graph topology determines the convergence neighborhood.

\begin{corollary}[Linear Convergence for \RandomEdge]
\label{cor:randomedge_linear}
Consider the graph-guided regularization problem~\cref{eq:multi_task_with_graph} on a communication graph $\cG = (\cV, \cE)$ with $n$ nodes and $m = |\cE|$ edges. Under the conditions of \cref{thm:linear-rate-neighborhood}, the \RandomEdge\ algorithm achieves
$$
  \E \|x^{(k)}-x^*\|^{2}\le (1-\Gamma)^{k}\,\norm{x^{(0)}-x^*}^{2}+\BigO(\alpha c^2|\cE|^2).
$$
For large graphs with $|\cE| \gg 1$ and small step sizes satisfying $\alpha L^2 \ll \mu$, the convergence rate simplifies to $\Gamma \approx 2\alpha\mu$.
\end{corollary}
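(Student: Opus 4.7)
The plan is to derive the corollary as an essentially immediate specialization of \Cref{thm:linear-rate-neighborhood} using the equivalence in \Cref{prop:equivalence}. First, by \Cref{prop:equivalence}, \RandomEdge\ and \BlockProx\ generate identical iterates when applied to the graph-guided problem~\cref{eq:multi_task_with_graph}, in which the regularizer components are indexed by edges $e \in \cE$ with support $S_e = \{i,j\}$ of size $2$, and the total number of components is $m = |\cE|$. Consequently, the conclusion of \Cref{thm:linear-rate-neighborhood} transfers verbatim to \RandomEdge\ after replacing $m$ by $|\cE|$; in particular,
\[
\E \norm{x^{(k)} - x^*}^2 \le (1-\Gamma)^k \norm{x^{(0)} - x^*}^2 + \frac{\delta}{\Gamma},
\]
with $\Gamma = 2\alpha\mu - 3\alpha^2 L^2$ and $\delta = 7|\cE|^2 c^2 \alpha^2$, as specified in~\cref{eq:gamma-delta-in-theorem-linear-rate-neighborhood}.

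Next I would simplify the neighborhood term $\delta/\Gamma$ into the stated $\BigO(\alpha c^2 |\cE|^2)$ form. Factoring one power of $\alpha$ from $\Gamma$ gives
\[
\frac{\delta}{\Gamma} = \frac{7|\cE|^2 c^2 \alpha}{2\mu - 3\alpha L^2}.
\]
Because the stepsize constraint $0 < \alpha < 2\mu/(3L^2)$ forces $2\mu - 3\alpha L^2 > 0$ and in fact bounded below by a positive constant depending only on $\mu$ and $L$ (for instance, by $\mu$ whenever $\alpha \le \mu/(3L^2)$), this quotient is at most a $\mu$-dependent multiple of $\alpha c^2 |\cE|^2$. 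Substituting this into the contraction inequality above yields the first displayed bound of the corollary.

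Finally, to justify the simplification $\Gamma \approx 2\alpha\mu$ when $\alpha L^2 \ll \mu$, I would rewrite
\[
\Gamma = 2\alpha\mu\left(1 - \frac{3\alpha L^2}{2\mu}\right),
\]
so that whenever the parenthesized correction $3\alpha L^2/(2\mu)$ is small, the leading term dominates. There is no genuine technical obstacle in this argument: the proof is a routine specialization and constant-tracking exercise, and the only care required is isolating the $\mu$-dependence from the stepsize constraint to produce the clean $\BigO(\alpha c^2 |\cE|^2)$ statement. Notably, the bound is \emph{topology-independent} beyond its dependence on $|\cE|$, reflecting the fact that \RandomEdge's randomized local coordination achieves expected communication cost of exactly $2$ per iteration regardless of the graph structure (\Cref{cor:randedge_communication}).
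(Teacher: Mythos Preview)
Your proposal is correct and follows essentially the same route as the paper: invoke \Cref{prop:equivalence} to transfer \Cref{thm:linear-rate-neighborhood} to \RandomEdge\ with $m=|\cE|$, then absorb the $\mu,L$-dependence of $\delta/\Gamma$ into the $\BigO(\alpha c^2|\cE|^2)$ constant and read off $\Gamma\approx 2\alpha\mu$ when $\alpha L^2\ll\mu$. Your write-up is in fact more explicit than the paper's about how the stepsize constraint keeps $2\mu-3\alpha L^2$ bounded away from zero.
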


\begin{proof}
By \cref{prop:equivalence}, \RandomEdge\ is equivalent to \BlockProx\ for graph-guided regularizers where $m = |\cE|$.  Substitute this expression into the general result~\cref{eq:gamma-delta-in-theorem-linear-rate-neighborhood} gives $\delta = \BigO\big((m\alpha c)^{2}\big) = \BigO\big((|\cE|\alpha c)^2\big)$.
When $|\cE| \gg 1$ and $\alpha L^2 \ll \mu$, the convergence rate simplifies to $\Gamma \approx 2\alpha\mu$ and the asymptotic expected squared distance is $\BigO\big(\alpha c^{2}|\cE|^2\big)$.
\end{proof}

\begin{remark}
The corollary exposes a trade-off: \RandomEdge\ uses constant expected communication per iteration, but the asymptotic expected squared distance scales with $|\cE|^2$. Sparse graphs ($|\cE| = \BigO(n)$) yield squared distances of order $\BigO(\alpha c^2 n^2)$, while dense graphs ($|\cE| = \BigO(n^2)$) give $\BigO(\alpha c^2 n^4)$. More edges mean larger convergence neighborhoods through which to randomly coordinate locally.
\end{remark}

\section{Numerical Experiments}
\label{sec:numerical_experiments}

In this section, we evaluate our algorithms through numerical experiments on synthetic and real-world datasets with graph-guided regularizers. We compare \RandomEdge\ against four established methods:
\begin{itemize}
    \item \textsc{ADMM} for network LASSO~\citep{HLB2015};
    \item \textsc{ProxAvg} (proximal averaging)~\citep{Y2013};
    \item \textsc{DSGD} (distributed subgradient descent)~\citep{nedic_distributed_2009};
    \item \textsc{Walkman}~\citep{MYHGSY2020}.
\end{itemize}
Each baseline algorithm has specific applicability constraints: \textsc{ADMM} for network LASSO only handles regularizers of the form $g_{ij}(x_i, x_j) = w_{ij} \norm{x_i - x_j}_2$, where $w_{ij} \geq 0$ is the weight of edge \( (i, j) \); \textsc{ProxAvg} supports general graph-guided regularizers as in~\cref{eq:graph_guided_regularizer}; while \textsc{DSGD} and \textsc{Walkman} are limited to consensus optimization problems.

To apply consensus optimization algorithms to multitask learning problems, every node in the graph needs to maintain copies of all variables across all nodes, and the regularizer is embedded in the local loss function.
This approach has two drawbacks: high \( \BigO(nd) \) memory complexity for each node, and the nonsmoothness of the local loss function when the regularizer is nonsmooth, which prevents many standard consensus optimization methods from being applied, such as Scaffnew~\citep{MMSR2022}, DGD~\citep{YLY2016} and ESDACD~\citep{HBM2019}.

The graph-guided multitask learning problem in~\cref{eq:multi_task_with_graph} with a communication network \( (\mathcal{V}, \mathcal{E}) \) can be equivalently written as the consensus optimization problem
\begin{equation}
  \label{eq:equ-cons-prob}
   \min_{x \in \R^{nd}} \textstyle \sum_{i \in \mathcal{V}}  \{ f_i(x) + \sum_{(i, j) \in \mathcal{E}} \half g_{ij}(x_i, x_j) \},
\end{equation}
where \( x = (x_1, \dots , x_n) \in \R^{nd} \) represents the concatenated variable containing copies of all variables across all nodes, and \( f_i(x) = f_i(x_i) \). Observe the factor \( \half \) before \( g_{ij} \) that occurs because each edge is counted twice in the two endpoints.
We can then apply \textsc{DSGD} to solve~\cref{eq:equ-cons-prob}.
\textsc{DSGD} requires a doubly stochastic \textit{mixing matrix} to perform the mixing step. We use the Metropolis-Hastings weight matrix~\citep{XBL2006} as the mixing matrix, which is common in the decentralized optimization literature~\citep{SLWY2015}.
We apply \textsc{Walkman} to solve the following regularized problem, which is equivalent to~\cref{eq:multi_task_with_graph}:
\[
  \min_{x \in \R^{nd}} \textstyle \tfrac{1}{|\mathcal{V}|} \sum_{i \in \mathcal{V}} f_i(x) + \tfrac{1}{|\mathcal{V}|} G(x),
\]
where \( f_i(x) = f_i(x_i) \).  Note that the problem is scaled by \( 1 / |\mathcal{V}| \) because in the objective function defined by~\citet{MYHGSY2020}, the local loss function is normalized by \( |\mathcal{V}| \). To ensure the optimal solution set is the same, we also scale the regularizer so that only the optimal value is scaled.
\textsc{Walkman} requires a transition probability matrix \( P \) to choose the next agent in each iteration. We follow \citet[Example~2]{MYHGSY2020} and choose \( P = A / d_{\max} \), where \( A \) is the adjacency matrix of the graph, \( d_{\max} \) is the maximum degree in the graph.

We focus on optimization and communication complexity, comparing objective function values achieved by different algorithms after a fixed number of \emph{communications}.
Let $H^*$ denote the optimal objective value obtained by centralized convex solvers. For each algorithm, we measure the optimality gap $H(x^{(t)}) - H^*$ after $t$ communications.
We define one \emph{communication} as a single \( \R^d \) variable transmission along edge $(i,j) \in \mathcal{E}$.
With $m = |\mathcal{E}|$ denoting the number of edges in the communication network, \cref{tab:communication_comparison} summarizes the communication cost per iteration for each algorithm.

All algorithms are implemented in Julia 1.11, with experiments conducted on a laptop with an Apple M4 Pro processor (14 cores) and 48GB RAM.

\subsection{Synthetic Data: Least Squares Benchmarks}
\label{sec:synthetic_benchmarks}

For the synthesized dataset, we consider the network LASSO settings as in~\cite{HLB2015}.
The communication network contains an unknown number of node groups, where nodes within each group share the same ground truth parameters.
The objective is to learn different models simultaneously without prior knowledge of group structure.

Given the number of groups and the number of nodes in each group, we generate a communication network \( (\cV, \cE) \) with \( \abs{\cV} = n \) and \( \abs{\cE} = m \), where the nodes in the same group are connected with probability \( 0.5 \) while the nodes in different groups are connected with probability \( 0.01 \).
For node \( i \), the data \( A_i \in \R^{15 \times 21} \) is generated from a standard normal distribution with a bias term, and the output is computed by \( b_i = A_ix_i^{*} + \epsilon_i \in \R^{15} \), where \( x_i^{*} \in \R^{21} \) is the ground truth of \( i \)-th node and \( \epsilon_i \in \R^{15} \) is small white noise.
Then, the optimization problem is
\begin{equation}
  \label{eq:2-norm-LS}
  \min_{x_1, \dots , x_n \in \R^{21}} \textstyle \{ \sum_{i=1}^n \frac{1}{2} \Norm{A_ix_i - b_i}_2^2 + \lambda \sum_{(i, j) \in \cE} \Norm{x_i - x_j}_2 \}.
\end{equation}
Moreover, to show that our algorithm can handle different regularizers, we also consider the following optimization problem with a different regularizer
\begin{equation}
  \label{eq:1-norm-LS}
  \min_{x_1, \dots , x_n \in \R^{21}} \textstyle \{ \sum_{i=1}^n \frac{1}{2} \Norm{A_ix_i - b_i}_2^2 + \lambda \sum_{(i, j) \in \cE} \Norm{x_i - x_j}_1 \}.
\end{equation}
Note that these two problems are not strongly convex and the regularizers are nonseparable and nonsmooth. Therefore, the methods LoCoDL \citep{CMR2025} and CoCoA \citep{SFMTJJ2018} are not applicable here.

We consider three communication network settings:
\begin{enumerate}[label=(\roman*)]
  \item \( 5 \) groups with \( 10, 17, 18, 18, 12 \) nodes, respectively;
  \item \( 1 \) group with \( 20 \) nodes;
  \item \( 1 \) group with \( 40 \) nodes, where the communication network is fully connected (i.e., every pair of nodes is connected by an edge).
\end{enumerate}
For each setting, we fix \( 10{,}000 \) communications and perform \( 100 \) experiments to solve~\cref{eq:2-norm-LS} and~\cref{eq:1-norm-LS} with \( \lambda = 1 \).

We slightly tune the stepsizes to ensure all algorithms perform well.
Specifically, our algorithm uses \( \alpha^{(t)} = \tfrac{0.01}{\sqrt{t+1}} \) and \( \beta^{(t)} = \alpha^{(t)} m \); \textsc{ADMM} uses \( \rho = 10^{-4} + \sqrt{\lambda / 2} \) as suggested in~\cite{HLB2015}; \textsc{ProxAvg} and \textsc{DSGD} use stepsize \( 10^{-2} \); \textsc{Walkman} uses \( \beta = 10{,}000 \).
The results based on the \( 100 \) experiments are shown in \cref{fig:2-norm-LS,fig:1-norm-LS}, where the plotted curve represents the mean convergence error over the \( 100 \) random problem instances, with error bars denoting \( \pm 1 \) standard deviation at each iteration.
As shown, our algorithm (blue solid line) converges to the optimal solution much faster than other algorithms at the same communication cost.
Because the per-iteration communication cost is high, \textsc{ADMM}, \textsc{ProxAvg}, and \textsc{DSGD} perform only a few iterations and remain far from the optimal solution.
Although \textsc{Walkman} requires only \( n \) communications per iteration by sending an \( \R^{nd} \) variable via an edge, it updates only one node per iteration, which causes the objective function value to decrease slowly.
Moreover, the mean communications per iteration for \BlockProx\ across the 100 experiments for the three settings and two regularizers (1-norm and 2-norm) are \( 1.999532 \), \( 1.994982 \); \( 1.995706 \), \( 2.003392 \); \( 1.997192 \), \( 2.00109 \), respectively, which matches our communication complexity in~\cref{cor:randedge_communication}.

\begin{figure*}[t]
	\centering
	\subfloat[5 groups (10, 17, 18, 18, 12 nodes)]{\label{fig:2-norm-LS-5}
		\includegraphics[width=0.3\linewidth]{./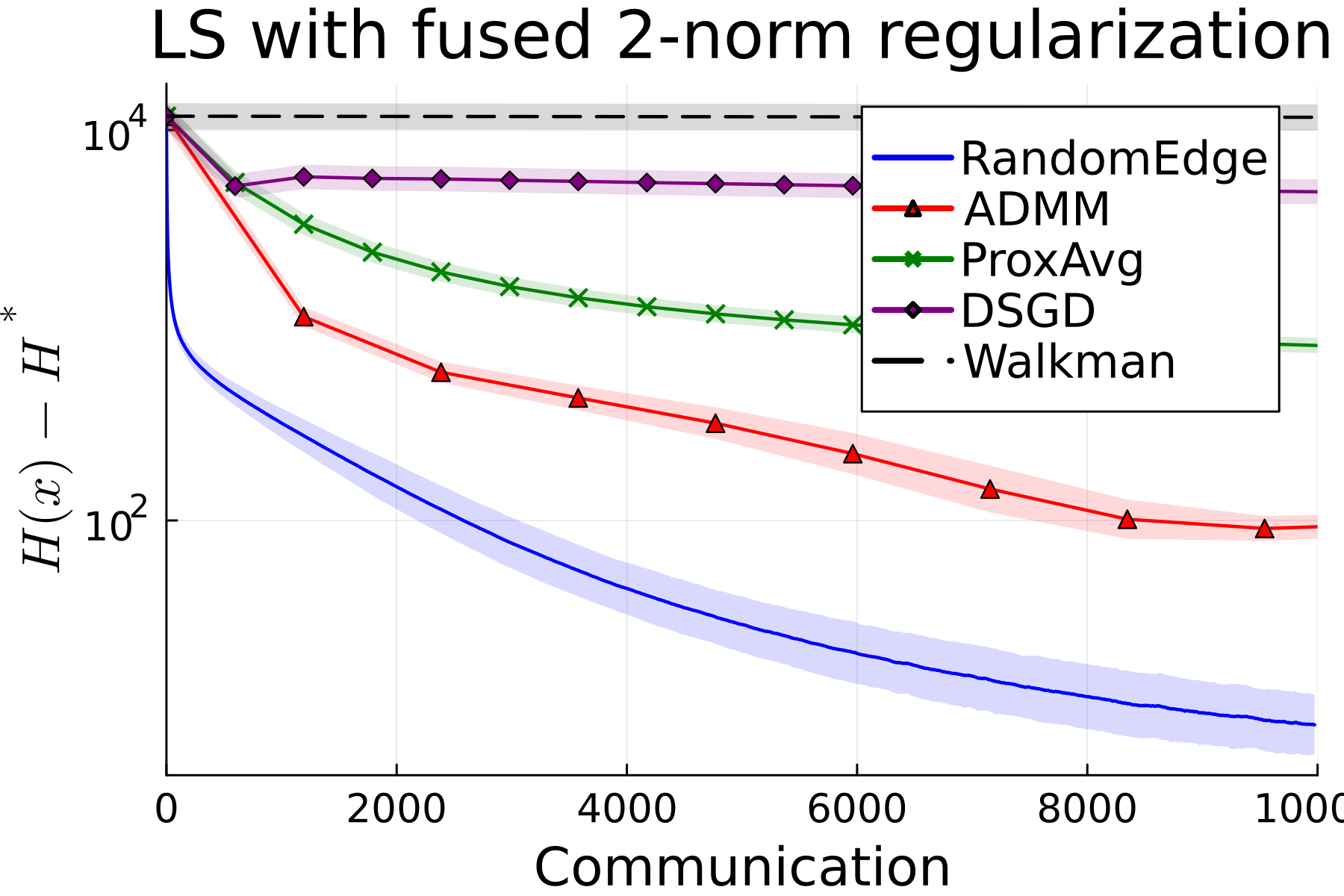}
	}
	\subfloat[1 group (20 nodes)]{\label{fig:2-norm-LS-1}
		\includegraphics[width=0.3\linewidth]{./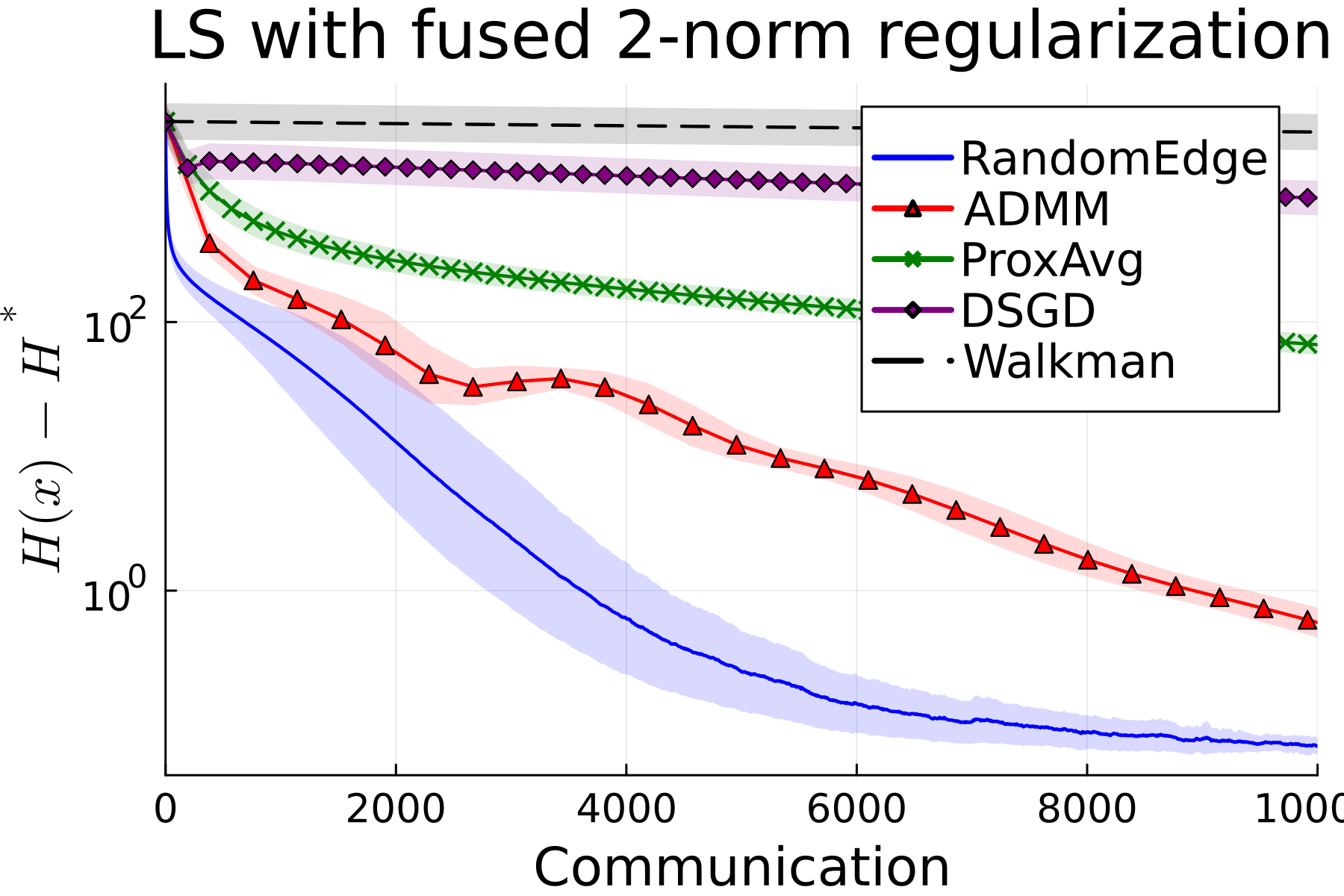}
	}
	\subfloat[1 group (40 nodes, complete graph)]{\label{fig:2-norm-LS-1-fully}
		\includegraphics[width=0.3\linewidth]{./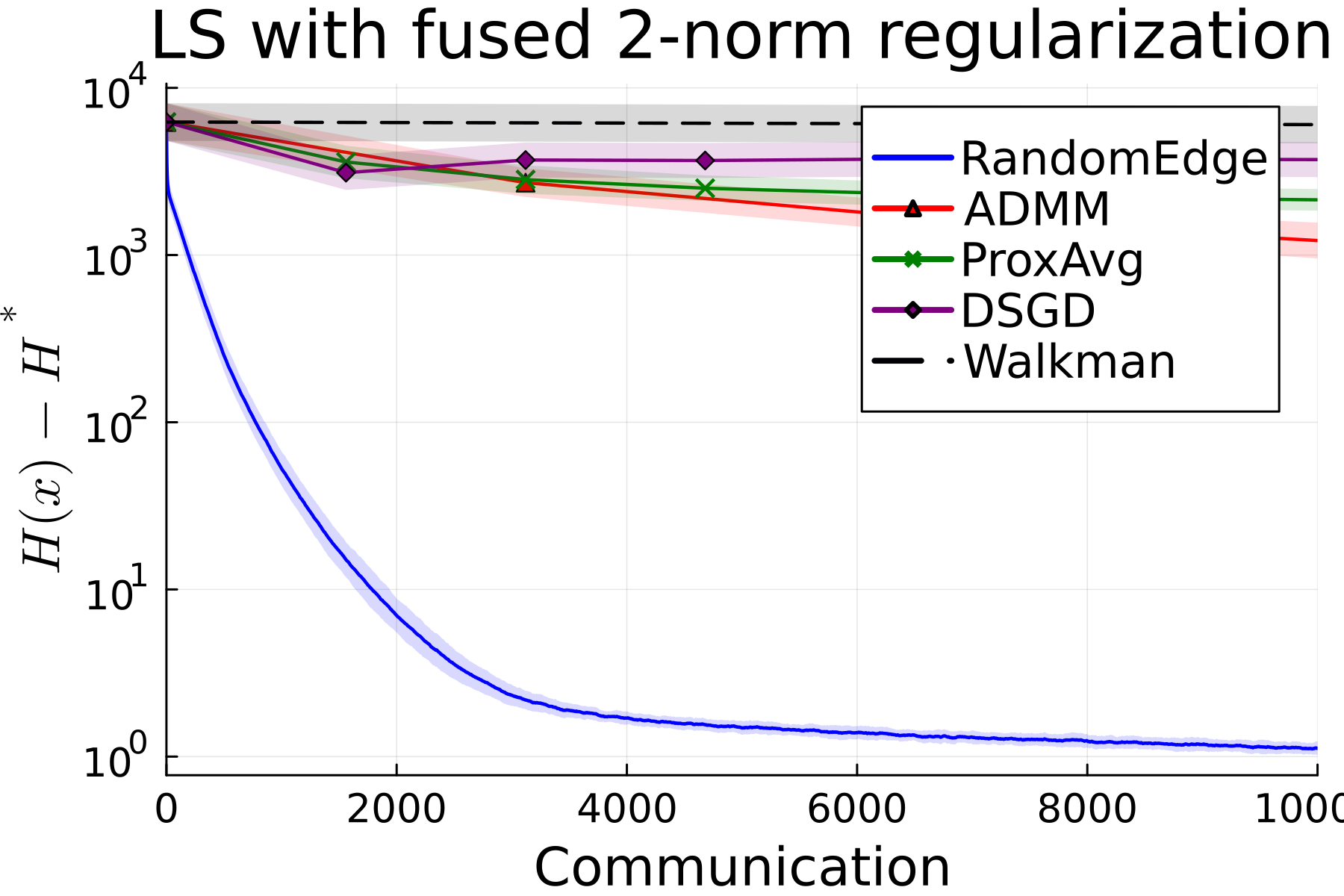}
	}
	\caption{Convergence comparison for the $\ell_2$-regularized least-squares problem~\cref{eq:2-norm-LS} across three network topologies. Error bars show $\pm 1$ standard deviation over 100 experiments.}
	\label{fig:2-norm-LS}
\end{figure*}
\begin{figure*}[t]
	\centering
	\subfloat[5 groups (10, 17, 18, 18, 12 nodes)]{\label{fig:1-norm-LS-5}
		\includegraphics[width=0.3\linewidth]{./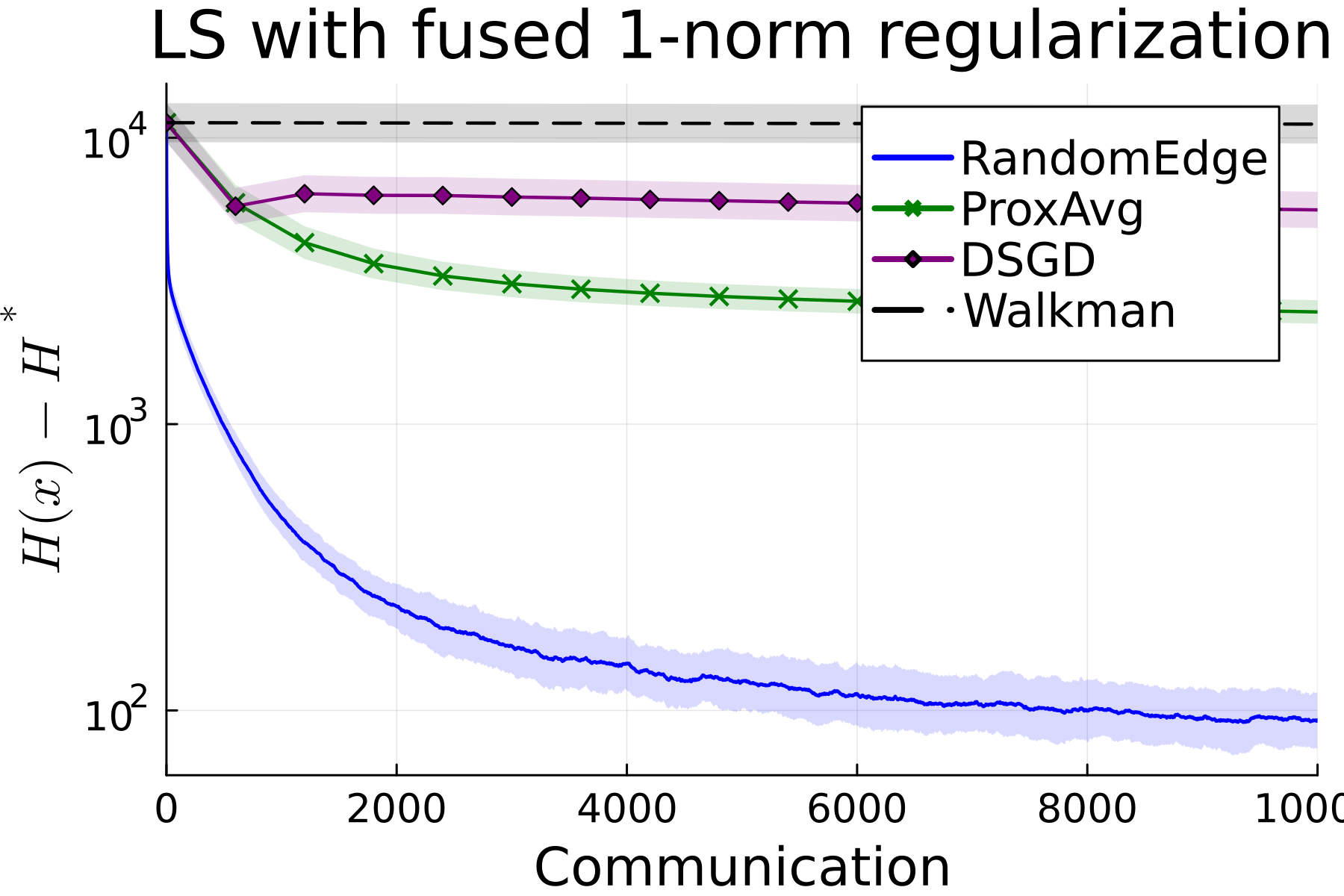}
	}
	\subfloat[1 group (20 nodes)]{\label{fig:1-norm-LS-1}
		\includegraphics[width=0.3\linewidth]{./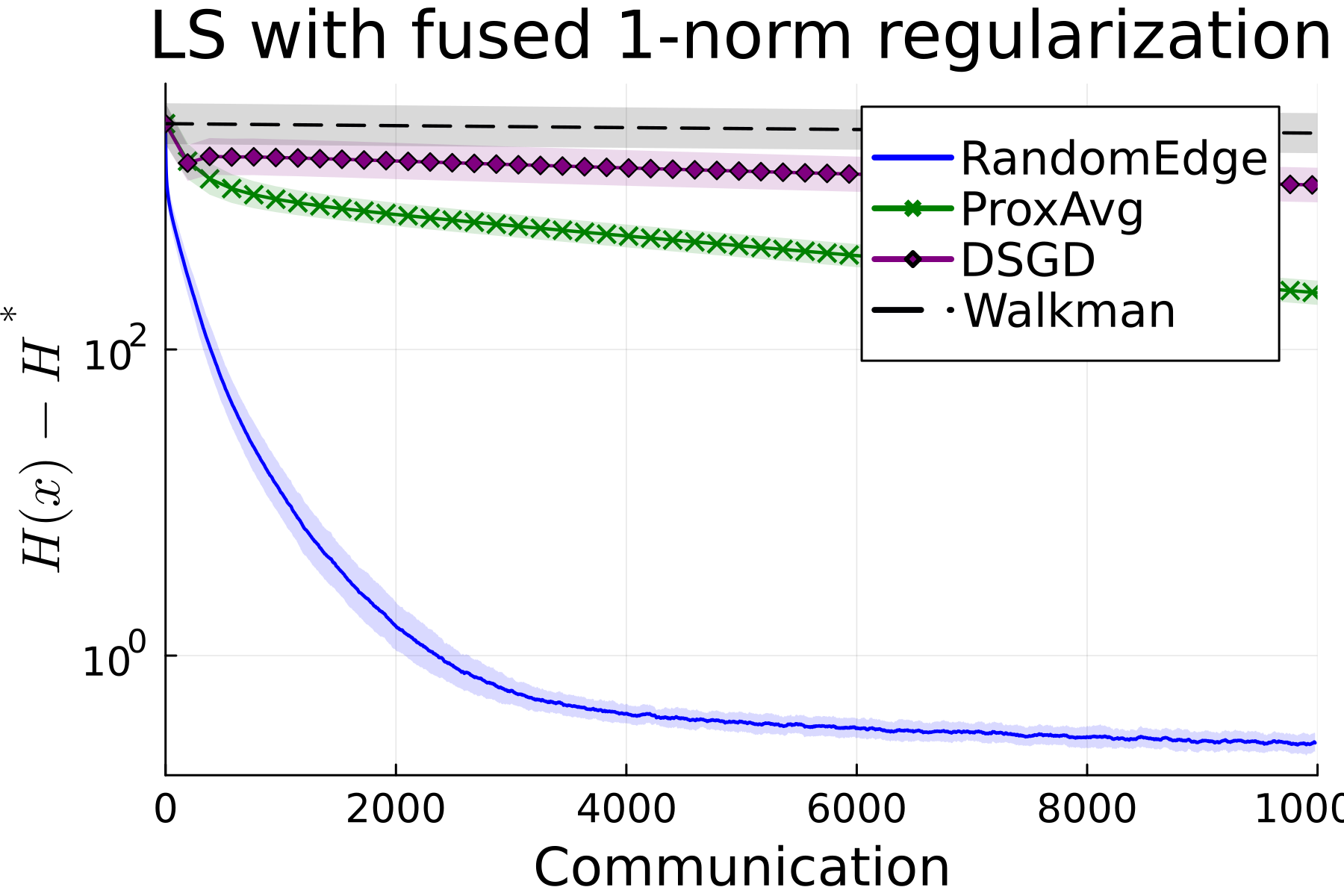}
	}
	\subfloat[1 group (40 nodes, complete graph)]{\label{fig:1-norm-LS-1-fully}
		\includegraphics[width=0.3\linewidth]{./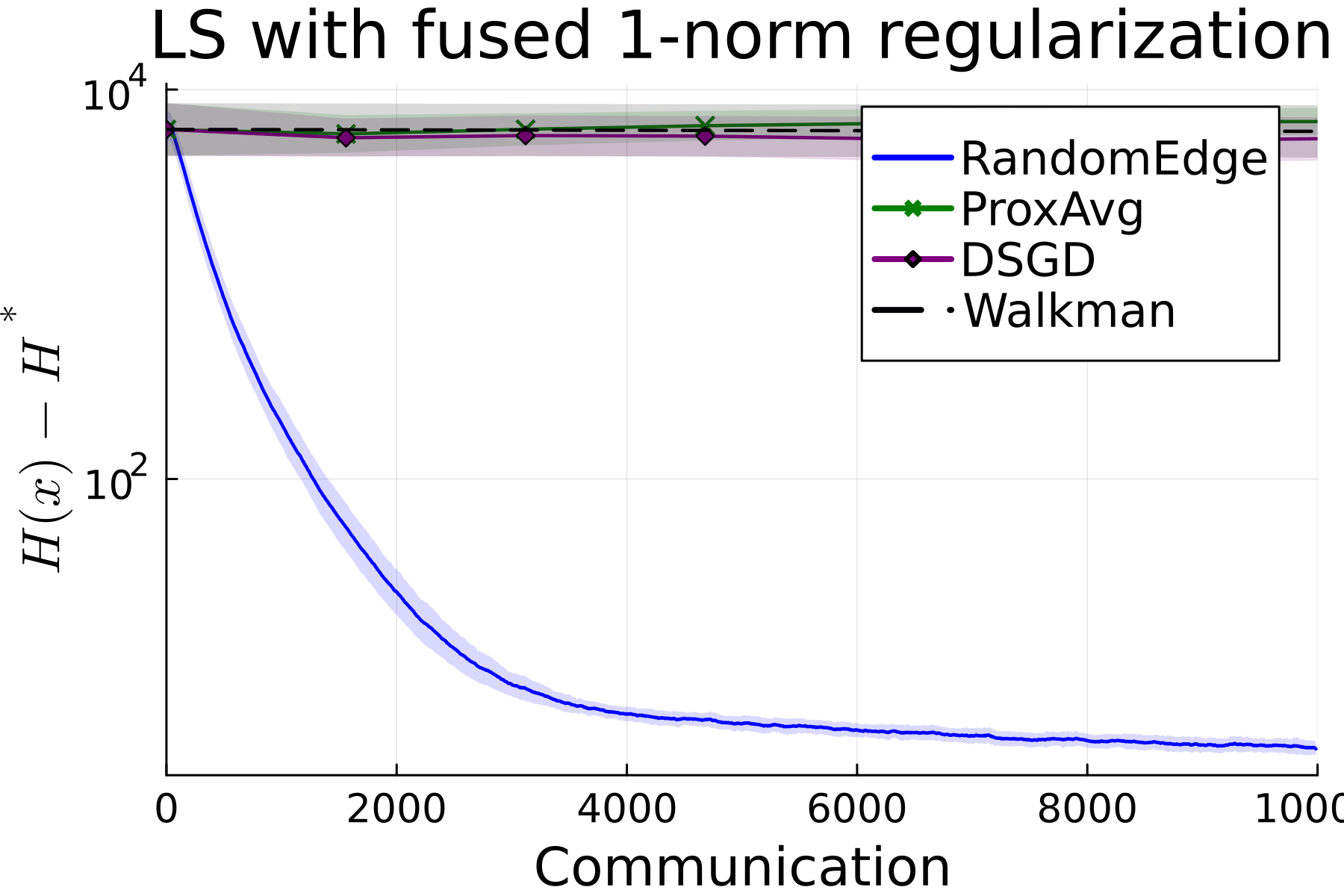}
	}
	\caption{Convergence comparison for the $\ell_1$-regularized least-squares problem~\cref{eq:1-norm-LS} across three network topologies. Error bars show $\pm 1$ standard deviation over 100 experiments.}
	\label{fig:1-norm-LS}
\end{figure*}

\subsection{Real Data: Housing Dataset Application}
\label{sec:real_data}

We demonstrate the practical effectiveness of our methods using the Sacramento housing dataset from~\citet[Section~5.2]{HLB2015}.
(Due to space limitations, we omit here the details of this dataset.)
As with the synthesized experiments, we slightly tune the stepsizes, so that \RandomEdge\ uses \( \alpha^{(t)} = \tfrac{0.003}{\sqrt{t+1}} \) and \( \beta^{(t)} = \alpha^{(t)} \abs{\cE} \); \textsc{ADMM} uses \( \rho = 10^{-4} + \sqrt{\lambda / 2} \); \textsc{ProxAvg} and \textsc{DSGD} use stepsize \( 10^{-2} \); \textsc{Walkman} uses \( \beta = 10000 \).
We fix 50000 communications and perform experiments, the results are shown in \cref{fig:housing}.

\begin{figure}[ht]
  \centering
  \includegraphics[width=0.4\linewidth]{./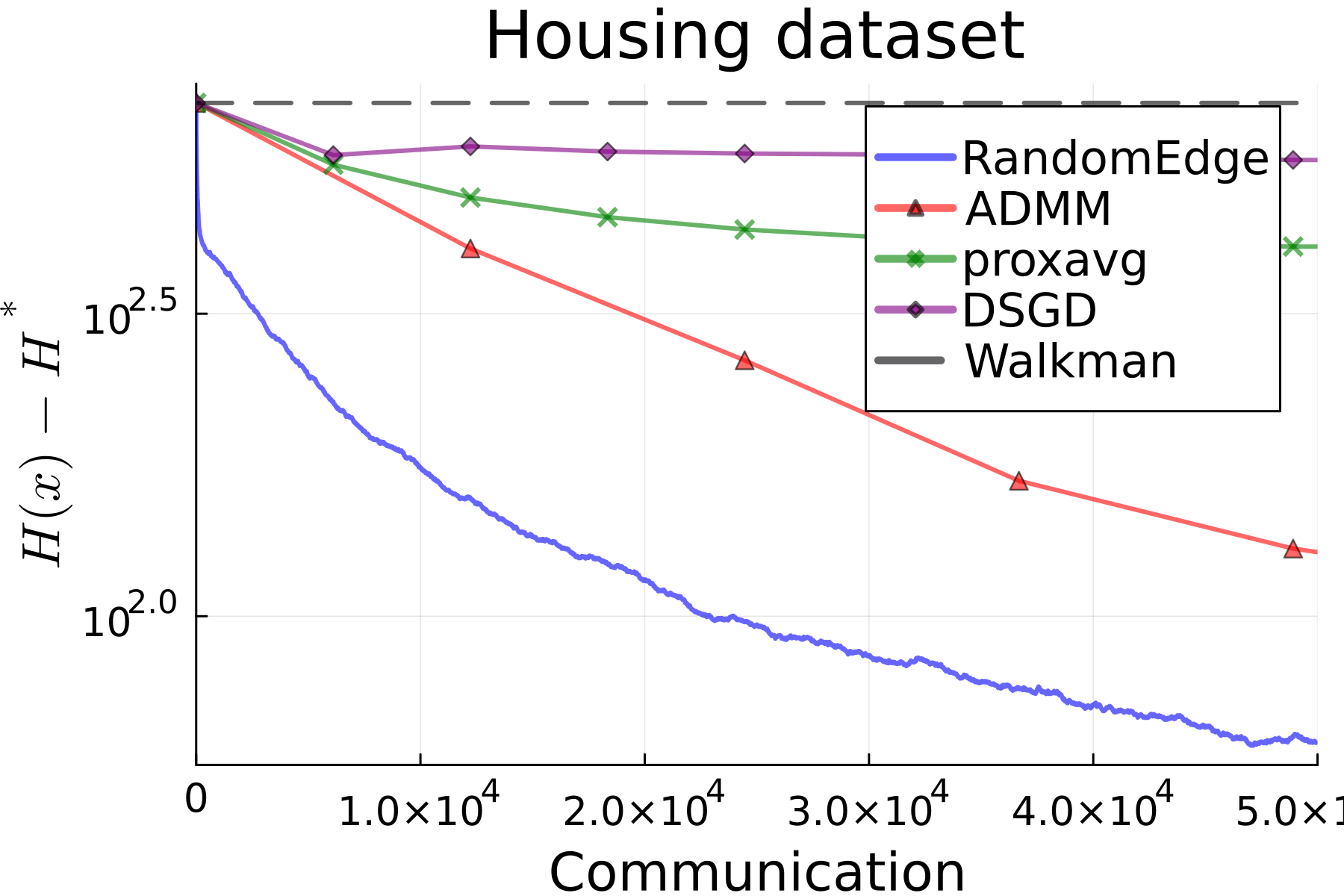}
  \caption{Results for housing dataset.}
  \label{fig:housing}
\end{figure}

\section{Conclusion and Future Work}
\label{sec:conclusion}

We address communication efficiency in decentralized multi-task learning with block-separable and partially separable objectives. \BlockProx\ and its specialization \RandomEdge\ replace $\prox_{\beta G}(z)$ with $\prox_{\beta G_j}(z)$ for a single randomly selected component $G_j$, which reduces communication from global to local coupling structure.

Our analysis establishes iteration complexity of $\tilde{O}(\varepsilon^{-2})$ under bounded subgradients or convexity and smoothnes, and $O(\varepsilon^{-1})$ under strong convexity and smoothness, which demonstrates that communication efficiency preserves convergence guarantees.

Classical approaches require $O(m)$ communications per iteration, while \RandomEdge\ requires only 2, with the reduction scaling with network size.

Experiments on synthetic and real-world datasets confirm these theoretical predictions. The algorithm applies to problem structures where ADMM is inapplicable and consensus-based algorithms alter the optimization objective.

The practical implications extend to federated learning and edge computing, where communication constraints are stringent. Structured multi-task learning with minimal communication overhead addresses real infrastructure limitations, enabling organizations to collaborate on related tasks while preserving data locality and avoiding network congestion.

Several limitations suggest future directions. The requirement that individual components $G_j$ have tractable proximal operators may not hold in general. The dependence of convergence rates on the number of components $m$ suggests potential for improvement in highly structured problems. Asynchronous variants could relax the synchronous communication assumption to accommodate realistic communication patterns. The last-iterate convergence for general convex cases could be interesting.

Our work establishes a principle for communication-efficient distributed optimization: \emph{exploit partial separability through randomized local coordination}. Rather than requiring all coupled variables to coordinate simultaneously, we identify minimal interaction sets and sample from them. This principle extends beyond the specific algorithms presented here and may apply to other optimization contexts where distributed coordination faces communication constraints.

\appendix

\section{Proof of \cref{lem:last_iteration}}\label{app:proof_last_iteration}
For \( k = 0 \) the inequality reduces trivially to \( \alpha_0\theta_0 \leq \alpha_0\theta_0 \).  For \( k \geq 1 \),
\begin{align*}
\alpha_k \theta_k
&= \tfrac{1}{k+1} \sum_{t=0}^k \alpha_t\theta_t + \sum_{h=0}^{k}  \left( \frac{1}{k+1-h} - \frac{1}{k+1} \right) \alpha_h \theta_h - \sum_{h=0}^{k-1} \frac{1}{k+1-h} \alpha_h \theta_h \\
&= \tfrac{1}{k+1} \sum_{t=0}^k \alpha_t\theta_t + \sum_{h=1}^{k} \left( \frac{1}{k+1-h} - \frac{1}{k+1} \right) \alpha_h \theta_h - \sum_{h=0}^{k-1} \frac{1}{k+1-h} \alpha_h \theta_h.
\end{align*}
The key step is recognizing that each difference $1/(k+1-h) - 1/(k+1)$ telescopes:
\begin{align*}
\frac{1}{k+1-h} - \frac{1}{k+1} &= \left(\frac{1}{k+1-h} - \frac{1}{k+2-h}\right) +  \cdots + \left(\frac{1}{k} - \frac{1}{k+1}\right) \\
&= \tsum_{t=k+1-h}^{k} \left( \frac{1}{t} - \frac{1}{t+1} \right).
\end{align*}
Substituting this telescoping identity and continuing the derivation:
\begin{align*}
\alpha_k \theta_k &= \tfrac{1}{k+1} \sum_{t=0}^k \alpha_t\theta_t + \sum_{h=1}^{k} \sum_{t=k+1-h}^{k} \left( \frac{1}{t} - \frac{1}{t+1} \right) \alpha_h \theta_h - \sum_{h=0}^{k-1} \frac{1}{k+1-h} \alpha_h \theta_h \\
&= \tfrac{1}{k+1} \sum_{t=0}^k \alpha_t\theta_t + \sum_{t=1}^{k} \sum_{h=k+1-t}^{k} \left( \frac{1}{t} - \frac{1}{t+1} \right) \alpha_h \theta_h - \sum_{t=1}^{k} \frac{1}{t+1} \alpha_{k-t} \theta_{k-t} \\
&= \tfrac{1}{k+1} \sum_{t=0}^k \alpha_t\theta_t + \sum_{t=1}^{k} \omega_t \sum_{h=k+1-t}^{k} \alpha_h \theta_h - \sum_{t=1}^{k} \frac{1}{t(t+1)} \sum_{h=k+1-t}^{k} \alpha_{k-t} \theta_{k-t} \\
&= \tfrac{1}{k+1} \sum_{t=0}^k \alpha_t\theta_t + \sum_{t=1}^{k} \omega_t \sum_{h=k+1-t}^{k} \left( \alpha_h \theta_h - \alpha_{k-t} \theta_{k-t} \right) \\
&\le \tfrac{1}{k+1} \sum_{t=0}^k \alpha_t\theta_t + \sum_{t=1}^{k} \omega_t \sum_{h=k+1-t}^{k} \alpha_h \left( \theta_h - \theta_{k-t} \right),
\end{align*}
where the second equality holds by swapping the order of summation and changing the indices; the last inequality follows from $\alpha_k$ being non-increasing and $h \ge k-t$ over the summation.

\bibliography{references}
\end{document}